%% file: main.tex
\documentclass{article}

\usepackage[preprint]{corl_2026} 

\usepackage{amsmath}
\usepackage{amssymb}
\usepackage{mathtools}
\usepackage{amsthm}

\usepackage{algorithm}
\usepackage{algpseudocode}

\usepackage{graphicx}
\usepackage{wrapfig}

\usepackage{booktabs}
\usepackage{multirow}

\usepackage{caption}      
\usepackage{subcaption}   

\input{math_commands.tex}

\input{preamble}

\input{macros}

\theoremstyle{plain}
\newtheorem{theorem}{Theorem}[section]
\newtheorem{proposition}[theorem]{Proposition}

\newtheorem{corollary}[theorem]{Corollary}
\theoremstyle{definition}

\theoremstyle{remark}

\usepackage{thm-restate}

\definecolor{darkcyan}{HTML}{0080ff} 
\definecolor{lavender}{HTML}{CC33FF} 
\definecolor{cyan_accent}{HTML}{33E6FF}
\definecolor{electric_green}{HTML}{33CC00}
\definecolor{green_}{HTML}{76B900}
\hypersetup{
    colorlinks=true,  
    linkcolor=lavender,  
    citecolor=green_,  
    urlcolor=black    
}

\usepackage{fancyhdr}

\fancypagestyle{firstpagefooter}{
  \fancyhf{} 
  \fancyfoot[L]{%
    \begin{minipage}{0.35\textwidth}
      \setlength{\parindent}{0pt}
      \rule{2.0cm}{0.4pt}\\[-0.2ex]
      {\normalfont\footnotesize Preprint under review.}
    \end{minipage}%
  }

}

\makeatletter
\newcommand{\listofappendices}{%
  \section*{Appendix Contents}%
  \@starttoc{app}%
}

\newcommand{\appsection}[1]{%
  \section{#1}%
  \addcontentsline{app}{section}{\protect\numberline{\thesection}#1}%
}
\makeatother

\title{BayesFP: Posterior Estimation for Flow-Based Policies via Feynman-Kac Sampling}

\author{
  Sreevardhan Sirigiri\\
  School of Computer Science, The University of Sydney, Australia \\
  \texttt{ssir4919@uni.sydney.edu.au} \\
  \And
  Weiming Zhi\\ 
  School of Computer Science, The University of Sydney, Australia\\
  Australian Centre For Robotics, The University of Sydney, Australia\\
  College of Connected Computing, Vanderbilt University, TN, USA.\\
  \texttt{weiming.zhi@sydney.edu.au}\\
  \And
  Fabio Ramos\\
  NVIDIA, USA \\
  School of Computer Science, The University of Sydney, Australia \\ 
  \texttt{fabio.ramos@sydney.edu.au}
}  

\begin{document}
\maketitle
\thispagestyle{firstpagefooter}


\input{Sections/abstract}
\input{Sections/introduction}
\input{Sections/background}
\input{Sections/problem_statement}
\input{Sections/methodology}
\input{Sections/results}
\input{Sections/discussion_and_ablations}
\input{Sections/limitation_and_conclusion}




\bibliography{references}  


\clearpage

\appendix
\listofappendices
\input{Appendix/related_work}
\input{Appendix/background}
\input{Appendix/flow_to_diffusion}
\input{Appendix/resampling}
\input{Appendix/algo_discription}

\input{Appendix/proofs}
\input{Appendix/experimental_deatils}
\input{Appendix/notation}

\end{document}

%% file: math_commands.tex
\usepackage{amsmath,amsfonts,bm}

\def\eqref#1{equation~\ref{#1}}

\def\1{\bm{1}}

\DeclareMathAlphabet{\mathsfit}{\encodingdefault}{\sfdefault}{m}{sl}
\SetMathAlphabet{\mathsfit}{bold}{\encodingdefault}{\sfdefault}{bx}{n}

\DeclareMathOperator*{\argmin}{arg\,min}

\usepackage[framemethod=TikZ]{mdframed}
\mdfdefinestyle{MyFrame}{%
    linecolor=black,
    outerlinewidth=0.5pt,
    roundcorner=1pt,
    innertopmargin=2pt, 
    innerbottommargin=2pt, 
    innerrightmargin=7pt,
    innerleftmargin=7pt,
    backgroundcolor=black!0!white}

\mdfdefinestyle{MyFrame2}{%
    linecolor=white,
    outerlinewidth=1pt,
    roundcorner=2pt,
    innertopmargin=10pt,
    innerbottommargin=0.5pt,
    innerrightmargin=10pt,
    innerleftmargin=10pt,
    backgroundcolor=black!3!white}

\mdfdefinestyle{MyFrameEq}{%
    linecolor=white,
    outerlinewidth=0pt,
    roundcorner=0pt,
    innertopmargin=0pt,
    innerbottommargin=0pt,
    innerrightmargin=7pt,
    innerleftmargin=7pt,
    backgroundcolor=black!3!white}

\mdfdefinestyle{FrameAdded}{%
    linecolor=black,
    outerlinewidth=0pt,
    roundcorner=0pt,
    innertopmargin=0pt,
    innerbottommargin=0pt,
    innerrightmargin=0pt,
    innerleftmargin=0pt,
    backgroundcolor=PastelGreen}

\definecolor{customwhite}{HTML}{FCFBF7}
\definecolor{customturq}{HTML}{1D9D79}
\definecolor{customorange}{HTML}{D96002} 
\definecolor{custombeige}{HTML}{d6c9b1} 

\definecolor{custompurple}{HTML}{AEADF0}
\definecolor{customwhite2}{HTML}{fbf9f4}
\definecolor{customblue}{HTML}{4D9DE0}

\definecolor{myred}{RGB}{215,48,39}
\definecolor{mygreen}{RGB}{26,152,80}
\newcommand{\maybe}{\textcolor{gray}{\checkmark\kern-1.1ex\raisebox{.7ex}{\rotatebox[origin=c]{125}{--}}}}

%% file: preamble.tex
\usepackage{silence}
\usepackage[utf8]{inputenc}
\usepackage[T1]{fontenc}
\usepackage{tabularx}
\usepackage{lipsum}
\usepackage{url}
\usepackage{placeins}
\usepackage{booktabs}
\usepackage{soul}
\usepackage{amsfonts}
\usepackage{nicefrac}
\usepackage{microtype}
\usepackage{wrapfig}
\usepackage{caption}
\usepackage{amssymb}
\usepackage{pifont}

\usepackage{subcaption}
\usepackage{float}
\usepackage{rotating}
\usepackage{etoolbox}
\usepackage{xparse}
\usepackage{grffile}

\usepackage{amsmath}
\usepackage{xspace}
\usepackage{euscript}
\usepackage{enumitem}
\usepackage{graphicx}
\usepackage{mathtools}
\usepackage{tikz}
\usepackage{tablefootnote}
\usepackage[flushleft]{threeparttable}
\usepackage{multirow}
\usepackage[title]{appendix}
\usepackage{longtable}
\usepackage{arydshln}
\usepackage{array, booktabs}
\usepackage{graphicx}
\usepackage{glossaries}
\usepackage{amsmath,amsthm}
\usepackage{dsfont}
\usepackage{balance}
\usepackage{multicol}
\usepackage{nameref}
\usepackage{pdfpages}
\usepackage{braket}
\usepackage[normalem]{ulem}
\usepackage{bbding}
\usepackage[capitalise, noabbrev, nameinlink]{cleveref}   
\usepackage{xfrac}
\usepackage[usestackEOL]{stackengine}
\usepackage{indentfirst}
\usepackage{csquotes}
\usepackage{pgf}
\usepackage{varwidth}
\usepackage{verbatimbox}
\usepackage{verbatim}
\usepackage{bm}
\usepackage{anyfontsize}
\usepackage{tcolorbox}
\usepackage{nccmath}
\usepackage{makecell}
\usepackage{colortbl}

\definecolor{plotgreen}{RGB}{72, 164, 135}   
\definecolor{plotorange}{RGB}{222, 111, 68}  
\definecolor{plotpurple}{RGB}{111, 130, 173}

\usepackage{empheq}
\definecolor{myblue}{rgb}{.8, .8, 1}

\definecolor{pastelblue}{RGB}{76,113,175}
\definecolor{pastelgreen}{RGB}{144,238,144}
\definecolor{pastelred}{RGB}{196,78,82}
\definecolor{pastelgrey}{RGB}{230,230,230}
\definecolor{pastelbeige}{RGB}{243,236,221}
\definecolor{pastelpurple}{RGB}{154,139,192}

\definecolor{salmon}{RGB}{250, 128, 114}
\definecolor{darkgreen}{rgb}{0,0.6,0}
\definecolor{darkred}{rgb}{0.5,0,0}
\definecolor{verylightgreen}{HTML}{F6FFF9}
\definecolor{verylightred}{HTML}{FFF4F3}
\definecolor{verylightgray}{HTML}{F4F6F6}
\definecolor{babyblueeyes}{rgb}{0.63, 0.79, 0.95}
\definecolor{lightpink}{rgb}{1.00, 0.714, 0.757}

\definecolor{customwhite}{HTML}{FCFBF7}
\definecolor{customturq}{HTML}{1D9D79}
\definecolor{customorange}{HTML}{D96002} 
\definecolor{custombeige}{HTML}{d6c9b1} 

\definecolor{custompurple}{HTML}{AEADF0}
\definecolor{highlightturq}{HTML}{92CEBD}
\definecolor{highlightpurple}{HTML}{AEADF0}
\definecolor{highlightorange}{HTML}{ECAF80}
\definecolor{highlightgray}{HTML}{E6E6E6}
\definecolor{customblue}{HTML}{4D9DE0} 

\usepackage{soul}

\definecolor{some_color}{HTML}{86CFDA}
\colorlet{PastelGreen}{some_color!50!white}
\sethlcolor{PastelGreen}

\usepackage{tikzpeople}
\usetikzlibrary{shapes,decorations,arrows,calc,arrows.meta,fit,positioning}

\tikzset{
    -Latex,auto,node distance =1 cm and 1 cm,semithick,
    state/.style ={ellipse, draw, minimum width = 0.7 cm},
    point/.style = {circle, draw, inner sep=0.04cm,fill,node contents={}},
    bidirected/.style={Latex-Latex,dashed},
    el/.style = {inner sep=2pt, align=left, sloped}
}

\makeatletter
\def\thmt@refnamewithcomma #1#2#3,#4,#5\@nil{%
	\@xa\def\csname\thmt@envname #1utorefname\endcsname{#3}%
	\ifcsname #2refname\endcsname
	\csname #2refname\expandafter\endcsname\expandafter{\thmt@envname}{#3}{#4}%
	\fi}
\makeatother
\Crefname{conjecture}{Conjecture}{Conjectures}
\Crefname{definition}{Definition}{Definitions}
\Crefname{observation}{Observation}{Observations}
\Crefname{assumption}{Assumption}{Assumptions}
\Crefname{axiom}{Axiom}{Axioms}
\Crefname{case}{Case}{Cases}
\Crefname{claim}{Claim}{Claims}
\Crefname{conclusion}{Conclusion}{Conclusions}
\Crefname{condition}{Condition}{Conditions}
\Crefname{criterion}{Criterion}{Criteria}
\Crefname{exercise}{Exercise}{Exercises}
\Crefname{example}{Example}{Examples}
\Crefname{notation}{Notation}{Notations}
\Crefname{problem}{Problem}{Problems}
\Crefname{property}{Property}{Properties}
\Crefname{remark}{Remark}{Remarks}
\Crefname{solution}{Solution}{Solutions}
\Crefname{summary}{Summary}{Summaries}
\Crefname{motivation}{Motivation}{Motivations}
\Crefname{query}{Query}{Queries}

\newcommand*\dbar[1]{\overline{\overline{\lower0.2ex\hbox{$#1$}}}}

\DeclareFontFamily{U}{BOONDOX-calo}{\skewchar\font=45 }
\DeclareFontShape{U}{BOONDOX-calo}{m}{n}{
  <-> s*[1.05] BOONDOX-r-calo}{}
\DeclareFontShape{U}{BOONDOX-calo}{b}{n}{
  <-> s*[1.05] BOONDOX-b-calo}{}
\DeclareMathAlphabet{\mathcalb}{U}{BOONDOX-calo}{m}{n}
\SetMathAlphabet{\mathcalb}{bold}{U}{BOONDOX-calo}{b}{n}
\DeclareMathAlphabet{\mathbcalb}{U}{BOONDOX-calo}{b}{n}

\usepackage{cancel}

\renewcommand{\paragraph}[1]{{\noindent \textbf{#1.}}}

%% file: macros.tex
\let\originalleft\left
\let\originalright\right
\renewcommand{\left}{\mathopen{}\mathclose\bgroup\originalleft}
\renewcommand{\right}{\aftergroup\egroup\originalright}

\global\long\def\norm#1{\left\lVert #1\right\rVert }
\global\long\def\inner#1#2{\left\langle \vphantom{1^2} #1, #2\right\rangle}

\newacronym{SNIS}{\textsc{snis}}
{Self-Normalized Importance Sampling}
\newacronym{SMC}{\textsc{smc}}
{Sequential Monte Carlo}
\newacronym{CFG}{\textsc{cfg}}
{classifier-free guidance}
\newacronym{PDE}{\textsc{pde}}
{Partial Differential Equations}
\newacronym{sde}{\textsc{sde}}
{Stochastic Differential Equations}
\newacronym{ode}{\textsc{ode}}
{Ordinary Differential Equations}
\makeglossaries 
\glsdisablehyper

\definecolor{antiquefuchsia}{rgb}{0.57, 0.36, 0.51}
\definecolor{amethyst}{rgb}{0.6, 0.4, 0.8}

\newcommand{\deriv}[2]{\frac{\partial #1}{\partial #2}}

\newcommand{\mean}{\mathbb{E}}
\newcommand{\var}{{\rm I\kern-.3em D}}
\newtheorem*{theorem*}{Theorem}
\newtheorem*{lemma*}{Lemma}
\newtheorem*{proposition*}{Proposition}
\newtheorem*{corollary*}{Corollary}

\newtheorem*{example*}{Example}
\DeclareMathSymbol{\shortminus}{\mathbin}{AMSa}{"39}

\crefname{equation}{Eq.}{Eqs.}
\crefname{section}{Sec.}{Secs.}
\crefname{corollary}{Cor.}{Cors.}
\crefname{proposition}{Prop.}{Props.}
\crefname{appendix}{App.}{Apps.}
\crefname{theorem}{Thm.}{Thms.}
\crefname{figure}{Fig.}{Figs.}
\crefname{tabular}{Tab.}{Tabs.}

\newcommand{\ifbm}[1]
{#1}

\tcbuselibrary{theorems}
\newtcbtheorem[number within=section]{myprop}{Proposition}%
{colback=blue!5,colframe=blue!35!black,fonttitle=\bfseries}{th}

\definecolor{mybabyblue}{HTML}{89CFF0} 
\colorlet{babybluelight}{mybabyblue!25!white} 

\tcbset{mybox/.style={colback=lavender, width =\textwidth, boxrule=0.0pt, top=5pt,bottom=5pt,left=2pt, right=2pt},
  before upper=\setlength{\parindent}
  {0em}\everypar{{\setbox0\lastbox}\everypar{}}
}
\newtcolorbox{mybox}[1][]{mybox,#1}

\tcbset{halfmybox/.style={colback=olive!3, width =0.49\textwidth, boxrule=0.0pt, top=5pt,bottom=5pt,left=2pt, right=2pt},
  before upper=\setlength{\parindent}
  {0em}\everypar{{\setbox0\lastbox}\everypar{}}
}
\newtcolorbox{halfmybox}[1][]{halfmybox,#1}

\definecolor{pastel_purple}{HTML}{756FB3}
\definecolor{pastel_green}{HTML}{1D9D79}

\colorlet{PastelPurpleLight}{pastel_purple!15!white}
\colorlet{PastelGreenLight}{pastel_green!15!white}
\sethlcolor{PastelPurpleLight}
\sethlcolor{PastelGreenLight}

\makeatletter
\let\save@mathaccent\mathaccent
\newcommand*\if@single[3]{%
  \setbox0\hbox{${\mathaccent"0362{#1}}^H$}%
  \setbox2\hbox{${\mathaccent"0362{\kern0pt#1}}^H$}%
  \ifdim\ht0=\ht2 #3\else #2\fi
  }
\newcommand*\rel@kern[1]{\kern#1\dimexpr\macc@kerna}
\newcommand*\widebar[1]{\@ifnextchar^{{\wide@bar{#1}{0}}}{\wide@bar{#1}{1}}}
\newcommand*\wide@bar[2]{\if@single{#1}{\wide@bar@{#1}{#2}{1}}{\wide@bar@{#1}{#2}{2}}}
\newcommand*\wide@bar@[3]{%
  \begingroup
  \def\mathaccent##1##2{%
    \let\mathaccent\save@mathaccent
    \if#32 \let\macc@nucleus\first@char \fi
    \setbox\z@\hbox{$\macc@style{\macc@nucleus}_{}$}%
    \setbox\tw@\hbox{$\macc@style{\macc@nucleus}{}_{}$}%
    \dimen@\wd\tw@
    \advance\dimen@-\wd\z@
    \divide\dimen@ 3
    \@tempdima\wd\tw@
    \advance\@tempdima-\scriptspace
    \divide\@tempdima 10
    \advance\dimen@-\@tempdima
    \ifdim\dimen@>\z@ \dimen@0pt\fi
    \rel@kern{0.6}\kern-\dimen@
    \if#31
      \overline{\rel@kern{-0.6}\kern\dimen@\macc@nucleus\rel@kern{0.4}\kern\dimen@}%
      \advance\dimen@0.4\dimexpr\macc@kerna
      \let\final@kern#2%
      \ifdim\dimen@<\z@ \let\final@kern1\fi
      \if\final@kern1 \kern-\dimen@\fi
    \else
      \overline{\rel@kern{-0.6}\kern\dimen@#1}%
    \fi
  }%
  \macc@depth\@ne
  \let\math@bgroup\@empty \let\math@egroup\macc@set@skewchar
  \mathsurround\z@ \frozen@everymath{\mathgroup\macc@group\relax}%
  \macc@set@skewchar\relax
  \let\mathaccentV\macc@nested@a
  \if#31
    \macc@nested@a\relax111{#1}%
  \else
    \def\gobble@till@marker##1\endmarker{}%
    \futurelet\first@char\gobble@till@marker#1\endmarker
    \ifcat\noexpand\first@char A\else
      \def\first@char{}%
    \fi
    \macc@nested@a\relax111{\first@char}%
  \fi
  \endgroup
}
\makeatother

\definecolor{customwhite}{HTML}{FCFBF7}
\definecolor{customturq}{HTML}{1D9D79}
\definecolor{customorange}{HTML}{D96002} 
\definecolor{custombeige}{HTML}{d6c9b1} 
\definecolor{customblue}{HTML}{4D9DE0}

\newcommand{\cut}[1]{}

%% file: Sections/abstract.tex

\begin{abstract}
Robots must generate trajectories that remain faithful to learned expert behavior while satisfying safety constraints and task-specific objectives specified only at inference time. We formulate constrained trajectory generation for pretrained diffusion and flow-matching policies as Bayesian posterior sampling, with the learned demonstration distribution as a prior and an inference-time, cost-derived likelihood tilting it toward feasible, optimal trajectories. To sample from this posterior without any retraining of the base policy, we leverage the Feynman--Kac corrector framework, originally formulated for diffusion models, and extend it to deterministic flow-matching policies. 
The result is a unified, inference-time, retraining-free sampler for diffusion and flow policies. We validate the approach on pretrained Diffusion Policy, GR00T-N1.6, and $\pi_{0.5}$ checkpoints across simulated and real-world manipulation tasks, including planning around non-convex obstacles introduced at inference time, and show improvements over the base \(\pi_{0.5}\) on zero-shot tasks.
\end{abstract}

\keywords{Feynman-Kac PDEs, Constrained Generative Policies, Robot Trajectory Generation}

%% file: Sections/introduction.tex
\section{Introduction}


Robots operating in the real world must generate motions that are both 
expressive enough to capture multimodal task distributions and reactive 
enough for closed-loop control. Recent advances in generative modeling 
have made imitation learning a compelling paradigm for synthesizing such 
motions, with diffusion-based 
policies~\cite{janner2022diffuser, chi2024diffusionpolicy} and flow-matching 
policies~\cite{lipman2023flow,fmPolicy2,jiang2025streaming} demonstrating 
strong performance across manipulation and locomotion tasks. By learning 
to denoise or transport noise into expert demonstrations, these models 
faithfully capture the rich, multimodal structure of human or expert 
behavior.

However, deployment in the real world demands more than imitation 
fidelity. Robots must respect safety constraints, feasibility limits, 
and task-specific objectives that are often unknown at training time and 
may change between deployments. Joint limits, obstacle avoidance, 
self-collision, workspace boundaries, energy consumption costs and goal-reaching costs are 
typically specified at inference, and a pretrained policy must be steered 
to satisfy them without sacrificing the behavioral richness it learned 
from data. Visuomotor policies dominate because directly conditioning on 
point clouds tends to degrade the control frame rate; however, depth 
cameras are commonly available in robotics setups, making environment 
geometry from depth sensing a natural fit for specifying such constraints 
at inference. The central question we address is therefore: \emph{given a 
pretrained diffusion/flow-matching policy or VLAs with flow-based heads, how can we sample 
trajectories that respect a constrained optimization problem specified 
only at inference time, while remaining faithful to the learned 
demonstration distribution?}


Existing approaches to this problem fall broadly into two categories. 
\emph{Trajectory optimization and post-hoc filtering} methods rectify 
unconstrained rollouts using gradient-based planners or control-theoretic 
safety filters~\cite{CHOMP,trajopt,cbf_intV}; while effective in some 
settings, decoupling generation from correction tends to produce 
out-of-distribution trajectories, suffer from local minima, and incur 
high computational overhead. \emph{Generative guidance} methods, 
including classifier-based 
guidance~\cite{dhariwal2021diffusion,xiao2023safediffuser}, 
projection~\cite{DPwConstarints25}, reflected dynamics~\cite{lou2023reflected, 
liu2023mirror}, and joint sampling with auxiliary 
modules~\cite{jung2025joint}, modify the sampling process directly. 
While these are an improvement over post-hoc approaches, they all share 
a fundamental limitation: they are heuristic modifications to the 
sampling dynamics. None of them samples from a well-defined posterior 
distribution that has a principled relationship to both the data prior 
and the constraint.


In this work, we take a different perspective. We pose constrained 
trajectory generation as \emph{posterior sampling}: given a pretrained 
diffusion or flow policy whose marginal $p_{\text{data}}(x)$\footnote{
We abuse notation: $p_{\text{data}}(x)$ stands for 
$p_{\text{data}}(x \mid o)$, where $o$ is the current sensor observation 
(typically RGB cameras). This conditioning is always present and we suppress it 
throughout for brevity.} represents 
the demonstration distribution (over control inputs; such as action deltas 
or joint positions), and given a cost function $\mathcal{J}(x)$ 
encoding the constrained optimization problem at inference time, we 
target to sample from the posterior
\begin{equation}
\label{eq:main_posterior}
p(x \mid O = 1) \propto p_{\text{data}}(x)\,
\exp(\beta\, \mathcal{J}(x)),
\end{equation}

\begin{wrapfigure}{r}{0.42\textwidth}
  \vspace{-1.5em}
  \centering
  \includegraphics[width=\linewidth]{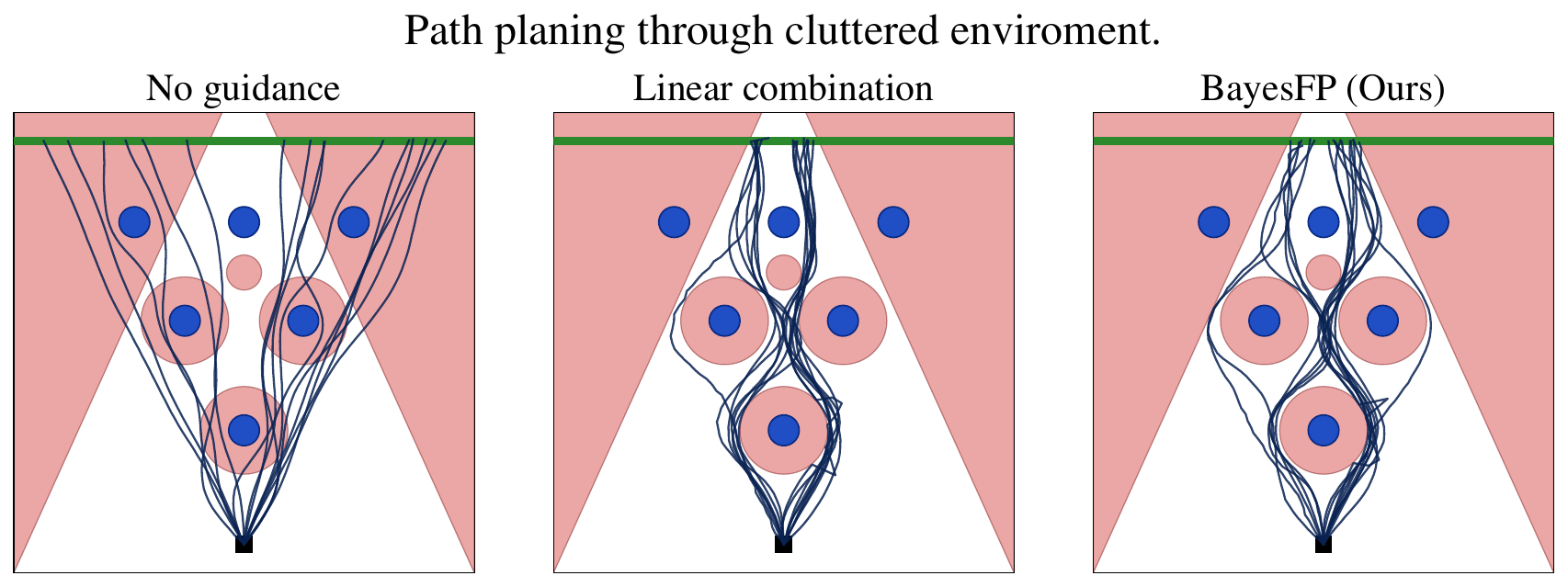}\\[0.5em]
  \includegraphics[width=\linewidth]{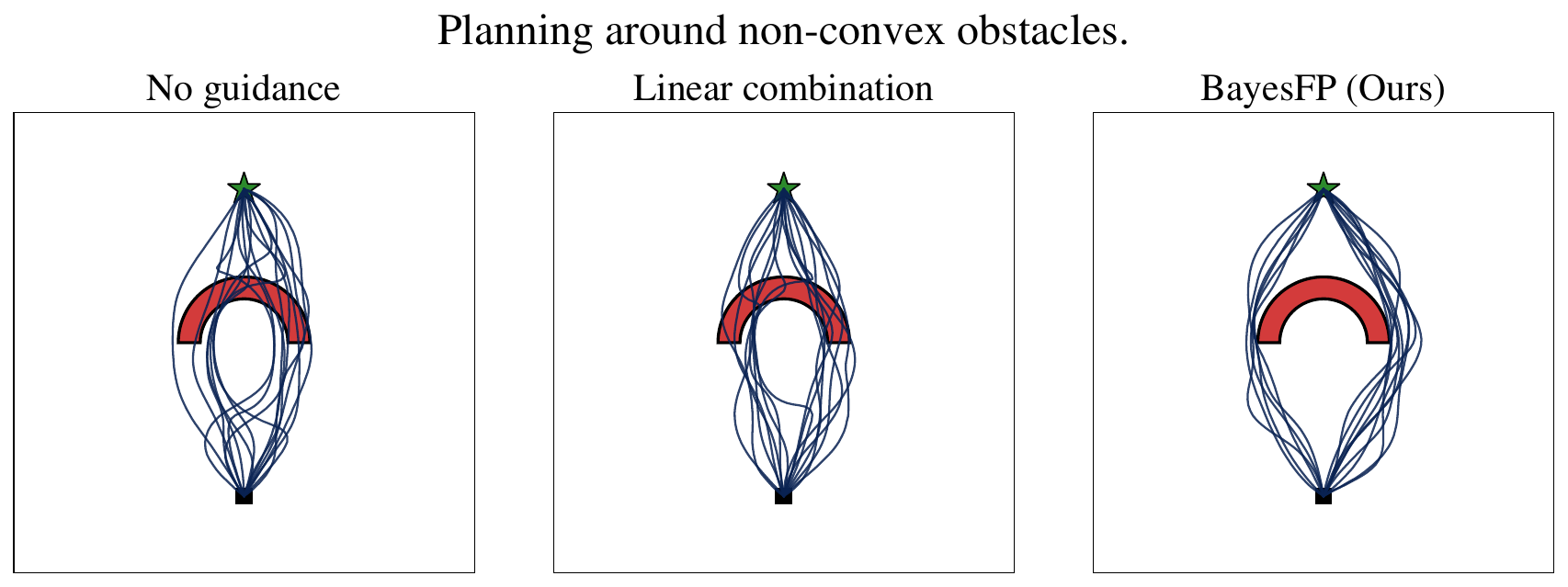}
  \caption{%
    Toy 2D environments with obstacle constraints. Top: the diffusion model is trained on demonstrations that avoid the blue circular regions, while the red obstacle constraints are introduced only at inference time. We compare no guidance, a naive linear-combination baseline that replaces the learned score $\nabla\log p_t(x)$ by $\nabla\log p_t(x) + \lambda_t \nabla \mathcal J(x)$, and our BayesFP (Bayes Flow-based Policies). Bottom: the model is trained to avoid a small square but is evaluated with a non-convex obstacle at inference time. BayesFP handles these constraints by sampling from the cost-tilted posterior in \cref{eq:main_posterior} via the corresponding Feynman--Kac weighted dynamics, rather than applying a post-hoc correction or an unprincipled score--cost-gradient linear combination.%
  }
  \label{fig:toy-2d}
  \vspace{-1.5em}
\end{wrapfigure}

where ${O}$ is a binary optimality indicator and $\beta < 0$ 
controls the strength of the cost-induced tilt. This is precisely the 
distribution one obtains by treating the pretrained policy as a Bayesian 
prior over expert trajectories and the cost as a likelihood on 
optimality. Sampling from this posterior favors trajectories that 
already lie under the learned demonstration distribution while 
exponentially preferring those that better satisfy the constrained 
optimization problem. Crucially, this is not heuristic guidance: it is 
the unique distribution consistent with the Bayesian interpretation of 
constraint-aware imitation.

\textbf{The contributions of this paper are as follows:}
    (1) We formulate constrained trajectory generation from a 
    pretrained generative policy as Bayesian inference, with 
    the demonstration distribution as prior and a cost-derived 
    likelihood encoding the inference-time constrained optimization 
    problem---offering a principled alternative to heuristic guidance 
    and post-hoc projection.
    (2) We instantiate this posterior as a Feynman-Kac (FK) weighted 
    SDE (a special case of~\citet{skreta2025fkc}) whose reweighting 
    term is derived from the Fokker-Planck PDE of the target, so that 
    the weighted particle population is an asymptotically unbiased 
    estimator of the posterior even when the drift correction is 
    imperfect. The FK weights are not heuristic 
    corrections but principled terms derived from the Fokker-Planck PDE.
    We further extend this construction (originally 
    formulated only for stochastic diffusion processes) to 
    deterministic flow-matching policies by recasting the flow ODE as 
    a marginal-preserving SDE~\cite{singh2024stochastic} with the 
    score imputed in closed form from the learned velocity field, 
    yielding a unified inference-time sampler for both model classes.
    (3) We provide a theoretical guarantee that, with sufficiently 
    large penalty weights and inverse temperature, the resulting 
    posterior concentrates on trajectories that are arbitrarily close 
    to feasible and arbitrarily close to optimal, removing the need 
    for an external safety filter or backup planner.
    (4) We empirically demonstrate, across a range of simulated and 
    real-world robotic manipulation tasks, that our sampler produces 
    trajectories that satisfy inference-time constraints. 

\textbf{Paper Organization:} The remainder of the numbered sections constitutes the main paper, where we focus on the core problem formulation, methodology, and empirical evaluation. The alphabetically labeled sections are deferred to the appendix, which contains supplementary material such as proofs, implementation details, additional experimental information, and a detailed notation guide.

%% file: Sections/background.tex
\section{Background \& Preliminaries}
\subsection{Diffusion Policy} \label{sec:back_diff_policy}
In imitation learning, diffusion models \cite{chi2024diffusionpolicy} can be used to represent the policy as a 
generative model over expert demonstrations, formulated via the simulation of the 
Stochastic Differential Equation (SDE) corresponding to the reverse-time process.
In particular, during training, one gradually corrupts samples from the expert 
demonstration distribution $p_{\text{data}}(x)$ by simulating the following 
noising SDE:
\begin{align}
    dx_\tau = f_\tau(x_\tau)d\tau + \sigma_\tau d\widebar{W}_\tau\,, 
    ~~ x_{\tau = 0} \sim p_{\text{data}}(x)\,,
    ~~ \tau \in [0,1],
    \label{eq:noising}
\end{align}
where $f_\tau(x_\tau)$ is usually some linear drift function 
$f_\tau(x_\tau) = \alpha_\tau x_\tau$, $\sigma_\tau$ defines the scale of 
injected noise through time, and $d\widebar{W}_\tau$ is the standard Wiener 
process. The drift $f_\tau$ and the diffusion coefficient $\sigma_\tau$ are 
chosen so that the final density is close to the standard normal distribution 
$p_{\tau=1} \approx \mathcal{N}(0, I_d)$. At inference time, the learned 
reverse-time process is simulated conditioned on the current observation to 
produce a trajectory for the robot to execute.

The trajectory generation process can then be defined as the family of denoising 
SDEs in the opposite time direction ($t = 1-\tau$),
\begin{align}
    dx_t =~& \left(-f_t(x_t) + \sigma_t^2\nabla\log p_t(x_t)\right)dt 
    + \sigma_t dW_t\,,
    ~~ t \in [0,1], 
    \label{eq:denoising}
\end{align}
where $p_{t} = p_{1-\tau}$ is the density of the marginals induced by the 
noising process in \cref{eq:noising}; hence, the process starts with 
$x_0 \sim \mathcal{N}(x \mid 0, I)$ and is simulated forward in time to 
produce a denoised trajectory $x_{t=1}$ conditioned on the current robot 
observation. By training a model of the score functions $\nabla \log p_t(\cdot)$, 
one can generate new samples from $p_{\text{data}}(x)$ using \cref{eq:denoising} \citep{song2021scorebased}.

\subsection{(Rectified) Flow Matching Policy}
\label{sec:rect_flow}

Once again, let $x \sim p_{\text{data}}(x) \in \mathbb{R}^d$ denote expert demonstrations 
(e.g., end-effector trajectories or joint-space paths) drawn from the 
distribution we wish to learn and sample from. Let $x \sim p_{\text{src}}(x) 
\in \mathbb{R}^d$ be a tractable source distribution, typically chosen as a 
standard Gaussian. The key idea behind flow-based models in the context of 
imitation learning is to learn a deterministic mapping that iteratively 
transforms samples from $p_{\text{src}}$ into demonstrations consistent with 
$p_{\text{data}}$, conditioned on the current robot observation \cite{braun2024riemannianflowmatchingpolicy}.

Let $\nu(x_{\text{data}}, x_{\text{src}})$ be a coupling distribution for 
demonstrations and source samples such that 
$p_{\text{data}}(x) = \int \nu(x_{\text{data}}, x_{\text{src}})\,
dx_{\text{src}}$ and $p_{\text{src}}(x) = \int \nu(x_{\text{data}}, 
x_{\text{src}})\,dx_{\text{data}}$. A simple choice is the independent 
coupling $\nu(x_{\text{data}}, x_{\text{src}}) \equiv 
p_{\text{data}}(x_{\text{data}})p_{\text{src}}(x_{\text{src}})$.

To construct a rectified flow, one first defines a time-differentiable 
interpolation $x_t \equiv h(x_{\text{data}}, x_{\text{src}}, t)$, $t \in 
[0, 1]$, between the two endpoints. The default interpolation 
\citep{liu2022flow} is: \(x_t = (1-t)\,x_{\text{data}} + t\, x_{\text{src}}, \quad t \in [0, 1]\,,\)
so that $x_{t=0} = x_{\text{data}}$ corresponds to an expert demonstration 
and $x_{t=1} = x_{\text{src}} \sim \mathcal{N}(0, I_d)$ corresponds to pure 
noise. Rectified flow then learns a vector field $v_t(x_t)$ by minimizing:
\begin{equation}
\label{eq:rectflow_obj}
v_t(x) = \argmin_{v'}\, \mathbb{E}_{(x_{\text{data}},\, x_{\text{src}})
\sim \nu} \left[ \int_0^1 \left\lVert \frac{dx_t}{dt} - v_t'(x_t) 
\right\rVert^2 dt \right].
\end{equation}
The solution is $v_t(x) \equiv \mathbb{E}[x_{\text{src}} - x_{\text{data}} 
\mid x, t]$, referred to as 1-Rectified flow. Since $v_t(x)$ is not 
available in closed form in general, $v$ is parameterized with parameters 
$\theta$ and \cref{eq:rectflow_obj} is optimized with respect to $\theta$; 
we drop this dependence in notation henceforth and assume a learned model 
$v_t(x)$ to be given.

At inference time, a demonstration trajectory for the robot to execute is 
produced by drawing $x_{t=1} \sim \mathcal{N}(0, I_d)$ and simulating the 
flow \textit{backward} in $t$, i.e., from $t=1$ to $t=0$, via: \(dx = -v_t(x)\,dt\,.\)

\subsection{Feynman-Kac Correctors}
\label{sec:fkc}

While \cref{eq:denoising} describes the standard denoising process that 
generates trajectories distributed according to $p_{\text{data}}(x)$, in 
practice we may wish to steer the generation process toward a 
\textit{modified} target distribution at inference time — for instance, 
to compose multiple pretrained policies or to bias generation toward 
high-reward regions of the trajectory space. The Feynman-Kac Corrector 
(FKC) framework \citep{skreta2025fkc} provides a principled mechanism for 
doing so by augmenting the denoising SDE with additional reweighting terms 
derived from the Feynman-Kac formula.

The key insight is that the time-evolution of a density under an SDE can be 
decomposed into three distinct mechanisms, each governed by a corresponding 
PDE: a \textit{continuity equation} capturing deterministic transport, a 
\textit{diffusion equation} capturing stochastic spreading, and a 
\textit{reweighting equation} capturing importance reweighting of samples
(see~\cref{app:add_background}). 
Together, these yield the \emph{Feynman-Kac PDE},
\begin{equation}
    \deriv{p_t^{\textsc{fk}}(x)}{t} =
    -\inner{\nabla}{p_t^{\textsc{fk}}(x)v_t(x)} 
    + \frac{\sigma_t^2}{2}\Delta p_t^{\textsc{fk}}(x) \nonumber\\
    + \bar{g}_t(x)p_t^{\textsc{fk}}(x)\,,
    \label{eq:fk_pde}
\end{equation}
which governs the evolution of some \textit{weighted} density  
$p_t^{\textsc{fk}}(x)$ (in our case a modified target distribution). 
To sample from $p_t^{\textsc{fk}}$, one simulates 
the following weighted SDE,
\begin{align}
    dx_t = v_t(x_t)dt + \sigma_t dW_t\,,\quad 
    dw_t = \bar{g}_t(x_t)\,dt\,,
\label{eq:weighted_sde}
\end{align}
where $w_t$ are log-weights carried by each simulated trajectory, and 
$\bar{g}_t(x) = g_t(x) - \int g_t(x) p_t^{\textsc{fk}}(x)\,dx$ is a 
centered reweighting function that ensures normalization is preserved,
and the evolution of \(dx_t\) is given by the usual diffusion SDE. 
The reweighting term $g_t(x)$ encodes the discrepancy between the 
SDE being simulated and the true dynamics of the target distribution 
$p_t^{\textsc{fk}}$; by accumulating these corrections into weights $w_t$, 
the weighted particle population approximates the desired target.

In practice, the weighted samples from \cref{eq:weighted_sde} can be used 
to estimate expectations under $p_t^{\textsc{fk}}$ via Self-Normalized 
Importance Sampling (SNIS). Concretely, given a batch of $K$ simulated 
trajectories $\{x_t^{(k)}, w_t^{(k)}\}_{k=1}^K$, expectations of an arbitrary test function \(\varphi\) (usually, \(\varphi(x) \equiv x\)) under the 
target distribution can be approximated as
\begin{align}
    \mathbb{E}_{p_t^{\textsc{fk}}}[\varphi(x_t)] \approx \sum_{k=1}^K 
    \frac{\exp(w_T^{(k)})}{\sum_j \exp(w_T^{(j)})} \varphi(x_T^{(k)})\,,
    \label{eq:snis}
\end{align}
which converges to the true expectation as $K \to \infty$ 
\citep{skreta2025fkc}. However, in practice, this estimator suffers from 
high variance due to weight degeneracy — over long simulation horizons, a 
small number of particles tend to accumulate the majority of the weight, 
leading to poor coverage of the target distribution. To address this, we 
introduce some recommended resampling methods in \cref{sec:resampling}.
Given parallel simulation of K particles and efficient resampling, FKCs 
add negligible slowdown.

%% file: Sections/problem_statement.tex
\section{Problem Statement}

During inference (i.e., the reverse-time flow/diffusion process), we seek to guide sampling so that the generated samples satisfy the constrained optimization problem
\begin{equation}
    \underset{x \in \mathcal{X}}{\text{minimize}} \quad \mathcal{L}(x) \qquad
    \text{subject to} \quad h_1(x) = 0, \quad h_2(x) \le 0 .
\label{eq:optimzation_prob}
\end{equation}  
We can introduce a binary indicator
\(
O:\mathcal{X}\to\{0,1\},
\)
which flags samples that satisfy the desired optimality criterion as defined in~\cref{eq:optimzation_prob}. We define a penalty-augmented cost as \footnote{Here \(\text{softmax}\{0,h_2(x)\}\) is any \(C^2\) approximation of \(\max\{0,h_2(x)\}\). We use a softmax relaxation for analytical and mathematical tractability; empirically, using the maximum operator performs without issue.}
\begin{equation}
\label{eq:agumented_cost_function}
    \mathcal J(x)
    :=\mathcal L(x)
    +\rho(x)
    +c_1\,h_1(x)^2
    +c_2\,\text{softmax}\{0,h_2(x)\},
\end{equation}
where \(\rho(x)\) is any other additional penalty.
The likelihood of observing $O=1$ given a trajectory $x$ is defined as
\(
p\bigl(O=1 \mid x\bigr)
=\exp{\!\bigl(\beta\,\mathcal{J}(x)\bigr)},
\)
where $\beta<0$ controls the strength of the preference for low-cost samples. Combining this likelihood with a prior $p(x)$ over trajectories via Bayes' rule yields the posterior distribution
\(
p(x \mid O=1)\;\propto\; p(x)\,p(O=1\mid x).
\)
The formulation above is an instance of \emph{control as (variational) inference}, which recasts constrained optimization as posterior inference. A key advantage is that it naturally captures multi-modality in the solution space (usually arising due to non-convexity of constraints) while keeping samples close to the prior \cite{lambert2021steinvariationalmodelpredictive, DBLP:journals/corr/abs-1805-00909, rawlik2013}.
Now given a pretrained diffusion or flow policy with marginals \(q_t(x)\), our goal is to sample from the time-\(t\) posterior marginals (or often referred to as the Gibbs tilted marginals):
\begin{equation}
    p_t(x \mid O=1)=Z_t^{-1} \cdot q_t(x)\exp\bigl(\beta\mathcal{J}(x)\bigr),
    \qquad
    Z_t:=\int q_t(\tilde{x})\exp\bigl(\beta\mathcal{J}(\tilde{x})\bigr)\,d\tilde{x}.
    \label{eq:postior_marginals}
\end{equation}

%% file: Sections/methodology.tex
\section{Methodology}

\subsection{Guiding Diffusion Policy to Satisfy the Optimization Problem}

In the context of imitation learning, the FKC framework is particularly 
attractive: given a pretrained diffusion policy generating trajectories from 
$p_{\text{data}}(x)$, FKC allows one to sample at inference time from 
modified distributions without any retraining. Therefore, we can use the FKC framework 
in order to sample from the marginals \(p_t(x)\) from~\cref{eq:postior_marginals}
using the following Feynman-Kac PDE.

\begin{theorem}[Gibbs Tilted Sampling; see~\cref{thm:gibbs_tilted_full} in~\cref{app:proofs} for proof]
\label{thm:cost_guided_sde}
    Consider the following SDE
    \begin{align}
        dx_t = \left(-f_t(x_t) + \sigma_t^2 \nabla \log q_t(x_t)\right)dt + \sigma_tdW_t\,,
        \label{eq:base_stochastic_PDE}
    \end{align}
    which samples from the marginals $q_t(x)$. The samples from the Gibbs titled marginals $p_t(x \mid O=1) \propto q_t(x) \exp{(\beta_t \mathcal{J}(x))}$ from \cref{eq:postior_marginals} can be simulated according to the following SDE
    \begin{align}
    \label{eq:fkc-weight}
    dx_t =~& (-f_t(x_t) + \sigma_t^2 \nabla \log q_t(x_t) + \beta_t \frac{\sigma_t^2}{2}\nabla \mathcal J(x_t))dt + \sigma_t dW_t\,,\\
    dw_t =~& \left[\deriv{\beta_t}{t}\mathcal J(x) + \inner{\beta_t\nabla \mathcal J(x)}{\frac{\sigma_t^2}{2}\nabla\log q_t(x) -f_t(x)}\right]dt.
    \end{align}
\end{theorem}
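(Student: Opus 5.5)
The plan is a direct Fokker--Planck computation in log-density form. We then read off the reweighting rate $g_t$ and invoke the weighted-SDE correspondence of \cref{eq:fk_pde}--\cref{eq:weighted_sde}.

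\textbf{Setup.} Write $u_t(x) := -f_t(x) + \sigma_t^2\nabla\log q_t(x)$ for the base drift in \cref{eq:base_stochastic_PDE}. The proposed drift is $v_t := u_t + \beta_t\frac{\sigma_t^2}{2}\nabla\mathcal J$, and the target is $p_t := Z_t^{-1} q_t e^{\beta_t\mathcal J}$. The goal is a function $g_t$ such that
\begin{equation*}
\partial_t p_t = -\inner{\nabla}{p_t v_t} + \tfrac{\sigma_t^2}{2}\Delta p_t + g_t p_t .
\end{equation*}
The cited FKC result then says that simulating $dx_t = v_t\,dt + \sigma_t dW_t$ with $dw_t = g_t\,dt$ yields particles whose SNIS weighting targets $p_t$. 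Centering $g_t$ is automatic once we note that any $x$-independent part of $g_t$ cancels under self-normalization.

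\textbf{Key identity.} For any positive density $r$ and vector field $b$,
\begin{equation*}
\frac{-\inner{\nabla}{r b} + \frac{\sigma^2}{2}\Delta r}{r} = -\nabla\!\cdot b - \inner{b}{\nabla\log r} + \frac{\sigma^2}{2}\bigl(\Delta\log r + \|\nabla\log r\|^2\bigr).
\end{equation*}
Apply this to $q_t$ with drift $u_t$. Since $q_t$ solves the Fokker--Planck equation of \cref{eq:base_stochastic_PDE}, this gives an expression, call it $A_t(x)$, for $\partial_t\log q_t$.

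\textbf{Main computation.} First, on the left-hand side,
\begin{equation*}
\partial_t\log p_t = A_t + \partial_t\beta_t\,\mathcal J - \partial_t\log Z_t .
\end{equation*}
Second, evaluate the same operator at $p_t$ with drift $v_t$, using $\nabla\log p_t = \nabla\log q_t + \beta_t\nabla\mathcal J$. Expand term by term:
\begin{itemize}
\item The $\Delta\mathcal J$ contributions from $-\nabla\!\cdot v_t$ and from $\frac{\sigma_t^2}{2}\Delta\log p_t$ cancel.
\item The $\beta_t^2\frac{\sigma_t^2}{2}\|\nabla\mathcal J\|^2$ terms from $-\inner{v_t}{\nabla\log p_t}$ and from $\frac{\sigma_t^2}{2}\|\nabla\log p_t\|^2$ cancel.
\item The cross terms combine to $-\beta_t\inner{u_t}{\nabla\mathcal J} + \beta_t\frac{\sigma_t^2}{2}\inner{\nabla\log q_t}{\nabla\mathcal J}$.
\end{itemize}
Substituting $u_t$, the cross terms become $\beta_t\inner{\nabla\mathcal J}{f_t} - \beta_t\frac{\sigma_t^2}{2}\inner{\nabla\log q_t}{\nabla\mathcal J}$. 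So the operator at $p_t$ equals
\begin{equation*}
A_t + \beta_t\inner{\nabla\mathcal J}{f_t} - \beta_t\tfrac{\sigma_t^2}{2}\inner{\nabla\log q_t}{\nabla\mathcal J} .
\end{equation*}
Subtracting this from $\partial_t\log p_t$ gives
\begin{equation*}
g_t = \partial_t\beta_t\,\mathcal J + \inner{\beta_t\nabla\mathcal J}{\tfrac{\sigma_t^2}{2}\nabla\log q_t - f_t} - \partial_t\log Z_t ,
\end{equation*}
which is the stated $dw_t$ up to the constant $-\partial_t\log Z_t$.

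\textbf{Main obstacle.} The delicate part is not the algebra but justifying the constant. We must argue that $-\partial_t\log Z_t = -\mathbb E_{p_t}[\,\cdot\,]$ of the non-constant part, which is exactly the centering in $\bar g_t$. Integrating the PDE for $p_t$ against $1$ gives this, under enough regularity and decay of $q_t$ and $\mathcal J$ to integrate by parts; these are the standing assumptions of \citet{skreta2025fkc}. The constant can therefore be dropped from the log-weights without changing the SNIS estimator \cref{eq:snis}.
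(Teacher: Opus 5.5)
Your proof is correct and is essentially the paper's argument: both rest on the log-density Fokker--Planck identity, the score relation $\nabla\log p_t=\nabla\log q_t+\beta_t\nabla\mathcal J$, and the fact that $\partial_t\log Z_t$ supplies exactly the centering of $g_t$ (the divergence and Laplacian terms integrate to zero). The only difference is bookkeeping. The paper first puts the whole tilt into the weights for a generic base drift, then moves a term $a\nabla\mathcal J$ into the drift via the transport/reweighting equivalence, and finally sets $a=\beta_t\sigma_t^2/2$; you instead plug the final guided drift in directly and verify the $\Delta\mathcal J$ and $\|\nabla\mathcal J\|^2$ cancellations in one pass, which is shorter but does not yield the general $a$-family of \cref{thm:gibbs_tilted_full}.
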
    

\subsection{Guiding Flow Matching Policy to Satisfy the Optimization Problem}
\label{sec:flow_guidance}

The reader may observe that only the (reverse-time) diffusion SDE in 
\cref{eq:denoising} conforms to the class of SDEs defined in 
\cref{eq:base_stochastic_PDE}. Naively identifying the flow ODE 
\(dx = -v_t(x)\,dt\,\) with \cref{eq:base_stochastic_PDE} corresponds to 
setting $\sigma_t = 0$, which collapses the entire guidance machinery in 
\cref{thm:cost_guided_sde}; both the $\nabla \mathcal{J}$ 
correction in the drift and the score-correction in the weight evolution 
vanish identically.

Remarkably, this limitation can be circumvented by recasting the 
deterministic flow as an equivalent stochastic process with identical 
marginal densities, following \citet{singh2024stochastic}. Specifically, 
for any scalar function $\tilde{\sigma}(t) \geq 0$, the flow ODE 
$dx_t = -v(x_t, t)\,dt$ shares its time-marginals with the family of SDEs,
\begin{equation}
dx_t = \left( -v(x_t, t) + \tfrac{\tilde{\sigma}^2(t)}{2}\nabla \log q_t(x_t) 
\right)dt + \tilde{\sigma}(t)\,dW_t, 
\label{eq:flow_sde}
\end{equation}
where $\nabla \log q_t$ is the score of the flow's marginals, available in 
closed form from the velocity field itself when $p_{\text{src}}$ is Gaussian 
(see \cref{app:flow_to_sde} for full details). \cref{eq:flow_sde} 
now matches the form of \cref{eq:base_stochastic_PDE} with non-zero 
diffusion $\tilde{\sigma}(t)$, so we may directly apply the FKC framework 
of \cref{thm:cost_guided_sde}.

\subsection{Concentration on the Constrained Optimum}

A natural question is whether sampling from the Gibbs tilted posterior in 
\cref{eq:postior_marginals} actually recovers solutions to the original constrained 
optimization problem in \cref{eq:optimzation_prob}. The following result answers 
this in the affirmative: for sufficiently large penalty weights and inverse 
temperature $\beta$, the posterior concentrates arbitrarily tightly on 
feasible, near-optimal trajectories.

\begin{corollary}[Concentration on the constrained optimum; see~\cref{prop:approx-constrained-full} in~\cref{app:proofs}]
For any tolerance $\delta>0$, there exist constants $\varepsilon, C>0$ (depending on 
the penalty weights $c$) such that the Gibbs-tilted posterior satisfies
\begin{equation}
\label{eq:concentration}
    \mathbb{P}_{p_{\beta,c}}\!\left(\,|h_1(x)|\le\delta,\; h_2(x)\le\delta,\; \mathcal{L}(x)\le \mathcal{L}^\star+\delta\,\right) \;\ge\; 1 - C\, e^{\beta\varepsilon/2}.
\end{equation}
That is, the probability of $\delta$-infeasibility or $\delta$-suboptimality decays exponentially as $\beta\to-\infty$.
\end{corollary}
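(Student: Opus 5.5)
We treat this as a Laplace-principle style concentration argument for the Gibbs measure $p_{\beta,c}(x)\propto q(x)\exp(\beta\mathcal J(x))$ with $\beta<0$, where $q$ is the prior (the $t=0$ marginal of the pretrained policy). Throughout we take $\mathcal{X}$ compact (or $\mathcal L$ coercive), $\mathcal L,h_1,h_2$ continuous, $\rho\ge 0$ continuous and vanishing at a constrained minimiser $x^\star$, and $q$ positive on a neighbourhood of $x^\star$. We also use the convention that $\mathcal L^\star$ is the constrained optimum of \cref{eq:optimzation_prob}. The idea is to separate the ``good'' set
\[
G_\delta=\{x:\ |h_1(x)|\le\delta,\ h_2(x)\le\delta,\ \mathcal L(x)\le\mathcal L^\star+\delta\}
\]
from its complement by a positive gap in $\mathcal J$. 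We then bound the complement's mass by a ratio of integrals in which this gap appears as the exponent.

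\textbf{Step 1 (exact-penalty gap).} We show that for penalty weights $c=(c_1,c_2)$ large enough there is $\varepsilon>0$ with
\[
\inf_{x\notin G_\delta}\mathcal J(x)\;\ge\;\mathcal L^\star+2\varepsilon .
\]
We split the complement into cases. If $|h_1(x)|>\delta$ or $h_2(x)>\delta$, the penalty contributes at least $\min\{c_1\delta^2,\,c_2\,\text{softmax}\{0,\delta\}\}$. Since $\mathcal L$ is bounded below on compact $\mathcal X$ and $\rho\ge0$, choosing $c$ large pushes $\mathcal J$ above $\mathcal L^\star+2\varepsilon$. If instead $x$ is $\delta$-feasible but $\mathcal L(x)>\mathcal L^\star+\delta$, we need that near-feasible points cannot be much better than $\mathcal L^\star$. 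By compactness and continuity, the value function
\[
m(\eta)=\inf\{\mathcal L(x):\ |h_1(x)|\le\eta,\ h_2(x)\le\eta\}
\]
satisfies $m(\eta)\to\mathcal L^\star$ as $\eta\to0$. We then split the $\delta$-feasible region further into points that are $\eta$-feasible, where $\mathcal L\ge\mathcal L^\star-\delta/2$, and points that are not, where the penalty at least $\min\{c_1\eta^2,c_2\,\text{softmax}\{0,\eta\}\}$ again dominates for large $c$. Taking $\varepsilon$ of order $\delta/4$ gives the gap. The softmax relaxation must be handled here: it is $\ge\max\{0,h_2\}$ up to a fixed smoothing error, which is absorbed into $\varepsilon$ or requires a sharpness parameter tied to $\delta$. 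This is why $\varepsilon$ and $C$ depend on $c$.

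\textbf{Step 2 (lower bound on the normaliser).} By continuity of $\mathcal J$ at $x^\star$, where $\mathcal J(x^\star)=\mathcal L^\star$, there is a neighbourhood $U$ on which $\mathcal J\le\mathcal L^\star+\varepsilon$. Hence
\[
Z\ge q(U)\,e^{\beta(\mathcal L^\star+\varepsilon)},
\]
using $\beta<0$.

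\textbf{Step 3 (upper bound on the bad mass).} Trivially,
\[
\int_{G_\delta^c}q\,e^{\beta\mathcal J}\le e^{\beta(\mathcal L^\star+2\varepsilon)}.
\]
Dividing by the bound from Step 2 gives
\[
\mathbb P(G_\delta^c)\le q(U)^{-1}e^{\beta\varepsilon}\le C\,e^{\beta\varepsilon/2},
\]
with $C=q(U)^{-1}$, since $\beta\varepsilon\le\beta\varepsilon/2$ for $\beta<0$. This yields \cref{eq:concentration}. The result should be stated as a corollary of the general proposition \cref{prop:approx-constrained-full}, with the specific constants read off.

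\textbf{Main obstacle.} The hard part is Step 1, specifically the case of $\delta$-feasible but suboptimal points. Without compactness or a constraint qualification, near-feasible points could achieve $\mathcal L$ far below $\mathcal L^\star$. I would first try the compactness and value-function continuity argument above. If the paper's setting is non-compact, I would instead assume coercivity of $\mathcal J$ and integrability of $q\,e^{\beta\mathcal J}$ to restrict to a sublevel set, which is compact.
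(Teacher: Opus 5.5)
Your argument is correct under the hypotheses you list (with $\max\{0,h_2\}$ as the penalty, or a sufficiently sharp relaxation; see below). It is organized around a different anchor than the paper's proof of \cref{prop:approx-constrained-full}.

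The paper works relative to $\mathcal J^\star=\min_X\mathcal J_c$. It \emph{assumes} (A4), that every global minimizer of $\mathcal J_c$ is $\delta/2$-good. It extracts the gap $\varepsilon=\min_{\widetilde A_\delta}(\mathcal J-\mathcal J^\star)>0$ by compactness. It assumes (A3), that $q_0\ge m$ on an open set meeting $\arg\min\mathcal J_c$. It then bounds the ratio using $\sup q_0$ and Lebesgue measures, which is where $C=M\,\mathrm{Leb}(X)/(m\,\mathrm{Leb}(U'))$ and the half-gap exponent $\beta\varepsilon/2$ come from.

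You instead anchor at a feasible minimizer $x^\star$ of $\mathcal L$ with $\mathcal J(x^\star)=\mathcal L^\star$. You prove the gap $\inf_{G_\delta^c}\mathcal J\ge\mathcal L^\star+2\varepsilon$ directly for large $c$, which is a self-contained version of what the paper outsources to (A4). This buys several things:
\begin{itemize}
\item your numerator bound needs only $\int q\le 1$, with no boundedness of $q$ and no finiteness of $\mathrm{Leb}(X)$;
\item your positivity hypothesis concerns $q$ near $x^\star$, so it does not move with $c$;
\item you obtain the sharper rate $e^{\beta\varepsilon}$ with $C=1/q(U)$.
\end{itemize}

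There are two prices. First, you must make explicit what the paper hides in (A4) ``for the right class of $\rho$'', namely $\rho\ge 0$ with $\rho(x^\star)=0$. Second, your Step 2 is sensitive to the smoothing of the penalty. You need $c_2\,\text{softmax}\{0,h_2(x^\star)\}\le\varepsilon$, but a softplus with sharpness $k$ contributes up to $c_2\log 2/k$ at an active constraint. Since $c_2$ must be large, this error cannot simply be ``absorbed into $\varepsilon$''; $k$ must grow like $c_2/\varepsilon$. The paper's proposition just uses $\max\{0,h_2\}$, and you can do the same.

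Finally, the value-function detour through $m(\eta)$, and the ``main obstacle'' you identify, are unnecessary. The optimality condition defining $G_\delta$ is one-sided, so near-feasible points with $\mathcal L$ well below $\mathcal L^\star$ are already good. On the part of $G_\delta^c$ where $\mathcal L>\mathcal L^\star+\delta$, you get $\mathcal J\ge\mathcal L>\mathcal L^\star+\delta$ immediately from nonnegativity of $\rho$ and the penalty terms.
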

In essence, $\beta$ tunes how strongly we weight constraint satisfaction and cost minimization relative to the baseline (data) distribution.

%% file: Sections/results.tex
\section{Experiments and Results}
\label{sec:experiments}

In this section, we provide analysis and validation of our proposed 
approach. While our framework applies to any problem expressible in the 
form of \cref{eq:optimzation_prob} or \cref{eq:main_posterior}, we focus our motivation and 
empirical validation on some practically important cases: 
    (1) Can our approach improve the performance of pretrained 
    Vision-Language-Action (VLA) models such as 
    GR00T-N1.6~\cite{nvidia2025gr00tn1openfoundation} or 
    $\pi_{0.5}$~\cite{intelligence2025pi05visionlanguageactionmodelopenworld} 
    by introducing collision constraints at inference time? Additionally, 
    can our approach plan around non-convex obstacles?
    (2)  How does our approach compare to other baselines?

\paragraph{Baselines}
We compare against three baselines spanning the main families of 
inference-time constrained generation.
\textbf{Linear-Combination Guidance:} A simple baseline that adds the 
gradient of the augmented cost \(\mathcal{J}(x)\) to the unconstrained 
generative signal: \(\nabla \log q_t(x) + \lambda_t \nabla \mathcal{J}(x)\) 
for diffusion policies, and \(v_t(x) + \lambda_t \nabla \mathcal{J}(x)\) 
for flow policies. It is simple but lacks any principled correction for 
the change of measure, however it is common choice used in highly effective
guidance methods in flow/diffusion based generative policies (e.g. OmniGuide~\cite{song2026omniguideuniversalguidancefields}, Guided Diffusion in Inference-Time Policy Steering~\cite{wang2025inferencetimepolicysteeringhuman}).
\textbf{CASF~\cite{long2026constrainingstreamingflowmodels}:} Constraint-Aware Streaming 
Flow models each constraint as a differentiable distance function and 
converts it into a Riemannian metric that locally reshapes the learned 
velocity field near constraint boundaries.\footnote{CASF is flow-specific and 
\emph{cannot be applied to diffusion policies}; we evaluate it only in 
the flow setting.}
\textbf{JM2D~\cite{jung2025joint}:} Joint Model-based Model-free 
Diffusion frames constrained planning as joint sampling over \((x, k)\), 
where \(k\) is the output of an auxiliary optimization module, with 
compatibility encouraged by an interaction potential \(V(x, k)\). The 
joint score is approximated via importance sampling on denoised samples 
at each reverse step.\footnote{JM2D is originally proposed for diffusion; for 
flow policies, we first convert the flow into an equivalent SDE 
(\cref{app:flow_to_sde}) and then apply JM2D to the resulting SDE.}

\subsection{Collision Avoidance in Simulation}

\begin{table}[h]
\centering
\caption{Collision ($\downarrow$) and success ($\uparrow$) rates (\%) across simulation tasks; best per task in \textbf{bold}. For $\pi_{0.5}$, collision is reported per-timestep rather than per-trajectory because zero-shot evaluation yields high collision rates.}
\label{tab:main_col_results}
\setlength{\tabcolsep}{4pt}
\renewcommand{\arraystretch}{1.05}
\footnotesize
\resizebox{\linewidth}{!}{%
\begin{tabular}{@{}l l ccccc ccccc@{}}
\toprule
 & & \multicolumn{5}{c}{Collision Rate $\downarrow$} & \multicolumn{5}{c}{Success Rate $\uparrow$} \\
\cmidrule(lr){3-7} \cmidrule(lr){8-12}
Policy & Task
 & No Guid. & Lin. Comb. & CASF & JM2D & \textbf{BayesFP (Ours)}
 & No Guid. & Lin. Comb. & CASF & JM2D & \textbf{BayesFP (Ours)} \\
\midrule
\multirow{2}{*}{\shortstack[l]{Diffusion Policy}}
 & Can + cyl. obstacle            & 30\% & 2\% & \emph{n/a} & 2\% & \textbf{0\%}  & \textbf{96\%} & \textbf{96\%} & \emph{n/a} & \textbf{96\%} & \textbf{96\%} \\
 & Transport + cyl. obstacle      & 94\% & \textbf{4\%} & \emph{n/a} & 6\% & \textbf{4\%}  & 12\% & 38\% & \emph{n/a} & 46\% & \textbf{56\%} \\
\midrule
\multirow{2}{*}{\shortstack[l]{GR00T-N1.6}}
 & LIBERO-Obj. + cyl. obstacle    & 100\% & 2\% & 4\% & \textbf{0\%} & \textbf{0\%}  & 58\% & 72\% & 76\% & \textbf{84\%} & \textbf{84\%} \\
 & LIBERO-Obj. + V-shape obstacle & 48\%    & 43\% & 44\% & 25\% & \textbf{14\%}  & 42\%   & 72\%   & 69\% & \textbf{75\%} & \textbf{75\%}   \\
\midrule
\multirow{4}{*}{$\pi_{0.5}$}
 & FoodPacking2Cans               & 4.7\% & 2.3\% & \textbf{0.9\%} & 2.7\% & 3.0\% & 14\% & 22\% & 8\% & 22\% & \textbf{28\%} \\
 & BBQSauceInBin                  & 20.2\% & 19.1\% & 18.0\% & 19.0\% & \textbf{17.9\%} & 8\% & 8\% & 8\% & 13\% & \textbf{16\%} \\
 & HammersInLeftBin               & 4.1\% & 3.9\% & 3.4\% & 3.2\% & \textbf{2.8\%} & 13\% & 27\% & 15\% & 28\% & \textbf{36\%} \\
 & TakeMugsOffOfShelf             & \textbf{1.1\%} & 7.7\% & 4.2\% & 7.1\% & 9.9\% & 0\% & 2\% & 1\% & 6\% & \textbf{10\%} \\
\bottomrule
\end{tabular}%
}
\end{table}

Across all experiments in this 
section, we enforce collision constraints by sampling points on the 
robot body and penalizing trajectories whose sampled body points fall 
within an obstacle region. The experiments below differ only in how 
this obstacle region is specified. In the first two cases 
(Diffusion Policy, GR00T-N1.6), we assume access to 
the ground-truth pose and dimensions of the obstacle, and the obstacle 
region is defined directly from this geometry. In the case of \(\pi_{0.5}\), we 
do not assume any such access; instead, we construct a signed distance 
function (SDF)~\cite{millane2024nvbloxgpuacceleratedincrementalsigned} 
of the environment online, treat every object that is 
not the target of manipulation or grasping as a collidable obstacle, 
and define the obstacle region as the zero-level set of this SDF.

\paragraph{Guiding Diffusion Policies}
\label{sec:diffusion_policy}
We begin with experiments on vanilla diffusion policies, where we 
demonstrate the ability of our method to enforce collision constraints 
on pretrained models without any retraining. We consider two settings 
of increasing complexity. First, in \cref{fig:toy-2d}, we present 
two illustrative experiments that highlight the qualitative behavior of 
our Gibbs tilted posterior sampler under collision constraints, 
including the case of planning around non-convex obstacles. 
Second, we introduce 
an additional cylindrical obstacle into the \emph{Can and Transport 
tasks} from RoboMimic~\cite{mandlekar2021what}, again using the 
pretrained diffusion policy checkpoints from 
\citet{chi2024diffusionpolicy}; the results are summarized in 
\cref{tab:main_col_results} and he scenes are visualized in \cref{fig:Libero_robomimic_scene}. 
Across all experiments in this subsection, the policies predict 
\emph{end-effector position trajectories in the world frame} (i.e., 
absolute Cartesian poses).

\paragraph{Guiding GR00T-N1.6}
\label{sec:groot}
We fine-tune GR00T-N1.6~\cite{nvidia2025gr00tn1openfoundation} on the 
\emph{LIBERO object tasks}~\cite{liu2023liberobenchmarkingknowledgetransfer}, 
and then evaluate under two separate inference-time settings: one in 
which we introduce a translucent cylindrical obstacle, and another in 
which we introduce a translucent non-convex V-shaped obstacle formed by 
two thick arms joined at an angle (a "V" in the XY plane, elongated 
along the Z axis). The policy outputs end-effector action deltas in 
the world frame. Quantitative results are reported in \cref{tab:main_col_results}, 
and the scenes are visualized in \cref{fig:Libero_robomimic_scene}.

\begin{figure}[htbp]
    \centering
    \begin{subfigure}[t]{0.20\linewidth}
        \centering
        \fbox{\includegraphics[width=\linewidth]{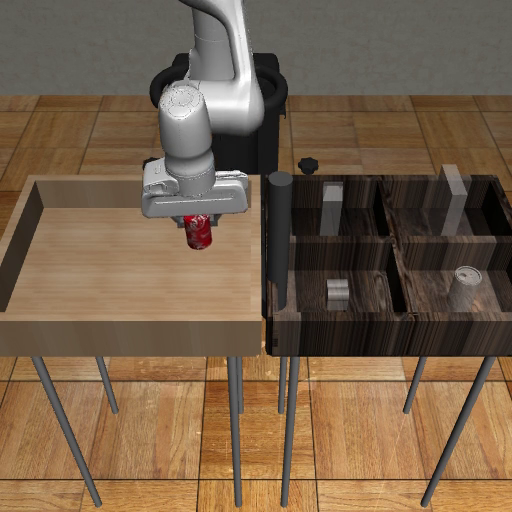}}
        \caption{Robomimic Can task}
        \label{fig:robomimic-can}
    \end{subfigure}
    \hfill
    \begin{subfigure}[t]{0.20\linewidth}
        \centering
        \fbox{\includegraphics[width=\linewidth]{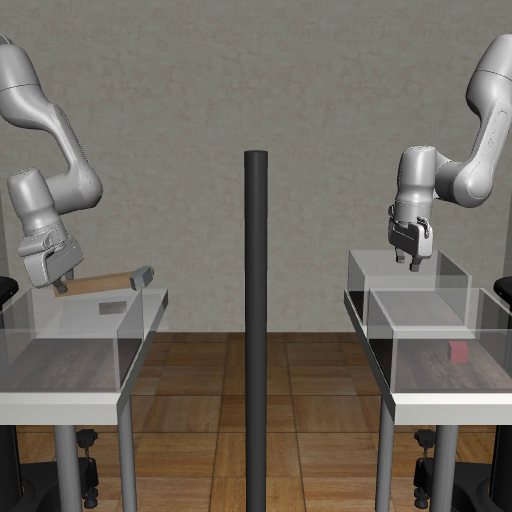}}
        \caption{Robomimic Transport task}
        \label{fig:robomimic-transport}
    \end{subfigure}
    \hfill
    \begin{subfigure}[t]{0.20\linewidth}
        \centering
        \fbox{\includegraphics[width=\linewidth]{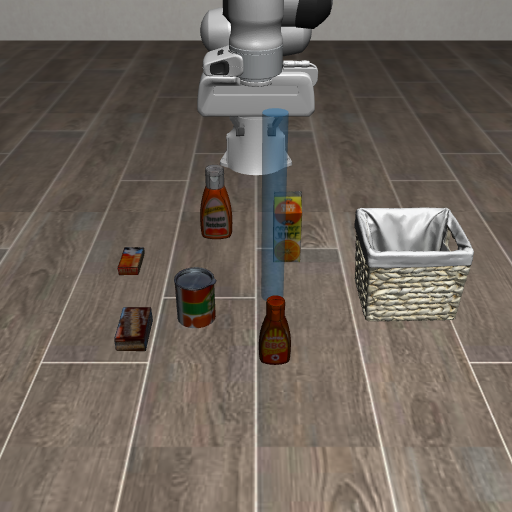}}
        \caption{LIBERO-Obj. + cyl. obstacle}
        \label{fig:libero-cylinder-obstacle}
    \end{subfigure}
    \hfill
    \begin{subfigure}[t]{0.20\linewidth}
        \centering
        \fbox{\includegraphics[width=\linewidth]{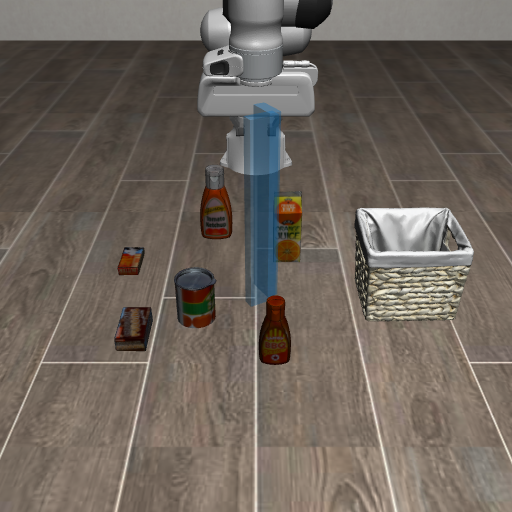}}
        \caption{LIBERO-Obj. + V-shape obstacle}
        \label{fig:libero-v-obstacle}
    \end{subfigure}
    \caption{Scene visualization of the Robomimic and LIBERO tasks. }
    \label{fig:Libero_robomimic_scene}
\end{figure}

\paragraph{Guiding $\pi_{0.5}$}
\label{sec:pi05}
We use the pretrained 
$\pi_{0.5}$~\cite{intelligence2025pi05visionlanguageactionmodelopenworld} 
checkpoint fine-tuned on the DROID 
dataset~\cite{khazatsky2025droidlargescaleinthewildrobot}, and evaluate 
it on four tasks from the RoboLab~\cite{yang2026robolab} benchmark that 
were never seen during $\pi_{0.5}$ training: \emph{FoodPacking2CansTask}, 
\emph{BBQSauceInBinTask}, \emph{HammersInLeftBinTask}, and 
\emph{TakeMugsOffOfShelfTask}. Here, the policy outputs joint-position 
commands rather than end-effector poses or deltas as in the previous 
experiments. Quantitative results are reported in \cref{tab:main_col_results}, and 
the scenes are visualized in \cref{fig:robolab_scene}. 

\begin{figure}[htbp]
    \centering
    \begin{subfigure}[t]{0.22\linewidth}
        \centering
        \fbox{\includegraphics[width=\linewidth]{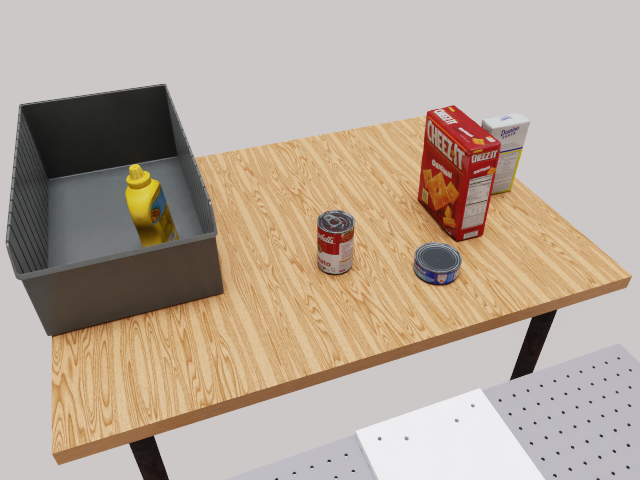}}
        \caption{FoodPacking2CansTask}
        \label{fig:foodpacking-2cans}
    \end{subfigure}
    \hfill
    \begin{subfigure}[t]{0.22\linewidth}
        \centering
        \fbox{\includegraphics[width=\linewidth]{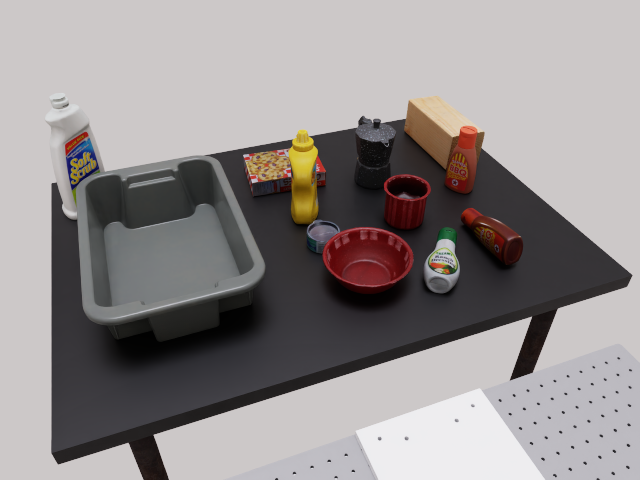}}
        \caption{BBQSauceInBinTask}
        \label{fig:bbq-sauce-bin}
    \end{subfigure}
    \hfill
    \begin{subfigure}[t]{0.22\linewidth}
        \centering
        \fbox{\includegraphics[width=\linewidth]{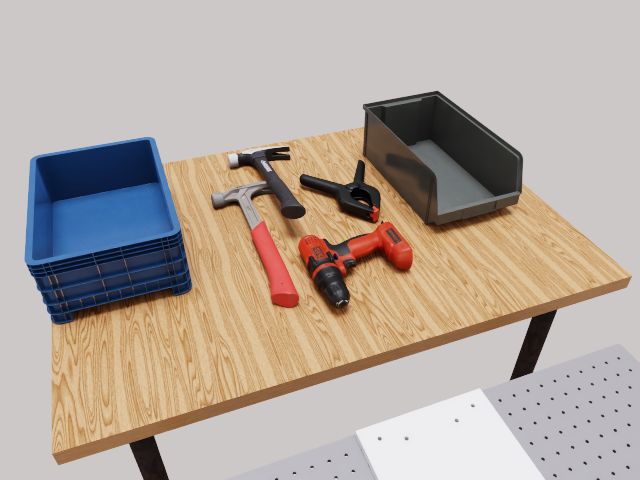}}
        \caption{HammersInLeftBinTask}
        \label{fig:hammers-left-bin}
    \end{subfigure}
    \hfill
    \begin{subfigure}[t]{0.22\linewidth}
        \centering
        \fbox{\includegraphics[width=\linewidth]{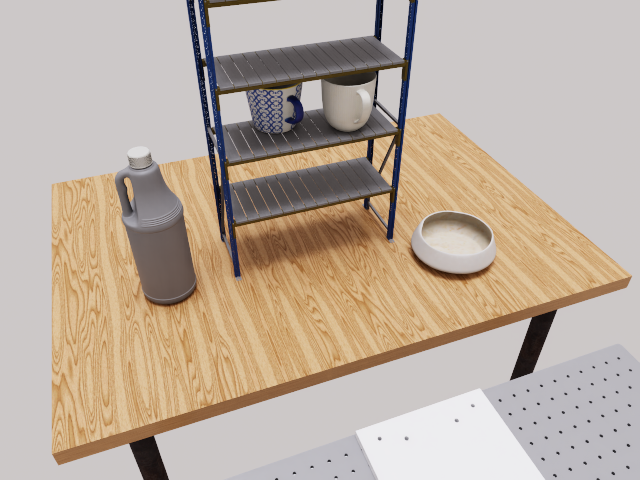}}
        \caption{TakeMugsOffOfShelfTask}
        \label{fig:mugs-off-shelf}
    \end{subfigure}
    \caption{Scene visualization of the Robolab tasks. }
    \label{fig:robolab_scene}
\end{figure}

\subsection{Real-world Experiments}
\begin{figure}[h]
  \centering
  \begin{minipage}[t]{0.55\linewidth}
    \centering
    \vspace{0pt}
    \begin{subfigure}[t]{0.45\linewidth}
      \centering
      \fbox{\includegraphics[width=\linewidth]{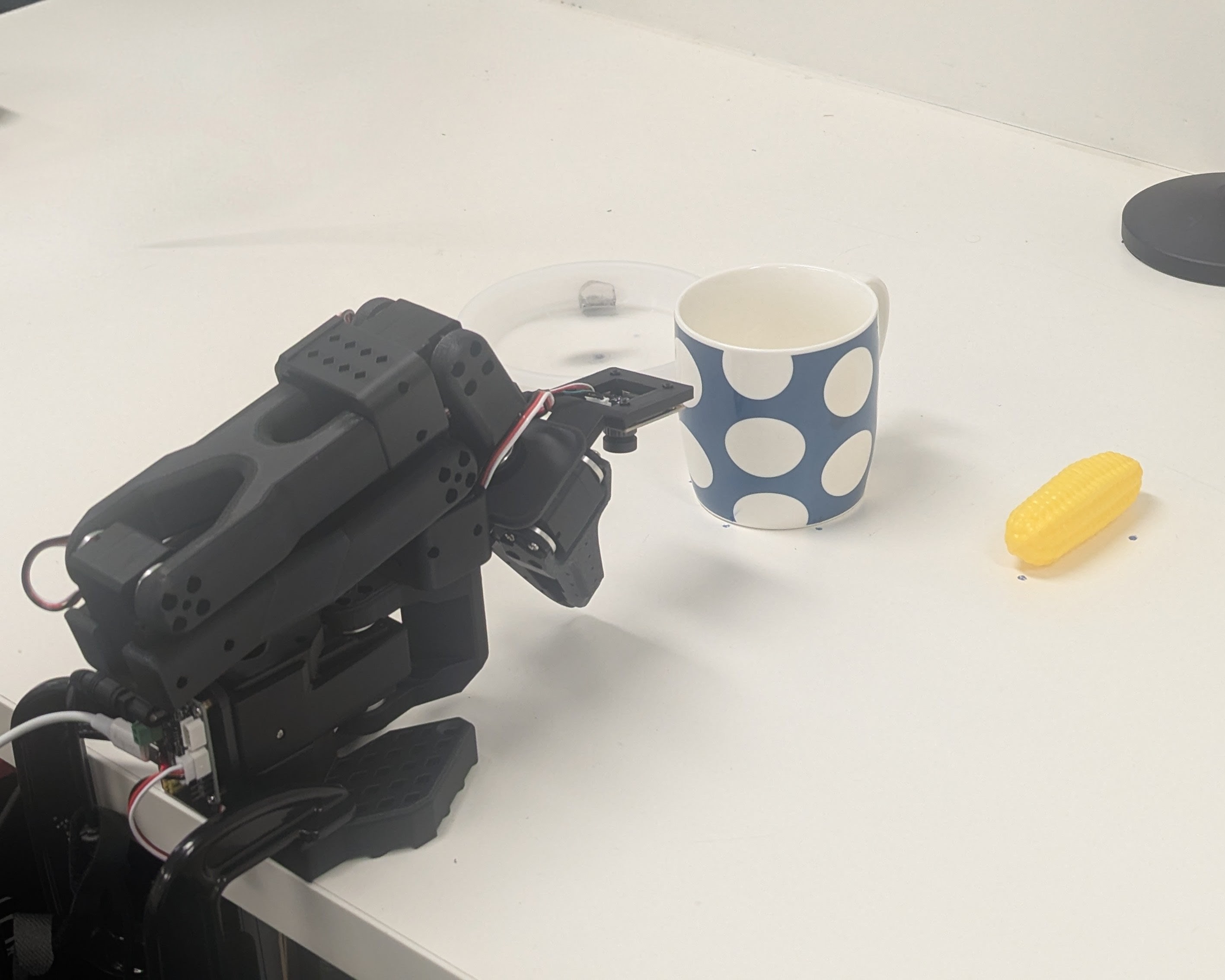}}
      \caption{PickAndPlace}
      \label{fig:real:mug}
    \end{subfigure}\hfill
    \begin{subfigure}[t]{0.45\linewidth}
      \centering
      \fbox{\includegraphics[width=\linewidth]{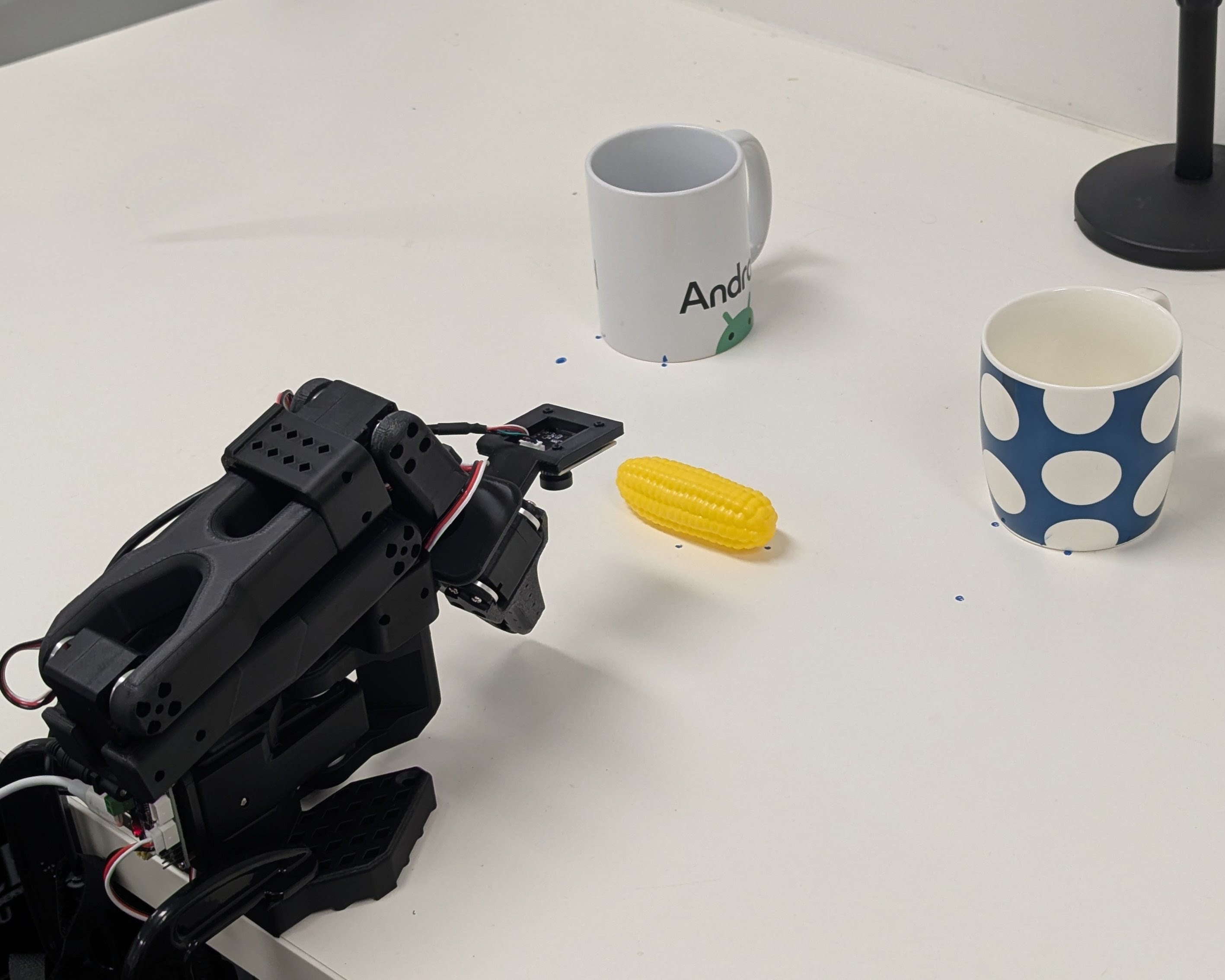}}
      \caption{BimodalMugSelection}
      \label{fig:real:yellow}
    \end{subfigure}
    \caption{Scene visualization of the real-world tasks}
    \label{fig:real_experiments}
  \end{minipage}\hfill
  \begin{minipage}[t]{0.36\linewidth}
    \centering
    \vspace{0pt}
    \small
    \setlength{\tabcolsep}{4pt}
    \renewcommand{\arraystretch}{1.1}
    \begin{tabular}{lcc}
      \toprule
       & No Guid. & BayesFP \\
      \midrule
      \multicolumn{3}{l}{\textit{PickAndPlace}} \\
      \midrule
      Collision rate $\downarrow$ & 98\% & \textbf{6\%} \\
      Success rate  $\uparrow$    & 100\% & \textbf{100\%} \\
      \midrule
      \multicolumn{3}{l}{\textit{BimodalMugSelection}} \\
      \midrule
      Correct-mug rate $\uparrow$ & 72\% & \textbf{100\%} \\
      Success rate     $\uparrow$ & 64\% & \textbf{92\%} \\
      \bottomrule
    \end{tabular}
    \captionof{table}{Real-world results}
    \label{tab:real_results}
  \end{minipage}
\end{figure}

We additionally conduct two real-world experiments on an SO101 arm~\citep{so101_github}, each fine-tuned \(\pi_{0.5}\)with only 20 demonstrations: (i) a \emph{PickAndPlace} task into a saucer, with a mug introduced as an obstacle at inference time; and (ii) a \emph{BimodalMugSelection} task whose training data places the object into either of two mugs (left and right), but at inference we language-condition on the right mug and add an inference-time constraint forbidding the left. Quantitative results for all tasks are reported in \cref{tab:real_results}, and the scenes are visualized in \cref{fig:real_experiments}.

%% file: Sections/discussion_and_ablations.tex
\section{Discussion and Ablations}
\label{sec:discussion}

\begin{wrapfigure}{r}{0.725\textwidth}  
\vspace{-1.5em}
  \centering
  \begin{subfigure}[t]{0.47\linewidth}
  \centering
  \includegraphics[width=\linewidth]{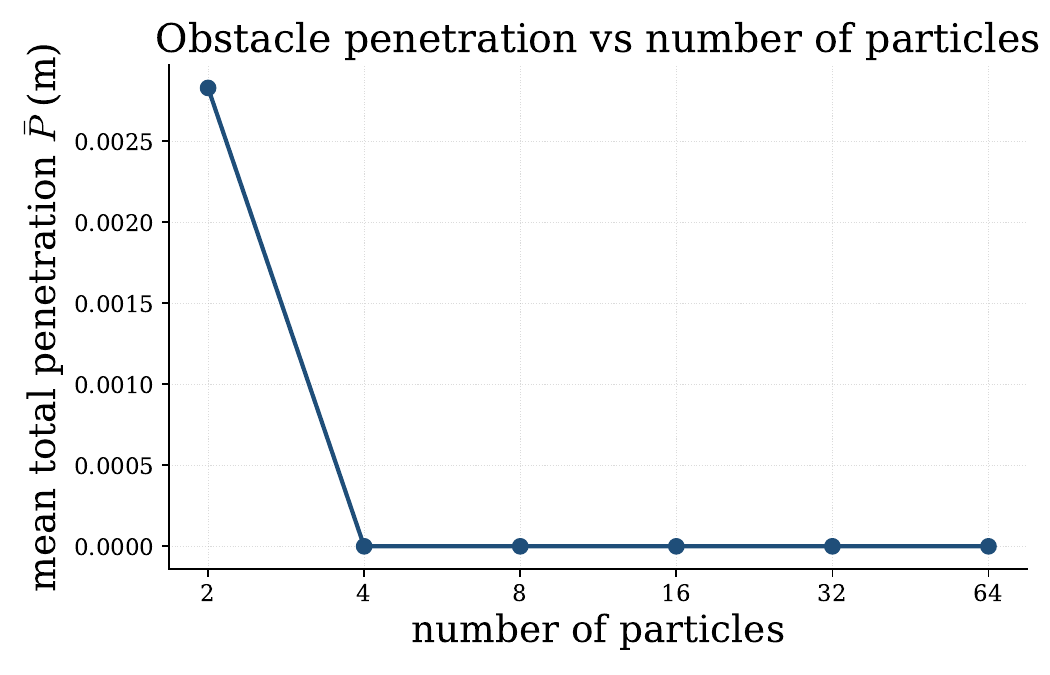}
  \caption{Particle count $K$: gains saturate by $K=4$ here, but we recommend $K \geq 8$ for non-convex obstacles and real-world tasks.}
  \label{fig:ex:A}
\end{subfigure}\hfill
\begin{subfigure}[t]{0.47\linewidth}
  \centering
  \includegraphics[width=\linewidth]{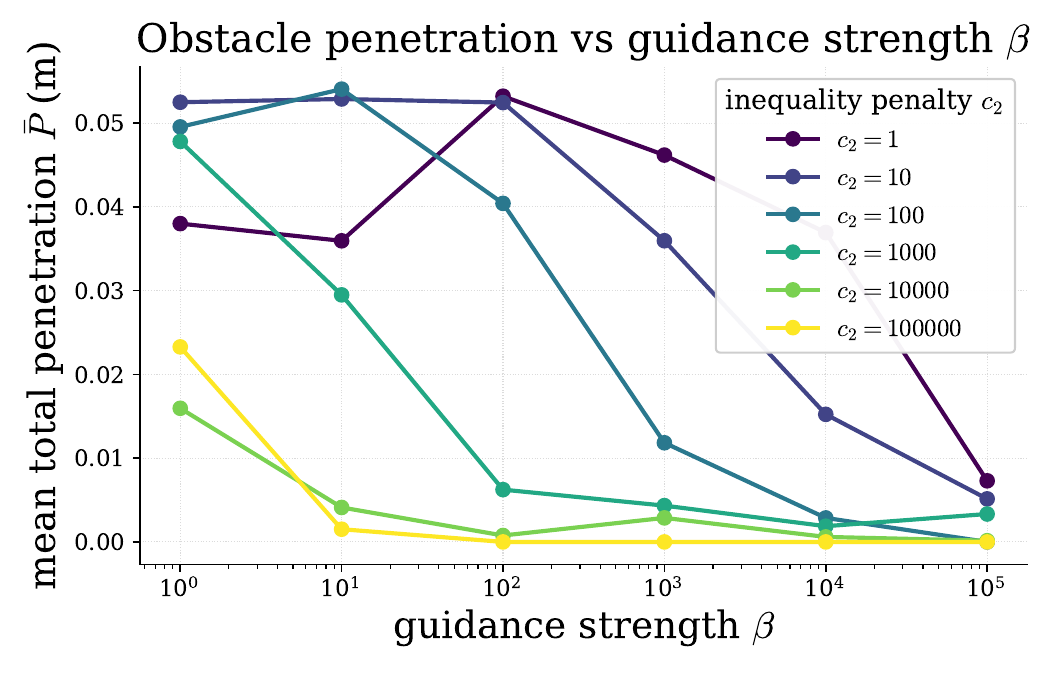}
  \caption{Guidance strength $|\beta|$: violations decay exponentially, as predicted by \cref{eq:concentration}.}
  \label{fig:ex:B}
\end{subfigure}
  \caption{Ablations on LIBERO-Object with a cylindrical obstacle (radius $0.04$ m). $\bar{P}$ 
  a constraint-violation metric defined in \cref{app:constraint_violation_metric}.}
  \label{fig:ablations}
  \vspace{-1.5em}
\end{wrapfigure}

\textbf{Ablations.} We ablate two knobs of our sampler (\cref{fig:ablations}). Increasing the number of FK particles $K$ improves success rate as the SNIS estimator tightens, but the gain saturates by $K\!\approx\!4$. Increasing $|\beta|$ monotonically reduces constraint violations, matching the exponential concentration predicted by \cref{eq:concentration}.

\textbf{Overlap assumption.} Our weighted SDE combines a cost-aware drift \cref{eq:base_stochastic_PDE} with the FK reweighting term \cref{eq:fkc-weight}; the drift alone coincides with linear-combination guidance, and the reweighting is what makes the procedure a principled posterior sampler. This presumes that $p_{\text{data}}$ and $\exp(\beta\mathcal{J})$ overlap. When they do not, \cref{eq:fkc-weight} carries little signal and the dynamics reduce to \cref{eq:base_stochastic_PDE}, so our method degrades gracefully to the linear-combination baseline.

\textbf{Whole-trajectory sampling vs.\ post-hoc pushing.} The Transport gains in \cref{tab:main_col_results} are not purely a collision effect: the cylinder is placed at the bimanual handoff point, and post-hoc gradient-based methods routinely push the two arms to \emph{opposite} sides of the obstacle, breaking the handoff. Because we sample whole trajectories from the posterior, joint feasibility of the two-arm motion is enforced at the population level. Population-level reweighting lets our method handle convex and non-convex obstacles uniformly (\cref{fig:toy-2d}, V-shape in \cref{tab:main_col_results}); JM2D also handles non-convex geometry to some extent via its auxiliary module, whereas purely gradient-based baselines struggle. For $\pi_{0.5}$, encoding the scene as an online SDF that treats every non-target object and the table as collidable yields two side benefits beyond raw collision avoidance: fewer incorrect-object grasps and fewer table strikes.


%% file: Sections/limitation_and_conclusion.tex
\section{Limitations and Future Work}

\paragraph{Limitations} Our method introduces three main limitations 
relative to vanilla diffusion or flow sampling. First, while the FK 
particles can be simulated in parallel on GPU and the reweighting step 
adds negligible cost, constructing the SDF used to evaluate the 
collision penalty $\mathcal{J}$ is inherently sequential and must be 
redone at every replanning step. In practice this yields a 
$1.25$--$2\times$ slowdown over the base policy (e.g.\ $\pi_{0.5}$ 
runs at $\sim$$30$\,Hz, ours at $\sim$$15$\,Hz); when the policy outputs 
joint positions and forward kinematics must be differentiated through to map 
actions into Cartesian space, the slowdown grows to $1.5$--$2\times$. Second, 
the FKC-weighted SDE is conceptually heavier and harder to implement 
than a naive linear-combination guidance scheme. Finally, the augmented 
cost function $\mathcal{J}$ must be ("almost") differentiable.

\textbf{Future Work.} Our experiments focus on collision constraints, 
and the quality of guidance ultimately depends on the fidelity of the 
constraint representation; richer SDF constructions such as those of 
OmniGuide~\cite{song2026omniguideuniversalguidancefields} are a natural 
drop-in. More broadly, our framework is not specific to collisions: 
it can be instantiated with any differentiable cost, opening the door 
to extending the recent suite of inference-time steering 
objectives---semantic grounding and human-demonstration 
attractors~\cite{song2026omniguideuniversalguidancefields}, and point, 
sketch, and physical-correction 
inputs~\cite{wang2025inferencetimepolicysteeringhuman}---to a 
principled, posterior-sampling treatment. Finally, applying the
marginal-preserving SDE construction to streaming flow 
policies~\cite{jiang2025streaming} would extend FKC-style guidance to 
the low-latency regime required for reactive closed-loop control.

%% file: Appendix/related_work.tex
\appsection{Related Work}

\paragraph{Generative Robot Policies}
Imitation learning has progressively shifted from deterministic behavior 
cloning to expressive generative models. Diffusion-based 
policies~\cite{janner2022diffuser, chi2024diffusionpolicy, streamingDP} 
have demonstrated strong performance by representing multimodal action 
distributions, while more recent flow-matching 
formulations~\cite{lipman2023flow,fmPolicy2,jiang2025streaming} retain 
this expressivity while reducing inference latency through deterministic 
transport. Our method is agnostic to which of these is used as the base 
policy.

\paragraph{Inference-Time Guided Generation}
A growing body of work seeks to enforce constraints on pretrained 
generative policies at inference time. \emph{Post-hoc} approaches 
project or filter generated trajectories using trajectory 
optimization~\cite{CHOMP,trajopt} or control-theoretic safety 
filters~\cite{cbf_intV}, but decouple generation from correction and 
often produce out-of-distribution outputs. \emph{Generative guidance} 
methods bias the sampling process directly, via classifier-based 
guidance~\cite{dhariwal2021diffusion,xiao2023safediffuser}, 
projection~\cite{DPwConstarints25}, reflected 
dynamics~\cite{lou2023reflected,liu2023mirror}, or progressive 
barriers~\cite{mishra2025eb}. Closest to our work, 
JM2D~\cite{jung2025joint} formulates constrained planning as joint 
sampling over a model-free diffusion variable and an auxiliary 
model-based variable, using an interaction potential to encourage mutual 
compatibility. While JM2D is a principled step beyond projection, its 
target distribution is defined over an augmented pair $(x, k)$, and its 
practical sampler relies on a Monte Carlo approximation of the joint 
score together with a final model-based repair step for infeasible 
samples. In contrast, we formulate constrained generation directly as 
posterior sampling over the trajectory distribution learned by the 
pretrained policy, with an explicit Feynman-Kac density evolution and 
weighted SDE sampler. This avoids introducing an auxiliary model-based 
variable or a post-hoc projection operator, and applies uniformly to 
both diffusion and flow policies.

ITPS~\cite{wang2025inferencetimepolicysteeringhuman} steers diffusion 
policies via human interactions (points, sketches, physical corrections) 
and introduces two diffusion-specific samplers: Guided Diffusion (GD), 
which is exactly the linear-combination update  
and therefore samples from the \emph{sum} rather than the product of 
the policy and objective distributions, and Stochastic Sampling (SS), 
which targets the product via annealed unadjusted Langevin (ULA) MCMC
by running $M$ inner Langevin steps at \emph{every} denoising noise 
level---a procedure that multiplies the cost of each reverse step 
by $M$ and is therefore expensive in practice. 
OmniGuide~\cite{song2026omniguideuniversalguidancefields} aggregates 
collision, semantic, and human-demonstration cues into attractive/repulsive 
energy fields (\(\mathcal E\)) for VLAs via the same 
linear-combination update $v(x) + \lambda \nabla \mathcal{E}(x)$, 
and therefore inherits GD's sum-of-distributions limitation. 
In contrast, our Feynman--Kac weighted SDE carries explicit importance 
weights derived from the Fokker--Planck PDE of the cost-tilted 
posterior, requires only a single network evaluation per reverse 
step (the $K$ FK particles are simulated in parallel on GPU), 
and extends uniformly to both diffusion and flow-matching 
policies via the marginal-preserving SDE recast of \cref{app:flow_to_sde}.

\paragraph{Feynman-Kac Methods for Diffusion Models}
Feynman-Kac correctors~\cite{skreta2025fkc} were recently introduced as 
a principled framework for inference-time control of diffusion models, 
with applications to annealing, classifier-free guidance, and product 
of experts. Sequential Monte Carlo techniques have similarly been used 
for reward-guided generation~\cite{uehara2024understanding}, conditional 
generation~\cite{wu2024practical}, and inverse 
problems~\cite{cardoso2024monte} in diffusion models. Our work brings 
this machinery to constrained imitation learning for robotics, and 
extends it to flow-matching policies via the marginal-preserving 
SDE construction of \citet{singh2024stochastic}.
Singhal et al.~\cite{singhal2025generalframeworkinferencetimescaling}
concurrently introduce \emph{Feynman--Kac steering}, a related but distinct 
framework that uses discrete-time FK potentials together with SMC-style 
particle resampling for reward-guided inference in image and text 
diffusion models; in contrast, we adopt the continuous-time 
FK PDE formulation, derive a weighted SDE with an explicit cost-aware 
drift correction, and target constrained robot trajectory generation 
across both diffusion and flow-matching policies.

%% file: Appendix/background.tex
\appsection{Additional Background}
\label{app:add_background}

\subsection{Transport Equation (for Densities)}
While \cref{eq:denoising} describes a procedure for simulating individual 
trajectories, we can also characterize the time-evolution of the entire 
distribution of samples $p_t(x)$\footnote{Throughout this Appendix, 
we overload the notation \(p_t(\cdot)\) to refer to any generic marginal 
distribution. It may also denote the marginal posterior distribution 
from the main text, but this will be explicitly mentioned whenever 
we refer to the posterior.} using Partial Differential Equations (PDEs). 
We describe three fundamental PDEs that will be central to our development, 
each capturing a distinct mechanism by which a density can evolve over time.

\textbf{(1) Continuity Equation.} When samples evolve according to a 
deterministic flow with drift $v_t$, the corresponding density evolves 
according to the continuity equation:
\begin{align}
    dx_t = v_t(x_t)dt \implies \deriv{p_t^{\text{ode}}(x)}{t} = 
    -\inner{\nabla}{p^{\text{ode}}_t(x)v_t(x)}\,,
\label{eq:continuity}
\end{align}
where $p_t^{\text{ode}}$ denotes the density evolving purely under the 
deterministic flow. Intuitively, this equation states that probability mass 
is transported by the drift $v_t$ without any creation or destruction.

\looseness=-1
\textbf{(2) Diffusion Equation.} When samples undergo pure Brownian motion 
with diffusion coefficient $\sigma_t$, the density evolves according to:
\begin{align}
    dx_t = \sigma_t dW_t \implies \deriv{p_t^{\text{diff}}(x)}{t} = 
    \frac{\sigma^2_t}{2}\Delta p_t^{\text{diff}}(x)\,,
\label{eq:diffusion}
\end{align}
where $p_t^{\text{diff}}$ denotes the density evolving purely under the 
stochastic diffusion term. This is simply a heat equation, describing how 
stochastic noise progressively spreads probability mass.

\looseness=-1
Since the denoising SDE in \cref{eq:denoising} combines both a deterministic 
drift and a stochastic diffusion term, the joint evolution of its density is 
governed by the \emph{Fokker-Planck Equation}, which is the superposition 
of the two mechanisms above:
\begin{align}
    \deriv{p_t^{\text{sde}}(x)}{t} = -\inner{\nabla}{p_t^{\text{sde}}(x)
    v_t(x)} + \frac{\sigma_t^2}{2}\Delta p_t^{\text{sde}}(x)\,.
\label{eq:fpe}
\end{align}

\looseness=-1
However, the Fokker-Planck equation alone is insufficient for our purposes: 
we additionally require a mechanism to \textit{reweight} samples, which will 
be key to simulating a sequence of marginals other than those induced by the 
forward noising process $p_{1-\tau}$ (see \cref{sec:resampling}). This motivates 
the third type of PDE.

\looseness=-1
\textbf{(3) Reweighting Equation.} When samples carry time-dependent 
log-weights $w_t$ that are updated as a function of sample positions $x_t$, 
the resulting weighted density evolves as:
\begin{equation}
    dw_t = \bar{g}_t(x_t)dt \implies \deriv{p_t^w(x)}{t} = 
    \bar{g}_t(x)p_t^w(x)\,, \quad
    \text{where}~~\bar{g}_t(x) = g_t(x) - \int g_t(x)p_t^w(x)\, dx\,,
\label{eq:reweighting}
\end{equation}
where the centering term in $\bar{g}_t$ ensures that the normalization 
constant is conserved, i.e. $\int \bar{g}_t(x)p_t^w(x)\,dx = 0$.

\textbf{Feynman-Kac Formula.} Combining all three mechanisms --- 
deterministic drift, stochastic diffusion, and sample reweighting --- yields 
the \textit{Feynman-Kac PDE}, which governs the evolution of the weighted 
density $p_t^{\textsc{fk}}$:
\begin{equation}
    \deriv{p_t^{\textsc{fk}}(x)}{t} =
    -\inner{\nabla}{p_t^{\textsc{fk}}(x)v_t(x)} 
    + \frac{\sigma_t^2}{2}\Delta p_t^{\textsc{fk}}(x) + \bar{g}_t(x)p_t^{\textsc{fk}}(x)\,.
\label{eq:fk_pde_appendix}
\end{equation}
To draw samples from $p_t^{\textsc{fk}}$, one simulates the following 
weighted SDE:
\begin{align}
    dx_t = v_t(x_t)dt + \sigma_t dW_t\,,\quad dw_t = \bar{g}_t(x_t)\,dt\,,
\label{eq:weighted_sde_app}
\end{align}
and then reweights the resulting samples using $w_t$. Crucially, 
$p_t^{\textsc{fk}}(x)$ reflects the density of \textit{weighted} samples, 
which in general differs from the unweighted density $p_t^{\text{sde}}(x)$ 
governed by the Fokker-Planck PDE in \cref{eq:fpe} --- the distinction 
arising precisely from the additional reweighting term $\bar{g}_t$.

\subsection{Equivalences Between Transport, Diffusion, and Reweighting}
\label{sec:pde-equivalences}

The three PDE mechanisms introduced above---continuity, diffusion, and
reweighting---are less distinct than their separate formulations suggest:
provided one has access to the (any) exact score $\nabla \log p_t$, each may be
re-expressed in terms of the others, yielding alternative simulation
schemes that share the same marginal densities. We record the three
relevant identities here, since they underpin both the recasting of the
flow ODE as a marginal-preserving SDE in \cref{app:flow_to_sde} and the
score-imputation step used in the proof of \cref{thm:cost_guided_sde} 
(\cref{thm:gibbs_tilted_full}).

\paragraph{Diffusion as continuity}
Pure Brownian diffusion can be reinterpreted as deterministic transport
under a score-based drift:
\begin{align}
    \deriv{p_t(x)}{t}
    &= \frac{\sigma_t^2}{2}\,\Delta p_t(x)
    = -\inner{\nabla}{p_t(x)\!\left(-\tfrac{\sigma_t^2}{2}\nabla\log p_t(x)\right)} \nonumber \\
    &\implies\quad dx_t = -\tfrac{\sigma_t^2}{2}\nabla\log p_t(x_t)\,dt.
    \label{eq:diffusion_as_continuity}
\end{align}
This is exactly the probability-flow ODE associated with the diffusion
equation: integrating \cref{eq:diffusion_as_continuity} deterministically
yields the same marginals $p_t$ as simulating $dx_t = \sigma_t\,dW_t$.

\paragraph{Continuity as reweighting}
Conversely, deterministic transport can be absorbed entirely into a
reweighting of stationary particles, by factoring $p_t$ out of the
continuity equation,
\begin{align}
    \deriv{p_t(x)}{t}
    &= -\inner{\nabla}{p_t(x)v_t(x)}
    = \left(-\frac{1}{p_t(x)}\inner{\nabla}{p_t(x)v_t(x)}\right) p_t(x) \nonumber \\
    &\implies\quad
    dw_t = -\!\left(\inner{\nabla}{v_t(x_t)} + \inner{\nabla\log p_t(x_t)}{v_t(x_t)}\right) dt.
    \label{eq:continuity_as_rw}
\end{align}
Under this scheme, particle positions $x_t$ are held fixed and the entire
density change is carried by the log-weights $w_t$.

\paragraph{Diffusion as reweighting}
The same factoring, combined with the identity
$\Delta p_t = p_t \bigl(\Delta \log p_t + \|\nabla \log p_t\|^2\bigr)$,
shows that diffusion can also be folded directly into the weights:
\begin{align}
    \deriv{p_t(x)}{t}
    &= \tfrac{\sigma_t^2}{2}\,\Delta p_t(x)
    = \tfrac{\sigma_t^2}{2}\, p_t(x)\!\left(\Delta \log p_t(x) + \|\nabla \log p_t(x)\|^2\right) \nonumber \\
    &\implies\quad
    dw_t = \tfrac{\sigma_t^2}{2}\!\left(\Delta \log p_t(x_t) + \|\nabla \log p_t(x_t)\|^2\right) dt.
    \label{eq:diffusion_as_rw}
\end{align}

Taken together, \cref{eq:continuity_as_rw,eq:diffusion_as_rw} provide a
constructive recipe for converting an arbitrary drift $v_t$ or diffusion
coefficient $\sigma_t$ into an equivalent reweighting term, given access
to the score $\nabla \log p_t$. This interchangeability is precisely
what allows the Feynman--Kac corrector framework to compensate for any
mismatch between the simulated dynamics and the target density evolution
through the weight update, rather than requiring exact agreement in the
drift and diffusion.

%% file: Appendix/flow_to_diffusion.tex
\appsection{From Deterministic Flow to Equivalent SDE}
\label{app:flow_to_sde}

In this appendix we provide complete details on how a deterministic flow 
matching policy can be simulated stochastically while preserving its 
sequence of marginal densities. Our exposition follows 
\citet{singh2024stochastic}.

\subsection{The General Setup}

Consider a generic Itô SDE of the form
\begin{equation}
dx = f_t(x)\,dt + G_t(x)\,dW_t,
\label{eq:app_general_sde}
\end{equation}
where $f_t(x) : \mathbb{R}^d \times [0, 1] \to \mathbb{R}^d$ is the drift, 
$G_t(x) : \mathbb{R}^d \times [0, 1] \to \mathbb{R}^{d \times d}$ is the 
state- and time-dependent diffusion coefficient, and $W_t$ is the standard 
Wiener process. The time-evolution of the marginal density $p_t(x)$ of 
solutions of \cref{eq:app_general_sde} is described by the 
Fokker-Planck-Kolmogorov (FPK) equation
\begin{equation}
\frac{\partial p_t}{\partial t} 
= -\nabla \cdot [f\, p_t] 
+ \tfrac{1}{2}\, \nabla \cdot \big(\nabla \cdot [GG^\top p_t]\big).
\label{eq:app_fpk}
\end{equation}
Note that a deterministic ODE $dx = v_t(x)\,dt$ corresponds to the special 
case of \cref{eq:app_general_sde} with $f \equiv v$ and $G \equiv 0$, in 
which case \cref{eq:app_fpk} reduces to the continuity equation 
$\partial_t p_t = -\nabla \cdot [v\,p_t]$.

\subsection{A Family of SDEs Sharing the Same Marginals}

The central observation is that the FPK equation in \cref{eq:app_fpk} does 
not uniquely determine the drift and diffusion. Many different choices of 
drift and diffusion can yield the same marginal density evolution. The 
following theorem makes this explicit.

\begin{theorem}[Theorem 1 of \citep{singh2024stochastic}]
\label{thm:singh_general}
Let $p_t(x)$ be the probability density of solutions of the SDE in 
\cref{eq:app_general_sde}, evolving according to \cref{eq:app_fpk}. Then 
for arbitrary functions $\tilde{G_t}(x) : \mathbb{R}^d \times [0,1] \to 
\mathbb{R}^{d\times d}$ and $\gamma_t \geq 0$, the SDE
\begin{equation}
dx = \bar{f_t}(x)\,dt + \bar{G_t}(x)\,dW_t 
\label{eq:app_singh_sde}
\end{equation}
admits the same marginal density $p_t(x)$, where
\begin{align}
\bar{f} &= f - \tfrac{1}{2}\Big( \nabla \cdot [(1-\gamma_t)GG^\top - 
\tilde{G}\tilde{G}^\top] + [(1-\gamma_t)GG^\top - \tilde{G}\tilde{G}^\top]
\nabla \log p_t \Big), \label{eq:app_drift_general}\\
\bar{G} &= \big[\gamma_t GG^\top + \tilde{G}\tilde{G}^\top\big]^{1/2},
\label{eq:app_diff_general}
\end{align}
provided the resulting SDE admits a unique solution.
\end{theorem}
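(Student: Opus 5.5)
The plan is to verify directly that the marginal density $p_t$ of the original SDE in \cref{eq:app_general_sde} also solves the Fokker--Planck--Kolmogorov equation \cref{eq:app_fpk} written with the new coefficients $(\bar f,\bar G)$ from \cref{eq:app_drift_general,eq:app_diff_general}. Writing $D := GG^\top$ and $\bar D := \bar G\bar G^\top = \gamma_t D + \tilde G\tilde G^\top$ (the square root is taken in the positive semidefinite sense, so $\bar D$ is exactly this sum), the FPK operator for the new SDE reads
\begin{equation*}
\mathcal{A}[p] := -\nabla\cdot[\bar f\,p] + \tfrac12\,\nabla\cdot\big(\nabla\cdot[\bar D\,p]\big).
\end{equation*}
The goal is to show that $\mathcal{A}[p_t]$ equals the original right-hand side $-\nabla\cdot[f p_t] + \tfrac12\nabla\cdot(\nabla\cdot[D p_t])$. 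Once this holds, $p_t$ solves both PDEs from the same initial condition. Assuming the new SDE is well posed (the stated proviso), uniqueness of solutions to the linear FPK equation then identifies the law of the new SDE's solution with $p_t$.

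The key algebraic step is the identity, valid for any symmetric matrix field $M$ and positive density $p$:
\begin{equation*}
\nabla\cdot[M p] = (\nabla\cdot M)\,p + M\nabla p = p\big(\nabla\cdot M + M\nabla\log p\big),
\end{equation*}
where $\nabla\cdot M$ denotes the row-wise divergence. Set $M := (1-\gamma_t)D - \tilde G\tilde G^\top$, so that $\bar D = D - M$ and $\bar f = f - \tfrac12 p_t^{-1}\nabla\cdot[M p_t]$. Substituting into $\mathcal{A}[p_t]$ gives two pieces. The drift contributes
\begin{equation*}
-\nabla\cdot[f p_t] + \tfrac12\nabla\cdot(\nabla\cdot[M p_t]),
\end{equation*}
and the diffusion contributes
\begin{equation*}
\tfrac12\nabla\cdot(\nabla\cdot[D p_t]) - \tfrac12\nabla\cdot(\nabla\cdot[M p_t]).
\end{equation*}
The two $M$-terms cancel, leaving exactly the original FPK right-hand side. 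The remaining steps are bookkeeping: confirm the index convention for $\nabla\cdot$ on matrices, and check that $\bar D$ is symmetric PSD for $\gamma_t\ge0$, which is what makes $\bar G$ well defined.

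I expect the main obstacle to be rigor rather than algebra. The cancellation needs $p_t>0$ and regular enough for $\nabla\log p_t$ to exist and be integrated against. Passing from "solves the same PDE" to "same marginals" needs uniqueness of the FPK Cauchy problem, which the statement effectively assumes through its proviso. I will state these as standing regularity assumptions, following \citet{singh2024stochastic}.

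Specializing afterwards recovers \cref{eq:flow_sde}. Take the ODE with $f=-v$, $G=0$, and choose $\tilde G=\tilde\sigma(t) I$. Then $M=-\tilde\sigma^2 I$, the drift becomes $\bar f=-v+\tfrac{\tilde\sigma^2}{2}\nabla\log q_t$, and the diffusion is $\bar G=\tilde\sigma I$.
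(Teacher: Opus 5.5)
Your proposal is correct and is essentially the standard argument. The paper gives no proof of its own and defers to \citet{singh2024stochastic}, whose proof is this same direct substitution of $(\bar f,\ \bar G\bar G^\top = D - M)$ into the FPK equation: the identity $\bar f\,p_t = f\,p_t - \tfrac12\nabla\cdot[M p_t]$ makes the two $\nabla\cdot(\nabla\cdot[M p_t])$ terms cancel.

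The one step to sharpen is the last one. Well-posedness of the new SDE does not by itself give uniqueness for its FPK equation. The bridge is the superposition principle: a probability-density solution of the FPK equation with $\int_0^1\!\int(|\bar f|+|\bar G\bar G^\top|)\,p_t\,dx\,dt<\infty$ is the marginal flow of some weak solution of the SDE. Combined with uniqueness in law, this closes exactly that gap.
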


\Cref{thm:singh_general} provides a recipe for constructing entire 
families of SDEs that share the same marginal densities as a given SDE or 
ODE. The free parameters are: (i) the auxiliary diffusion 
$\tilde{G_t}(x)$, which injects additional stochasticity not present in 
the original process; and (ii) the scalar $\gamma_t \geq 0$, which scales 
the contribution of the original diffusion $G$. Setting $\gamma_t = 1$ 
and $\tilde{G} = 0$ trivially recovers the original SDE, while 
$\gamma_t = 0$ and $\tilde{G} = 0$ produces the corresponding probability 
flow ODE (with diffusion replaced by a score-based drift).

\subsection{Specialization to Deterministic Flow}

The case of interest in our work is the deterministic flow 
$dx_t = -v_t(x_t)\,dt$, which corresponds to taking $f_t(x) = -v_t(x)$ 
and $G \equiv 0$ in \cref{eq:app_general_sde}. Substituting these into 
\Cref{thm:singh_general} eliminates all $G$-dependent terms, leaving 
freedom only in the auxiliary diffusion $\tilde{G}$. Choosing $\tilde{G} 
= \tilde{\sigma}(t) I$ for a scalar function $\tilde{\sigma_t} \geq 0$ 
(i.e., isotropic, state-independent noise) yields the following 
specialization.

\begin{corollary}[Cor. 1.2 of \citep{singh2024stochastic}]
\label{cor:flow_to_sde}
Let $\{q_t(x)\}_{t \in [0,1]}$ denote the marginal densities induced by 
the deterministic flow $dx_t = -v_t(x_t)\,dt$. Then for any scalar-valued 
function $\tilde{\sigma_t} \geq 0$, the SDE
\begin{equation}
dx_t = \left( -v_t(x_t) + \tfrac{\tilde{\sigma_t}^2}{2}\nabla \log 
q_t(x_t) \right)dt + \tilde{\sigma_t}\,dW_t 
\label{eq:app_flow_sde}
\end{equation}
admits the same marginal densities $\{q_t(x)\}_{t\in[0,1]}$.
\end{corollary}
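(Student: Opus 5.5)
We plan to derive \cref{cor:flow_to_sde} directly from \cref{thm:singh_general}. Take the deterministic flow $dx_t=-v_t(x_t)\,dt$ as the base process, so that in \cref{eq:app_general_sde} we set $f_t=-v_t$ and $G\equiv 0$. Its marginals are $q_t$. Now choose $\gamma_t=0$ (the value is in fact irrelevant once $G=0$) and $\tilde G_t=\tilde\sigma_t I$.

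Substituting into \cref{eq:app_diff_general} gives
\[
\bar G=[\tilde\sigma_t^2 I]^{1/2}=\tilde\sigma_t I,
\]
using $\tilde\sigma_t\ge 0$ to take the nonnegative square root. In \cref{eq:app_drift_general}, the matrix inside the bracket is $-\tilde\sigma_t^2 I$. Its row-wise divergence $\nabla\cdot(-\tilde\sigma_t^2 I)$ vanishes, because the entries do not depend on $x$. What remains is
\[
\bar f=-v_t-\tfrac12\bigl(-\tilde\sigma_t^2\nabla\log q_t\bigr)=-v_t+\tfrac{\tilde\sigma_t^2}{2}\nabla\log q_t.
\]
This is exactly the drift in \cref{eq:app_flow_sde}, so the theorem yields equality of marginals.

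As a sanity check independent of the cited theorem, we will also verify the claim directly with the Fokker--Planck equation \cref{eq:fpe}. The SDE \cref{eq:app_flow_sde} has the FPE
\[
\partial_t p=-\nabla\cdot\bigl(p(-v_t+\tfrac{\tilde\sigma_t^2}{2}\nabla\log q_t)\bigr)+\tfrac{\tilde\sigma_t^2}{2}\Delta p .
\]
Substitute $p=q_t$. Since $q_t\nabla\log q_t=\nabla q_t$, the score term becomes $-\tfrac{\tilde\sigma_t^2}{2}\Delta q_t$ and cancels the Laplacian term. This is the ``diffusion as continuity'' identity \cref{eq:diffusion_as_continuity}. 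What is left is the continuity equation $\partial_t q_t=\nabla\cdot(q_t v_t)$, which $q_t$ satisfies by definition as the marginals of $dx=-v_t\,dt$. So $q_t$ solves the FPE of the new SDE. Both processes start from the same initial law, so uniqueness of the FPE solution gives equality of the marginals for all $t$.

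The main obstacle is this uniqueness step, together with well-posedness of the SDE. This is where the hypothesis ``provided the resulting SDE admits a unique solution'' in \cref{thm:singh_general} is needed. The score $\nabla\log q_t$ may be singular or non-Lipschitz, for example near $t=0$ if $p_{\text{data}}$ is degenerate. We plan to inherit that proviso, assuming enough regularity (locally Lipschitz drift with linear growth, so that the Fokker--Planck equation has a unique solution) rather than proving it.
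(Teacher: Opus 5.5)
Your proposal is correct and follows essentially the paper's route: the paper obtains this corollary by substituting $f_t=-v_t$, $G\equiv 0$ and $\tilde G=\tilde\sigma_t I$ into \cref{thm:singh_general}. The divergence term then drops, giving $\bar f=-v_t+\tfrac{\tilde\sigma_t^2}{2}\nabla\log q_t$ and $\bar G=\tilde\sigma_t I$, and the paper inherits the same well-posedness proviso you flag. Your supplementary Fokker--Planck check is the ``diffusion as continuity'' cancellation of \cref{eq:diffusion_as_continuity}, which the paper invokes only informally after the statement, as the balance between Brownian spreading and the score-based drift.
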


The diffusion term $\tilde{\sigma_t} dW_t$ in \cref{eq:app_flow_sde} 
spreads probability mass via Brownian motion, while the additional term 
$\tfrac{\tilde{\sigma_t}^2}{2}\nabla \log q_t$ in the drift is a 
score-based correction that exactly compensates for this spreading: it 
transports probability mass back toward high-density regions at the rate 
required to leave the marginals $q_t(x)$ unchanged. The two effects are 
in precise balance, so although individual sample trajectories differ 
between the ODE and the SDE, their distributions at every time $t$ remain 
identical.

The function $\tilde{\sigma_t}$ is a free design choice. Setting 
$\tilde{\sigma} \equiv 0$ recovers the original deterministic flow, while 
larger values of $\tilde{\sigma}$ inject more stochasticity into the 
simulation. Common choices from \citet{singh2024stochastic} include:
\begin{itemize}
    \item \textbf{Constant:} $\tilde{\sigma_t} = \alpha$, where 
    $\alpha \geq 0$ controls the overall noise scale.
    \item \textbf{Non-singular:} $\tilde{\sigma_t} = \alpha\sqrt{t}$, 
    which avoids singularities at the data endpoint $t=0$.
    \item \textbf{Zero-ends:} $\tilde{\sigma_t} = \alpha\sqrt{t(1-t)}$, 
    which vanishes at both endpoints.
\end{itemize}

\subsection{Imputing the Score from the Velocity Field}

Simulating \cref{eq:app_flow_sde} requires evaluating the score function 
$\nabla \log q_t(x)$ of the flow's marginals. Naively, this would require 
training a separate score network. However, for the standard rectified 
flow setup with Gaussian source $p_{\text{src}} = \mathcal{N}(0, I_d)$ and 
the linear interpolation $x_t = (1-t)\,x_{\text{data}} + t\,x_{\text{src}}$, 
the score admits a closed-form expression in terms of the learned velocity 
field itself.

\begin{proposition}[Score from velocity, \citep{singh2024stochastic}]
\label{prop:score_from_velocity}
For the rectified flow setup with Gaussian source 
$p_{\text{src}} = \mathcal{N}(0, I_d)$ and the linear interpolation, the 
score of the marginals satisfies
\begin{equation}
\nabla \log p_t(x) = -\frac{(1-t)\,v_t(x) + x}{t}, \quad t \in (0, 1].
\label{eq:flow_score}
\end{equation}
\end{proposition}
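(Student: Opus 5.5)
The plan is to derive \cref{eq:flow_score} from a Tweedie-type identity for the Gaussian component of the interpolation. The identity gives the score as a conditional expectation of the source sample, and we then eliminate that conditional expectation using the characterization $v_t(x)=\mathbb{E}[x_{\text{src}}-x_{\text{data}}\mid x_t=x]$ from \cref{sec:rect_flow}. Throughout I assume the independent coupling $\nu=p_{\text{data}}\otimes p_{\text{src}}$ and that $v_t$ is the exact minimizer of \cref{eq:rectflow_obj}. For a learned network the identity then holds only approximately.

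\emph{Step 1 (conditional Gaussian structure).} Fix $t\in(0,1]$. Under independent coupling, conditioning on $x_{\text{data}}=x_0$ gives $x_t\mid x_0\sim\mathcal{N}\big((1-t)x_0,\,t^2 I_d\big)$. Hence the marginal is the Gaussian mixture
\[
p_t(x)=\int \mathcal{N}\big(x;(1-t)x_0,t^2I_d\big)\,p_{\text{data}}(x_0)\,dx_0 .
\]
This density is smooth and strictly positive for $t>0$.

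\emph{Step 2 (Tweedie / Fisher identity).} Differentiate under the integral sign, which is justified for $t>0$ because the Gaussian kernel and its gradient are dominated uniformly on compact sets. Dividing by $p_t(x)$ gives
\[
\nabla\log p_t(x)=\mathbb{E}\!\left[-\frac{x-(1-t)x_{\text{data}}}{t^2}\,\Big|\,x_t=x\right].
\]
On the event $x_t=x$ we have $x-(1-t)x_{\text{data}}=t\,x_{\text{src}}$, so $\nabla\log p_t(x)=-\tfrac1t\,\mathbb{E}[x_{\text{src}}\mid x_t=x]$.

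\emph{Step 3 (solve for the conditional mean).} Take conditional expectations of the interpolation to get $x=(1-t)\,\mathbb{E}[x_{\text{data}}\mid x]+t\,\mathbb{E}[x_{\text{src}}\mid x]$. The velocity characterization gives $\mathbb{E}[x_{\text{data}}\mid x]=\mathbb{E}[x_{\text{src}}\mid x]-v_t(x)$. Substituting yields $x=\mathbb{E}[x_{\text{src}}\mid x]-(1-t)v_t(x)$, i.e. $\mathbb{E}[x_{\text{src}}\mid x_t=x]=x+(1-t)v_t(x)$. Inserting this into Step 2 gives exactly \cref{eq:flow_score}.

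The only delicate point is Step 2: the interchange of gradient and integral, together with writing the score as a posterior expectation. This is where $t>0$ is needed, since at $t=0$ the marginal is $p_{\text{data}}$, which need not have a density. I would handle it by dominated convergence, using the bound $\|\nabla_x\mathcal{N}(x;(1-t)x_0,t^2I)\|\le C_t(1+\|x\|+\|x_0\|)\,\mathcal{N}(\cdot)$. If one prefers to avoid moment assumptions on $p_{\text{data}}$, the same bound localized to $x$ in a compact set suffices, because the Gaussian factor decays in $x_0$ whenever $t<1$. The case $t=1$ is trivial, since there $p_1=\mathcal{N}(0,I)$ and the formula reduces to $-x$. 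The remaining steps are linear algebra on conditional expectations.
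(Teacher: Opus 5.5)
Your proposal is correct and takes essentially the same route as the paper. Both write the score via the denoising-score-matching (Tweedie) identity as the posterior mean of the Gaussian conditional score $-(x-(1-t)x_{\text{data}})/t^2$, then eliminate the conditional mean linearly using the interpolation and $v_t(x)=\mathbb{E}[x_{\text{src}}-x_{\text{data}}\mid x_t=x]$; you solve for $\mathbb{E}[x_{\text{src}}\mid x_t=x]=x+(1-t)v_t(x)$ where the paper uses $\mathbb{E}[x_{\text{data}}\mid x_t=x]=x-t\,v_t(x)$, and you are more careful than the paper, which drops the conditional expectation in that relation and leaves the independent-coupling, exact-minimizer, and derivative-interchange assumptions implicit.
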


The derivation begins from denoising score matching, which expresses the 
marginal score as an expectation of the conditional score under the 
posterior $p(x_{\text{data}} \mid x_t)$:
\begin{equation}
\nabla \log p_t(x_t) = \mathbb{E}_{p(x_{\text{data}} \mid x_t)}\!\left[
\frac{\partial \log p_t(x_t \mid x_{\text{data}})}{\partial x_t} \right].
\end{equation}
For the linear interpolation $x_t = (1-t)\,x_{\text{data}} + t\,x_{\text{src}}$ 
with $x_{\text{src}} \sim \mathcal{N}(0, I_d)$, the conditional density 
$p_t(x_t \mid x_{\text{data}})$ is Gaussian with mean $(1-t)\,x_{\text{data}}$ 
and covariance $t^2 I_d$. The conditional score is therefore
\begin{equation}
\frac{\partial \log p_t(x_t \mid x_{\text{data}})}{\partial x_t} 
= -\frac{x_t - (1-t)\,x_{\text{data}}}{t^2}.
\end{equation}
Linearity of expectation, combined with the relation 
$x_{\text{data}} = x_t - t\,\mathbb{E}[x_{\text{src}} - x_{\text{data}} 
\mid x_t] = x_t - t\,v_t(x_t)$ that follows from the interpolation, 
yields \cref{eq:flow_score} after simplification.

The practical upshot of \cref{prop:score_from_velocity} is that the 
pretrained flow model carries all the information needed to simulate 
\cref{eq:app_flow_sde}: both the drift and the score can be evaluated 
through a single forward pass of $v_t(x)$, with no auxiliary score 
network required.

%% file: Appendix/resampling.tex
\appsection{Resampling Methods}
\label{sec:resampling}

In this section, we describe several options for utilizing the FKC weights 
to improve sampling with a batch of $K$ trajectories. While the simplest 
technique would be to simulate the weighted SDE in 
\cref{thm:cost_guided_sde} for $K$ independent trajectories across the full time 
interval $t \in [0,1]$ and reweight using SNIS in \cref{eq:snis}, we 
expect these full-trajectory weights to have high variance in practice 
due to error accumulation along the path. Therefore, we the resampling methods recommended
by~\citet{skreta2025fkc} for the FKC framework.

\paragraph{Sequential Monte Carlo}
Since our weights provide a proper weighting scheme for all intermediate 
distributions \citep{naesseth2019elements}, we can leverage SMC techniques 
which reweight trajectories along their simulation. In practice, we find 
that resampling only over an `active interval' $t \in [t_{\text{min}}, 
t_{\text{max}}]$ is useful for improving sample quality and preserving 
diversity, and we set weights to zero outside of this interval. Within 
the active interval, we resample at each step based on the increment 
$w_t^{(k)} = g_t(x_t^{(k)})\,dt$, using systematic sampling proportional 
to $\exp\{w_t^{(k)}\}$ \citep{douc2005comparison}. For small 
discretizations $dt$, we expect relatively low-variance weights. From 
this perspective, systematic resampling is an attractive selection 
mechanism as all trajectories are preserved in the case of uniform 
weights.

\paragraph{Jump Process Interpretation of Reweighting}
By reframing the reweighting equation in terms of a Markov jump process 
\citep[Ch. 4.2]{ethier2009markov}, a variety of further simulation 
algorithms for Feynman-Kac PDEs become available 
\citep[Ch. 1.2.2, 5]{del2013mean}, \citep{rousset2006equilibrium, 
angeli2020interacting}. A Markov jump process is determined by a rate 
function $\lambda_t(x)$, which governs the frequency of jump events, and 
a Markov transition kernel $J_t(y \mid x)$, which is used to sample the 
next state when a jump occurs. The forward Kolmogorov equation for a 
jump process is given by
\begin{equation}
    \frac{\partial p_t^{\text{jump}}(x)}{\partial t} 
    = \left( \int \lambda_t(y)\, J_t(x \mid y)\, p_t(y)\, dy \right) 
    - p_t(x)\, \lambda_t(x),
\end{equation}
where the two terms can intuitively be seen to measure the inflow and 
outflow of probability due to jumps.

Our goal is to find $\lambda_t(x)$ and $J_t(y \mid x)$ such that 
$p_t^{\text{jump}}$ matches the evolution of $p_t^w$ in 
\cref{eq:reweighting} for a given choice of $g_t$. There are many 
possible jump processes which satisfy this property 
\citep[Ch. 5]{del2013mean}, \citep{angeli2019rare}; we adopt the 
particular choice from \citet{skreta2025fkc}.

\begin{proposition}[\citep{skreta2025fkc}]
\label{prop:jump_process}
For a given $g_t$ in \cref{eq:reweighting}, define the jump process rate 
and transition as
\begin{align}
    \lambda_t(x) &= \big( g_t(x) - \mathbb{E}_{p_t}[g_t] \big)^-, \\
    J_t(y \mid x) &= \frac{\big( g_t(y) - \mathbb{E}_{p_t}[g_t] \big)^+ 
    p_t(y)}{\int \big( g_t(z) - \mathbb{E}_{p_t}[g_t] \big)^+ p_t(z)\, dz},
\end{align}
where $(u)^- := \max(0, -u)$ and $(u)^+ := \max(0, u)$. Then,
\begin{equation}
    \frac{\partial p_t^{\text{jump}}(x)}{\partial t} 
    = \frac{\partial p_t^w(x)}{\partial t} 
    = p_t(x) \big( g_t(x) - \mathbb{E}_{p_t}[g_t] \big),
\end{equation}
which matches \cref{eq:reweighting}.
\end{proposition}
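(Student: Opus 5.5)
The plan is a direct computation: substitute the proposed rate $\lambda_t$ and kernel $J_t$ into the forward Kolmogorov equation for the jump process, and simplify the inflow and outflow terms separately. Throughout I identify the density on the right-hand side of that equation with the current weighted density, $p_t^{\text{jump}} = p_t = p_t^w$ at time $t$. This is the self-consistency the proposition asserts: both generators are evaluated at the same current density. I abbreviate $\bar g_t(x) := g_t(x) - \mathbb{E}_{p_t}[g_t]$, so that $\lambda_t = \bar g_t^{-}$ and $J_t(x\mid y) \propto \bar g_t^{+}(x)\,p_t(x)$.

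The key step is to notice that the kernel $J_t(x \mid y)$ does not depend on the departure point $y$. The inflow term therefore factorizes:
\begin{equation*}
\int \lambda_t(y)\,J_t(x\mid y)\,p_t(y)\,dy
= \bar g_t^{+}(x)\,p_t(x)\cdot
\frac{\int \bar g_t^{-}(y)\,p_t(y)\,dy}{\int \bar g_t^{+}(z)\,p_t(z)\,dz}.
\end{equation*}
Next I use the centering identity $\int \bar g_t\,p_t\,dx = 0$ from \cref{eq:reweighting}. Writing $\bar g_t = \bar g_t^{+} - \bar g_t^{-}$ gives $\int \bar g_t^{+} p_t = \int \bar g_t^{-} p_t$, so the ratio above equals $1$. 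The inflow is therefore exactly $\bar g_t^{+}(x)\,p_t(x)$.

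The outflow term is $p_t(x)\,\lambda_t(x) = \bar g_t^{-}(x)\,p_t(x)$. Subtracting the outflow from the inflow gives
\begin{equation*}
\bar g_t^{+}(x)\,p_t(x) - \bar g_t^{-}(x)\,p_t(x) = \bar g_t(x)\,p_t(x) = p_t(x)\bigl(g_t(x)-\mathbb{E}_{p_t}[g_t]\bigr).
\end{equation*}
This is precisely the right-hand side of \cref{eq:reweighting}, which completes the argument. I will also check that $J_t(\cdot\mid y)$ is a genuine probability kernel: it is nonnegative and normalized by construction.

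The only delicate point is the degenerate case $\int \bar g_t^{+} p_t = 0$, where $J_t$ is undefined. In that case the centering identity forces $\int \bar g_t^{-} p_t = 0$ as well, so $\bar g_t = 0$ $p_t$-a.e. Then $\lambda_t = 0$ $p_t$-a.e., no jumps occur, and both sides of the identity vanish. Hence $J_t$ may be defined arbitrarily, and I will state this convention explicitly.

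Minor integrability assumptions, namely $g_t \in L^1(p_t)$ so that $\mathbb{E}_{p_t}[g_t]$ and the split integrals are finite, will be assumed as in \citet{skreta2025fkc}.
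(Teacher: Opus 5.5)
Your proof is correct and follows the standard argument. The paper states this proposition without proof and defers to \citet{skreta2025fkc}, whose verification is the same direct substitution into the forward Kolmogorov equation: $J_t$ does not depend on the departure point, and the centering gives $\int (g_t-\mathbb{E}_{p_t}[g_t])^+ p_t = \int (g_t-\mathbb{E}_{p_t}[g_t])^- p_t$, so inflow minus outflow is $(g_t-\mathbb{E}_{p_t}[g_t])\,p_t$. Your treatment of the degenerate case and of integrability are harmless extras.
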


In continuous time and the mean-field limit, this jump process 
formulation of reweighting corresponds to simulating
\begin{equation}
    x_{t+dt} = \begin{cases} 
    x_t & \text{w.p. } 1 - \lambda_t(x_t)\, dt + o(dt), \\
    \sim J_t(y \mid x_t) & \text{w.p. } \lambda_t(x_t)\, dt + o(dt).
    \end{cases}
    \label{eq:jump_simulation}
\end{equation}
We expect this process to improve the sample population efficiently, 
since jump events are triggered only in states where 
$( g_t(x) - \mathbb{E}_{p_t}[g_t])^- \geq 0 \implies g_t(x) \leq 
\mathbb{E}_{p_t}[g_t]$, and transitions are more likely to jump to states 
with high excess weight $( g_t(y) - \mathbb{E}_{p_t}[g_t])^+ > 0$. In 
other words, low-cost trajectories absorb low-weight ones, focusing 
computational effort on promising regions of the trajectory space.

In practice, we use the empirical approximation $p_t^K(z) = \tfrac{1}{K}
\sum_{k=1}^K \delta_z(x^{(k)})$ to estimate the jump rate $\lambda_t(x)$ 
and transition $J_t(y \mid x)$. Instead of simulating 
\cref{eq:jump_simulation} directly, one can also adopt an implementation 
based on birth-death `exponential clocks' \citep[Ch. 5.3-4]{del2013mean}.

%% file: Appendix/algo_discription.tex
\appsection{The BayesFP Algorithm}
\label{app:algorithm}

We collect the BayesFP sampler in one place as pseudocode (\cref{alg:bayesfp}). The procedure is identical for diffusion and flow-matching policies expect for the base drift and diffusion ($(f_t,\sigma_t)$) for the reverse-time diffusion SDE of \cref{eq:denoising} and the velocity $v_t$ and auxiliary diffusivity $\tilde\sigma_t$ of the marginal-preserving SDE recast of \cref{app:flow_to_sde}).

\begin{algorithm}[h]
\caption{BayesFP: Feynman--Kac sampling from the Gibbs-tilted posterior of \cref{eq:main_posterior}.}
\label{alg:bayesfp}
\begin{algorithmic}[1]
\Require pretrained policy: diffusion $(f_t,\sigma_t,\nabla\log q_t)$ \emph{or} flow $(v_t)$ with auxiliary diffusivity $\tilde\sigma_t$; cost $\mathcal{J}$; inverse-temperature schedule $\beta_t<0$; particles $K$; steps $N$; active interval $[t_{\min},t_{\max}]$; resampling period $R$
\State Draw $\{x^{(k)}\}_{k=1}^{K} \sim \mathcal{N}(0,I_d)$ and set $w^{(k)} \gets 0$
\For{$n = 0,\dots,N-1$}
  \State $t \gets t_n$, \quad $\Delta t \gets t_{n+1}-t_n$
  \For{$k=1,\dots,K$ \textbf{in parallel}}
    \State Evaluate $\nabla\mathcal{J}(x^{(k)})$
    \If{\textbf{diffusion policy}}
      \State $\mu^{(k)} \gets -f_t(x^{(k)}) + \sigma_t^2\,\nabla\log q_t(x^{(k)}) + \tfrac{1}{2}\beta_t\sigma_t^2\,\nabla\mathcal{J}(x^{(k)})$
      \State $\Delta w^{(k)} \gets \Big[\tfrac{\partial\beta_t}{\partial t}\mathcal{J}(x^{(k)}) + \Big\langle \beta_t\nabla\mathcal{J}(x^{(k)}),\, \tfrac{\sigma_t^2}{2}\nabla\log q_t(x^{(k)}) - f_t(x^{(k)})\Big\rangle\Big]\,\Delta t$
      \State $x^{(k)} \gets x^{(k)} + \mu^{(k)}\,\Delta t + \sigma_t\sqrt{\Delta t}\,\xi^{(k)}, \quad \xi^{(k)}\sim\mathcal{N}(0,I_d)$
    \ElsIf{\textbf{flow policy}}
      \State Impute the score: $\nabla\log q_t(x^{(k)}) \gets -\big((1-t)\,v_t(x^{(k)}) + x^{(k)}\big)/t$ \Comment{Prop.~\ref{prop:score_from_velocity}}
      \State $\mu^{(k)} \gets -v_t(x^{(k)}) + \tfrac{\tilde\sigma_t^2}{2}\nabla\log q_t(x^{(k)}) + \tfrac{1}{2}\beta_t\tilde\sigma_t^2\,\nabla\mathcal{J}(x^{(k)})$
      \State $\Delta w^{(k)} \gets \Big[\tfrac{\partial\beta_t}{\partial t}\mathcal{J}(x^{(k)}) - \beta_t\,\big\langle \nabla\mathcal{J}(x^{(k)}),\, v_t(x^{(k)})\big\rangle\Big]\,\Delta t$
      \State $x^{(k)} \gets x^{(k)} + \mu^{(k)}\,\Delta t + \tilde\sigma_t\sqrt{\Delta t}\,\xi^{(k)}, \quad \xi^{(k)}\sim\mathcal{N}(0,I_d)$
    \EndIf
  \EndFor
  \If{$t \in [t_{\min},t_{\max}]$}
    \State $\Delta w^{(k)} \gets \Delta w^{(k)} - \tfrac{1}{K}\sum_{j}\Delta w^{(j)}$ \Comment{center across the $K$ particles}
    \State $w^{(k)} \gets w^{(k)} + \Delta w^{(k)}$
    \If{$n \bmod R = 0$}
      \State Systematically resample $\{x^{(k)}\}$ with probabilities $\propto \exp(w^{(k)})$;\, reset $w^{(k)} \gets 0$
    \EndIf
  \EndIf
\EndFor
\State Draw $k^\star \sim \mathrm{Multinomial}\big(\mathrm{softmax}(w^{(1:K)})\big)$
\State \Return $x^{(k^\star)}$
\end{algorithmic}
\end{algorithm}

%% file: Appendix/proofs.tex
\appsection{Proofs}
\label{app:proofs}

In this appendix, we provide the technical details supporting the main theoretical claims used in the paper. The first result derives the Feynman--Kac weighted dynamics for sampling from the cost-tilted marginals introduced in the main text. This establishes that the proposed guidance procedure is not an ad hoc modification of the sampling dynamics, but instead corresponds to a well-defined density evolution whose weighted particle system targets the desired Gibbs-tilted distribution. The second result justifies the use of this tilted distribution for constrained optimization by showing that, under sufficiently large penalty weights and inverse temperature, its probability mass concentrates on feasible and near-optimal trajectories.

\subsection{Feynman-Kac PDEs for Gibbs Tiled Sampling}

We begin by deriving the weighted SDE associated with the Gibbs-tilted marginals. The goal is to start from a base process with marginals $q_t(x)$ and construct a Feynman--Kac evolution whose weighted density is proportional to $q_t(x)\exp(\beta_t J(x))$. The proof proceeds by differentiating the normalized tilted density, identifying the additional reweighting term required by the Feynman--Kac PDE, and then specializing the resulting expression to the reverse-time diffusion dynamics used by the pretrained generative policy. The following result is special case of \citet{skreta2025fkc} adapted to sample from the posterior distribution~\cref{eq:main_posterior}.

\begin{theorem}[Gibbs-Tilted Posterior Sampling for Constrained Trajectory Generation]
\label{thm:gibbs_tilted_full}
Let $q_t(x)$ denote the time-$t$ marginals of a pretrained diffusion policy
(or, via the marginal-preserving SDE recast of \cref{app:flow_to_sde}, a
pretrained rectified-flow policy), generated by the base SDE
\begin{align}
    dx_t = v_t(x_t)\,dt + \sigma_t\,dW_t ,
\end{align}
so that $q_t$ evolves according to the Fokker--Planck equation associated with
this process. Let $\mathcal{J}:\mathbb{R}^d \to \mathbb{R}$ be the
penalty-augmented cost of \cref{eq:agumented_cost_function} encoding the inference-time
constrained optimization problem~\cref{eq:optimzation_prob}, and let
$\beta_t < 0$ be an inverse-temperature schedule controlling the strength of
the cost-induced tilt. Then the cost-tilted posterior marginals
\begin{equation}
    p_t(x):= p_t(x \mid O=1) = \frac{q_t(x)\exp\bigl(\beta\mathcal{J}(x)\bigr)}{\int q_t(x)\exp\bigl(\beta\mathcal{J}(x)\bigr)\,dx},
\end{equation}
i.e.\ the Bayesian posterior over trajectories that treats the learned
demonstration distribution as a prior and $\exp(\beta_t \mathcal{J})$ as a
likelihood on optimality, can be sampled via the following Feynman--Kac
weighted SDE:
\begin{align}
    dx_t
    &= \bigl(v_t(x_t) + a\,\nabla \mathcal{J}(x_t)\bigr)\,dt
       + \sigma_t\,dW_t ,
    \\[0.5em]
    dw_t
    &= \Bigg[
    \left\langle
    \nabla \mathcal{J}(x_t),
    \begin{array}{@{}l@{}}
    \beta_t\Bigl(
        v_t(x_t)
        - \sigma_t^2\nabla\log q_t(x_t)
        - \dfrac{\sigma_t^2}{2}\beta_t\nabla \mathcal{J}(x_t)
    \Bigr)
    \\[0.25em]
    \quad
    + a\Bigl(
        \nabla\log q_t(x_t)
        + \beta_t\nabla \mathcal{J}(x_t)
    \Bigr)
    \end{array}
    \right\rangle
    \\
    &\qquad
    + \left(a - \beta_t\frac{\sigma_t^2}{2}\right)\!\Delta \mathcal{J}(x_t)
    + \deriv{\beta_t}{t}\mathcal{J}(x_t)
    \Bigg]\,dt ,
\end{align}
where $a \in \mathbb{R}$ is a free design parameter, 
and $w_t$ is the log-importance-weight carried by each FK particle.

\medskip
\noindent\textbf{Specialization to the reverse-time generative SDE.}
For the reverse-time SDE used by the pretrained diffusion policy (or by the
flow policy after the marginal-preserving SDE recast of
\cref{app:flow_to_sde}), namely
$v_t(x_t) = -f_t(x_t) + \sigma_t^2 \nabla \log q_t(x_t)$, and with the
canonical choice $a = \beta_t \sigma_t^2/2$, the weighted SDE simplifies to
\begin{align}
    dx_t
    &= \Bigl(-f_t(x_t) + \sigma_t^2 \nabla \log q_t(x_t)
        + \beta_t \tfrac{\sigma_t^2}{2}\,\nabla \mathcal{J}(x_t)\Bigr)\,dt
        + \sigma_t\,dW_t ,
    \\
    dw_t
    &= \left[\,
        \deriv{\beta_t}{t}\,\mathcal{J}(x_t)
        + \left\langle
            \beta_t\,\nabla \mathcal{J}(x_t),\
            \tfrac{\sigma_t^2}{2}\,\nabla \log q_t(x_t) - f_t(x_t)
        \right\rangle
    \right]\,dt .
\end{align}

\end{theorem}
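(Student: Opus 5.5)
The plan is to verify directly that the tilted marginals $p_t \propto q_t\exp(\beta_t\mathcal{J})$ solve a Feynman--Kac PDE of the form \cref{eq:fk_pde_appendix}. The transport part of that PDE is the drift $v_t + a\nabla\mathcal{J}$ and its diffusion coefficient is $\sigma_t$. Once the reweighting function $g_t$ is identified, the background result of \cref{app:add_background} (the weighted SDE \cref{eq:weighted_sde_app} samples $p_t^{\textsc{fk}}$) gives the claim. Throughout I write $\beta_t$ for the time-dependent tilt, so the density is normalized by $Z_t=\int q_t e^{\beta_t\mathcal{J}}$. I work with log-densities and use the identities of \cref{sec:pde-equivalences}.

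\textbf{Step 1 (left-hand side).} Since $\log p_t = \log q_t + \beta_t\mathcal{J} - \log Z_t$, we have $\partial_t \log p_t = \partial_t\log q_t + \dot\beta_t\mathcal{J} - \partial_t\log Z_t$. Dividing the Fokker--Planck equation of the base SDE by $q_t$, with \cref{eq:continuity_as_rw,eq:diffusion_as_rw}, gives
\[
\partial_t\log q_t = -\nabla\!\cdot v_t - \langle\nabla\log q_t, v_t\rangle + \tfrac{\sigma_t^2}{2}\bigl(\Delta\log q_t + \|\nabla\log q_t\|^2\bigr).
\]

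\textbf{Step 2 (right-hand side).} Divide the candidate PDE $\partial_t p_t = -\nabla\!\cdot(p_t(v_t+a\nabla\mathcal{J})) + \tfrac{\sigma_t^2}{2}\Delta p_t + \bar g_t p_t$ by $p_t$. This gives
\[
-\nabla\!\cdot v_t - a\Delta\mathcal{J} - \langle\nabla\log p_t, v_t + a\nabla\mathcal{J}\rangle + \tfrac{\sigma_t^2}{2}\bigl(\Delta\log p_t + \|\nabla\log p_t\|^2\bigr) + \bar g_t .
\]
Then substitute $\nabla\log p_t = \nabla\log q_t + \beta_t\nabla\mathcal{J}$ and $\Delta\log p_t = \Delta\log q_t + \beta_t\Delta\mathcal{J}$.

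\textbf{Step 3 (solve for $g_t$).} Equate the two sides. The terms $-\nabla\!\cdot v_t$, $-\langle\nabla\log q_t, v_t\rangle$, $\tfrac{\sigma_t^2}{2}\Delta\log q_t$ and $\tfrac{\sigma_t^2}{2}\|\nabla\log q_t\|^2$ cancel. Expanding the square $\|\nabla\log q_t+\beta_t\nabla\mathcal{J}\|^2$ and collecting what remains gives
\[
g_t = \dot\beta_t\mathcal{J} + \bigl\langle\nabla\mathcal{J},\ \beta_t v_t - \sigma_t^2\beta_t\nabla\log q_t - \tfrac{\sigma_t^2}{2}\beta_t^2\nabla\mathcal{J} + a(\nabla\log q_t + \beta_t\nabla\mathcal{J})\bigr\rangle + \bigl(a - \beta_t\tfrac{\sigma_t^2}{2}\bigr)\Delta\mathcal{J}.
\]
This is exactly the bracket in the statement.

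\textbf{Step 3a (normalization constant).} The $x$-independent term $-\partial_t\log Z_t$ is absorbed by the centering. I would check that $\partial_t\log Z_t = \mathbb{E}_{p_t}[g_t]$, which follows by integrating the PDE over $x$: the divergence and Laplacian terms vanish under decay assumptions.

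\textbf{Step 4 (specialization).} Take $a=\beta_t\sigma_t^2/2$, so the $\Delta\mathcal{J}$ coefficient vanishes. The inner-product argument then simplifies:
\[
\beta_t v_t - \sigma_t^2\beta_t\nabla\log q_t - \tfrac{\sigma_t^2}{2}\beta_t^2\nabla\mathcal{J} + \tfrac{\beta_t\sigma_t^2}{2}\nabla\log q_t + \tfrac{\beta_t^2\sigma_t^2}{2}\nabla\mathcal{J} = \beta_t\bigl(v_t - \tfrac{\sigma_t^2}{2}\nabla\log q_t\bigr).
\]
Plugging in $v_t=-f_t+\sigma_t^2\nabla\log q_t$ gives $\beta_t(\tfrac{\sigma_t^2}{2}\nabla\log q_t - f_t)$, which yields the stated drift and weight. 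For the flow case, \cref{cor:flow_to_sde} supplies the base SDE, with $-v_t$ playing the role of $-f_t$ and $\tilde\sigma_t$ that of $\sigma_t$.

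\textbf{Main obstacle.} The algebra is routine; the bookkeeping of the square expansion in Step 3 is where sign errors are likely, so I would check it term by term. The genuinely delicate point is rigor. It requires enough regularity and decay of $q_t$ and $e^{\beta_t\mathcal{J}}$ (with $\mathcal{J}$ being $C^2$, via the softmax relaxation) for the boundary terms in Step 3a to vanish. It also requires the Feynman--Kac representation of \cref{app:add_background} to apply. I would state these as standing assumptions rather than prove them, deferring to \citet{skreta2025fkc}.
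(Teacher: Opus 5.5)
Your derivation of the general weight and of the diffusion specialization is correct, and it is essentially the paper's argument. The paper proceeds in two stages: it first identifies $g_t$ with drift $v_t$ alone, then shifts an $a\nabla\mathcal{J}$ term from the weight into the drift via the transport/reweighting equivalence. You instead verify the PDE with drift $v_t+a\nabla\mathcal{J}$ in a single pass. Your Step 3a is the same bookkeeping as the paper's remark that $\partial_t\log Z_t=\mathbb{E}_{p_t}[\partial_t\log q_t+\dot\beta_t\mathcal{J}]$; the two expressions agree because the transport and diffusion terms integrate to zero.

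The step that fails is the flow identification at the end of Step 4. The SDE of \cref{cor:flow_to_sde} has drift $-v_t+\tfrac{\tilde\sigma_t^2}{2}\nabla\log q_t$. This is not of the form $-f_t+\sigma_t^2\nabla\log q_t$ with $f_t=v_t$ and $\sigma_t=\tilde\sigma_t$: the score coefficient is off by a factor of two. With your identification, the specialized formulas give the drift $-v_t+\tilde\sigma_t^2\nabla\log q_t+\tfrac{\beta_t\tilde\sigma_t^2}{2}\nabla\mathcal{J}$, whose $\beta_t=0$ version does not preserve the flow marginals. The weight also picks up a spurious extra term $\tfrac{\beta_t\tilde\sigma_t^2}{2}\langle\nabla\mathcal{J},\nabla\log q_t\rangle$.

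The repair is already in your Step 4. Write $b_t$ for the base drift (what the theorem calls $v_t$). You showed that with $a=\beta_t\sigma_t^2/2$ one has $g_t=\dot\beta_t\mathcal{J}+\beta_t\langle\nabla\mathcal{J},\,b_t-\tfrac{\sigma_t^2}{2}\nabla\log q_t\rangle$ for any $b_t$. Take $b_t=-v_t+\tfrac{\tilde\sigma_t^2}{2}\nabla\log q_t$ and $\sigma_t=\tilde\sigma_t$ (equivalently $f_t:=v_t+\tfrac{\tilde\sigma_t^2}{2}\nabla\log q_t$). This yields the drift $-v_t+\tfrac{\tilde\sigma_t^2}{2}\nabla\log q_t+\tfrac{\beta_t\tilde\sigma_t^2}{2}\nabla\mathcal{J}$ and the weight $\dot\beta_t\mathcal{J}-\beta_t\langle\nabla\mathcal{J},v_t\rangle$, which is exactly the flow branch of \cref{alg:bayesfp}.

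One more item belongs on your list of standing assumptions; it is implicit in the paper too. The weighted SDE reproduces $p_t$ only if the initial weighted population is $p_0\propto q_0e^{\beta_0\mathcal{J}}$. That requires either $\beta_0=0$ or initial log-weights $w_0=\beta_0\mathcal{J}(x_0)$.
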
    
\begin{proof}
We begin from the dynamics of the base process. Let $q_t(x)$ denote the time-$t$ marginal of the pretrained generative policy, which evolves according to the Fokker--Planck equation
\begin{align}
    \deriv{q_t(x)}{t} = -\inner{\nabla}{q_t(x)v_t(x)} + \frac{\sigma_t^2}{2}\Delta q_t(x)\,.
\end{align}
Our goal is to characterize the time-evolution of the Gibbs-tilted density
\begin{align}
    p_t(x) = \frac{q_t(x) e^{(\beta_t \mathcal J(x))}}{\int dx\;q_t(x) e^{(\beta_t \mathcal J(x))}}\,.
\end{align}
By directly differentiating the logarithm of $p_t(x)$ with respect to time and recognizing that the normalization constant in the denominator contributes a term equal to the expectation under $p_t$ of the time-derivative of the log-numerator, we obtain
\begin{align}
    \deriv{}{t}\log p_t(x) =~& \deriv{}{t}\log q_t(x) + \deriv{\beta_t}{t} \mathcal J(x) - \int dx\;p_t(x)\left[\deriv{}{t}\log q_t(x) + \deriv{\beta_t}{t} \mathcal J(x)\right]
\end{align}
Notice that the first term, $\deriv{}{t}\log q_t(x)$, can be expanded by applying the Fokker-Planck equation for $q_t$ and using standard logarithmic identities, and that one can further re-express the resulting score $\nabla \log q_t$ in terms of $\nabla \log p_t$ by exploiting the relation $\nabla \log p_t(x) = \nabla \log q_t(x) + \beta_t \nabla \mathcal J(x)$ that follows directly from the definition of the tilted density, yielding
\begin{align}
    \deriv{}{t}\log q_t(x) =~& -\inner{\nabla}{v_t(x)} - \inner{\nabla \log q_t(x)}{v_t(x)} + \frac{\sigma_t^2}{2}\Delta \log q_t(x) + \frac{\sigma_t^2}{2}\norm{\nabla \log q_t(x)}^2\\
    =~& -\inner{\nabla}{v_t(x)} - \inner{\nabla \log p_t(x)}{v_t(x)} + \frac{\sigma_t^2}{2}\Delta \log p_t(x) + \frac{\sigma_t^2}{2}\norm{\nabla \log p_t(x)}^2 + \\
    ~&+\inner{\beta_t\nabla \mathcal J(x)}{v_t(x)-\sigma_t^2\nabla\log q_t(x) - \frac{\sigma_t^2}{2}\beta_t\nabla \mathcal J(x)} - \beta_t\frac{\sigma_t^2}{2}\Delta \mathcal J(x)\,.\nonumber
\end{align}
Substituting this expansion back into the time-derivative of $\log p_t$ and re-collecting the terms into transport, diffusion, and reweighting contributions, we arrive at the Feynman--Kac PDE
\begin{align}
    \deriv{p_t(x)}{t} =~& -\inner{\nabla}{p_t(x)v_t(x)} + \frac{\sigma_t^2}{2}\Delta p_t(x) + p_t(x)\left(g_t(x)-\mean_{p_t(x)}g_t(x)\right)\\
    g_t(x) =~& \inner{\beta_t\nabla \mathcal J(x)}{v_t(x)-\sigma_t^2\nabla\log q_t(x) - \frac{\sigma_t^2}{2}\beta_t\nabla \mathcal J(x)} - \beta_t\frac{\sigma_t^2}{2}\Delta \mathcal J(x) + \deriv{\beta_t}{t}\mathcal J(x)\,.
\end{align}
At this stage the entire cost-induced tilt is being carried by the reweighting term $g_t$, but we have full freedom to redistribute part of this tilt into a cost-aware drift correction: by exploiting the transport/reweighting equivalence, we can introduce the gradient of the cost as an additional drift term $a\nabla \mathcal J(x)$ with $a\in\mathbb{R}$ a free design parameter, which leaves the marginal $p_t$ unchanged but modifies $g_t$ accordingly, i.e.
\begin{align}
    \deriv{p_t(x)}{t} =~& -\inner{\nabla}{p_t(x)(v_t(x) + a\nabla \mathcal J(x))} + \frac{\sigma_t^2}{2}\Delta p_t(x) + p_t(x)\left(g_t(x)-\mean_{p_t(x)}g_t(x)\right)\\
    g_t(x) =~& a\Delta \mathcal J(x) + a\inner{\nabla \log p_t(x)}{\nabla \mathcal J(x)} - \beta_t\frac{\sigma_t^2}{2}\Delta\mathcal J(x) + \deriv{\beta_t}{t}\mathcal J(x) +\\
    ~& + \inner{\beta_t\nabla \mathcal J(x)}{v_t(x)-\sigma_t^2\nabla\log q_t(x) - \frac{\sigma_t^2}{2}\beta_t\nabla \mathcal J(x)}\,. \nonumber
\end{align}
We now specialize this general expression to the reverse-time generative dynamics used by the pretrained policy and choose the free parameter so as to obtain a maximally compact weight update: by setting the base drift to the standard reverse-time form $v_t(x) := -f_t(x) + \sigma_t^2 \nabla \log q_t(x)$ and the cost-drift coefficient to the canonical value $a := \beta_t \sigma_t^2/2$ (so that the Laplacian terms $a\Delta\mathcal J$ and $\beta_t \frac{\sigma_t^2}{2}\Delta\mathcal J$ in $g_t$ cancel exactly and the score $\nabla \log p_t$ that appears in $g_t$ collapses to the simpler $\nabla \log q_t$), we obtain the simplified system
\begin{align}
    \deriv{p_t(x)}{t} =~& -\inner{\nabla}{p_t(x)(-f_t(x) + \sigma_t^2 \nabla \log q_t(x) + \beta_t \frac{\sigma_t^2}{2}\nabla \mathcal J(x))}\\
    &+ \frac{\sigma_t^2}{2}\Delta p_t(x) + p_t(x)\left(g_t(x)-\mean_{p_t(x)}g_t(x)\right) \nonumber\\
    g_t(x) =~& \inner{\beta_t\nabla \mathcal J(x)}{\frac{\sigma_t^2}{2}\nabla \log p_t(x)} + \deriv{\beta_t}{t}\mathcal J(x) + \inner{\beta_t\nabla \mathcal J(x)}{-f_t(x) - \frac{\sigma_t^2}{2}\beta_t\nabla \mathcal J(x)} \\
    =~&\deriv{\beta_t}{t}\mathcal J(x) + \inner{\beta_t\nabla \mathcal J(x)}{\frac{\sigma_t^2}{2}\nabla\log q_t(x) -f_t(x)} \,.
\end{align}
Because the resulting PDE has the canonical Feynman--Kac form, it admits a particle-based simulation in which each trajectory carries both a state $x_t$, evolved by the cost-aware SDE drift and the unchanged diffusion, and a log-weight $w_t$, accumulated according to the centered reweighting function $g_t$; concretely, this can be simulated as
\begin{align}
    dx_t =~& (-f_t(x_t) + \sigma_t^2 \nabla \log q_t(x_t) + \beta_t \frac{\sigma_t^2}{2}\nabla \mathcal J(x_t))dt + \sigma_t dW_t\,,\\
    dw_t =~& \left[\deriv{\beta_t}{t}\mathcal J(x) + \inner{\beta_t\nabla \mathcal J(x)}{\frac{\sigma_t^2}{2}\nabla\log q_t(x) -f_t(x)}\right]dt
\end{align}
\end{proof}

\subsection{Concentration on the Constrained Optimum}

The previous result shows how to sample from the Gibbs-tilted distribution for a fixed penalty-augmented objective. We now show that this distribution is meaningful for the original constrained optimization problem. In particular, as the constraint penalties and the magnitude of the inverse temperature increase, the tilted measure assigns vanishing probability to trajectories that are either infeasible or suboptimal. This establishes that the posterior targeted by our sampler concentrates around feasible, near-optimal solutions of the constrained problem.

\begin{proposition}[Concentration on the Constrained Optimum by Gibbs tilting]
\label{prop:approx-constrained-full}
Let $X\subset \mathbb{R}^d$ be a compact set with nonempty interior (or at least positive Lebesgue measure),
and let $\mathcal L,\rho,h_1,h_2:X\to\mathbb{R}$ be continuous. Define the feasible set
\[
F := \{x\in X:\ h_1(x)=0,\ h_2(x)\le 0\} \ne \emptyset,
\qquad
\mathcal L^\star := \min_{x\in F} \mathcal L(x),
\]
and for penalties $c=(c_1,c_2)$ with $c_1,c_2>0$ define the penalized objective
\[
\mathcal J_c(x) := \mathcal L(x)+\rho(x)+c_1 h_1(x)^2 + c_2 \max\{0,h_2(x)\}.
\]
Let $q_0:X\to[0,\infty)$ be a continuous density (with respect to Lebesgue measure); in our case ($q_0 = p_\text{data}$) and, for $\beta<0$, define
the Gibbs-tilted probability distribution
\[
p^{(\beta,c)}(dx)
:=
\frac{q_0(x)e^{\beta \mathcal J_c(x)}}{\int_X q_0(u)e^{\beta \mathcal J_c(u)}\,du}\,dx.
\]

Assume the following:

\begin{enumerate}
\item[(A1)] (\textbf{Compactness}) $X$ is compact.
\item[(A2)] (\textbf{Continuity}) $\mathcal J_c$ is continuous on $X$ for each fixed $c$.
\item[(A3)] (\textbf{Positivity near minimizers}) For each fixed $c$, if $S_c:=\arg\min_X \mathcal J_c$ is the minimizer set, then
there exists an open set $U_c\subset X$ with $U_c\cap S_c\neq\emptyset$ and a constant $m_c>0$ such that
$q_0(x)\ge m_c$ for all $x\in U_c$, and $\mathrm{Leb}(U_c)>0$.
\item[(A4)] (\textbf{Penalty correctness for global minimizers}) For every $\delta>0$, there exists $c=(c_1,c_2)$ such that
\emph{every} global minimizer $x_c\in \arg\min_X \mathcal J_c$ satisfies
\[
|h_1(x_c)|\le \delta/2,\qquad h_2(x_c)\le \delta/2,\qquad \mathcal L(x_c)\le \mathcal L^\star+\delta/2.
\]
This is the standard penalty-method consistency condition in nonlinear programming (for the right class of functions \(\rho\)); see \citet{doi:10.1137/1.9781611971316, book}.
\end{enumerate}

Then for every $\delta>0$ and $\eta\in(0,1)$, there exist penalties $c=(c_1,c_2)$ (large enough) and
$\beta<0$ (with $|\beta|$ large enough) such that, for $X\sim p^{(\beta,c)}$,
\[
\mathbb{P}\Big(|h_1(X)|\le \delta,\ \ h_2(X)\le \delta,\ \ \mathcal L(X)\le \mathcal L^\star+\delta\Big)\ \ge\ 1-\eta.
\]
\end{proposition}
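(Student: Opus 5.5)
Fix $\delta>0$ and $\eta\in(0,1)$, and use (A4) to choose penalties $c$ so that every global minimizer of $\mathcal J_c$ on $X$ is $\delta/2$-feasible and $\delta/2$-optimal. With $c$ fixed, write $\mathcal J:=\mathcal J_c$ and $\mathcal J^\star:=\min_X \mathcal J$; the minimum exists by (A1)--(A2). Let
\[
B:=\{x\in X:\ |h_1(x)|\le\delta,\ h_2(x)\le\delta,\ \mathcal L(x)\le\mathcal L^\star+\delta\}
\]
be the good set. The plan is in two parts: show that $B^c$ lies at a uniform positive cost gap above $\mathcal J^\star$, then show that Gibbs tilting with $|\beta|\to\infty$ kills any region with such a gap, relative to the mass near the minimizers.

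\textbf{Step 1 (gap).} I will show that
\[
\varepsilon:=\inf_{x\in \overline{B^c}}\mathcal J(x)-\mathcal J^\star>0 .
\]
The set $\overline{B^c}\subseteq\{|h_1|\ge\delta\}\cup\{h_2\ge\delta\}\cup\{\mathcal L\ge \mathcal L^\star+\delta\}$ is closed in the compact $X$, hence compact, so $\mathcal J$ attains its infimum on it at some point $y$. If $\varepsilon=0$, then $y$ is a global minimizer of $\mathcal J$. By the choice of $c$ it then satisfies the $\delta/2$ bounds, which contradicts the fact that $y$ violates one of the $\delta$ bounds. (If $B^c$ is empty there is nothing to prove.)

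\textbf{Step 2 (mass near the minimizer).} Take the open set $U_c$ from (A3) and a point $x^\star\in U_c\cap S_c$. By continuity of $\mathcal J$ there is an open $V\subseteq U_c$ containing $x^\star$ on which $\mathcal J\le \mathcal J^\star+\varepsilon/2$. Since $V$ is a nonempty open set, $\mathrm{Leb}(V)>0$. (Here I read $X$ as having nonempty interior near $x^\star$; (A3) with $\mathrm{Leb}(U_c)>0$ is what supplies this.)

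\textbf{Step 3 (ratio bound).} For $\beta<0$ the normalizer satisfies
\[
Z\ge m_c\,\mathrm{Leb}(V)\,e^{\beta(\mathcal J^\star+\varepsilon/2)},
\]
and the bad mass satisfies
\[
\int_{B^c} q_0 e^{\beta\mathcal J}\le e^{\beta(\mathcal J^\star+\varepsilon)}\int_X q_0\le e^{\beta(\mathcal J^\star+\varepsilon)},
\]
using $\beta<0$ and the gap from Step 1. Dividing gives
\[
\mathbb P(X\notin B)\le C e^{\beta\varepsilon/2}, \qquad C=\bigl(m_c\mathrm{Leb}(V)\bigr)^{-1},
\]
which is exactly the form of \cref{eq:concentration}. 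Choosing $|\beta|$ large enough that $Ce^{\beta\varepsilon/2}\le\eta$ finishes the proof.

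\textbf{Main obstacle.} I expect the delicate point to be Step 1's use of compactness together with (A4). The gap must be uniform over the whole bad set, which is why the contradiction argument has to go through the closure of $B^c$ and the strict $\delta/2$ versus $\delta$ margin. I will also take care that $\varepsilon$ and $C$ depend only on $c$ and $\delta$, and not on $\beta$.
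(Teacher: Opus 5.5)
Your proposal is correct and follows essentially the same route as the paper. You use (A4) to fix $c$, extract a uniform energy gap $\varepsilon>0$ on a compact closed superset of the bad set via the $\delta/2$-versus-$\delta$ contradiction, and then compare the bad mass against the mass on a neighborhood of a minimizer inside $U_c$ to obtain $C e^{\beta\varepsilon/2}$. The one minor difference is that you bound the numerator by $\int_X q_0\le 1$ rather than by $\sup_X q_0\cdot\mathrm{Leb}(X)$, which gives the slightly cleaner constant $C=(m_c\,\mathrm{Leb}(V))^{-1}$ and needs no upper bound on $q_0$.
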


\begin{proof}
Fix arbitrary $\delta>0$ and $\eta\in(0,1)$. Define the ``good'' set
\begin{equation}
G_\delta
:=
\Big\{x\in X:\ |h_1(x)|\le \delta,\ \ h_2(x)\le \delta,\ \ \mathcal L(x)\le \mathcal L^\star+\delta\Big\},
\label{eq:Gdelta}
\end{equation}
and its complement (the ``bad'' set)
\[
A_\delta := X\setminus G_\delta.
\]

\paragraph{Choose penalties so that all minimizers of $J_c$ are $\delta/2$-good}
By assumption (A4), there exists a choice of penalties $c=(c_1,c_2)$ such that every global minimizer
$x_c\in S_c:=\arg\min_X \mathcal J_c$ satisfies
\[
|h_1(x_c)|\le \delta/2,\qquad h_2(x_c)\le \delta/2,\qquad \mathcal L(x_c)\le \mathcal L^\star+\delta/2.
\]
Equivalently,
\begin{equation}
S_c \subseteq G_{\delta/2} \subseteq G_\delta.
\label{eq:Sc-in-Gdelta}
\end{equation}
Fix such a $c$ for the remainder of the proof, and write $\mathcal J:=\mathcal J_c$ and $S:=S_c$ to simplify notation.
Let
\[
\mathcal J^\star := \min_{x\in X} \mathcal J(x),
\]
so that $S=\{x\in X:\ \mathcal J(x)=\mathcal J^\star\}$ and $S\neq\emptyset$ by compactness (A1) and continuity (A2).

\paragraph{There is a strictly positive energy gap outside $G_\delta$}
We claim there exists $\varepsilon>0$ such that
\begin{equation}
A_\delta \subseteq \{x\in X:\ \mathcal J(x)\ge \mathcal J^\star+\varepsilon\}.
\label{eq:gap-inclusion}
\end{equation}

To prove this, note first that $G_\delta$ is closed in $X$ because it is defined by finitely many inequalities
of continuous functions; hence $A_\delta=X\setminus G_\delta$ is open in $X$. Since we will use compactness,
we instead work with the closed superset
\[
\widetilde{A}_\delta := \Big\{x\in X:\ |h_1(x)|\ge \delta \ \text{or}\ h_2(x)\ge \delta\ \text{or}\ \mathcal L(x)\ge \mathcal L^\star+\delta\Big\}.
\]
Each of the sets $\{|h_1|\ge \delta\}$, $\{h_2\ge \delta\}$, and $\{\mathcal L\ge \mathcal L^\star+\delta\}$ is closed in $X$ by continuity,
so $\widetilde{A}_\delta$ is closed as a finite union of closed sets. Moreover, by definition we have the inclusion
$A_\delta \subseteq \widetilde{A}_\delta$. Since $X$ is compact, it follows that $\widetilde{A}_\delta$ is compact.

Now define the continuous nonnegative function
\[
\Delta(x):=\mathcal J(x)-\mathcal J^\star \ge 0.
\]
Since $\widetilde{A}_\delta$ is compact and $\Delta$ is continuous, $\Delta$ attains its minimum on $\widetilde{A}_\delta$:
\[
\varepsilon := \min_{x\in \widetilde{A}_\delta} \Delta(x)
           = \min_{x\in \widetilde{A}_\delta} \big(\mathcal J(x)-\mathcal J^\star\big).
\]
We show $\varepsilon>0$. Suppose for contradiction that $\varepsilon=0$. Then there exists $x_0\in \widetilde{A}_\delta$ with
$\mathcal J(x_0)-\mathcal J^\star=0$, i.e.\ $\mathcal J(x_0)=\mathcal J^\star$, so $x_0\in S$. But by \cref{eq:Sc-in-Gdelta}, $S\subseteq G_{\delta/2}$, and hence
$|h_1(x_0)|\le \delta/2$, $h_2(x_0)\le \delta/2$, and $\mathcal L(x_0)\le \mathcal L^\star+\delta/2$, which implies $x_0\notin \widetilde{A}_\delta$.
This is a contradiction. Therefore $\varepsilon>0$, and by definition of $\varepsilon$ we have
\[
\mathcal J(x)-\mathcal J^\star \ge \varepsilon \quad \text{for all } x\in \widetilde{A}_\delta.
\]
Using $A_\delta\subseteq \widetilde{A}_\delta$, we conclude that
\[
\mathcal J(x)-\mathcal J^\star \ge \varepsilon \quad \text{for all } x\in A_\delta,
\]
which is exactly \cref{eq:gap-inclusion}.

Consequently,
\begin{equation}
p^{(\beta,c)}(A_\delta)
\le
p^{(\beta,c)}\big(\{x:\ \mathcal J(x)\ge \mathcal J^\star+\varepsilon\}\big)
\quad \text{for every } \beta<0.
\label{eq:reduce-to-energy-tail}
\end{equation}

\paragraph{Exponential bound on the energy tail as $\beta\to -\infty$}
Define the ``high-energy'' set
\[
B := \{x\in X:\ \mathcal J(x)\ge \mathcal J^\star+\varepsilon\}.
\]
We will show that $p^{(\beta,c)}(B)\to 0$ as $\beta\to -\infty$, and in fact we will obtain an explicit bound.

By (A3), there exists an open set $U\subset X$ with $\mathrm{Leb}(U)>0$ and constants $m>0$ such that:
(i) $U\cap S\neq\emptyset$, and (ii) $q_1(x)\ge m$ for all $x\in U$.

Pick any $x^\star\in U\cap S$. Since $x^\star$ is a minimizer, $\mathcal J(x^\star)=\mathcal J^\star$. By continuity of $\mathcal J$,
there exists a (possibly smaller) open neighborhood $U'\subseteq U$ of $x^\star$ such that
\begin{equation}
\mathcal J(x)\le \mathcal J^\star+\varepsilon/2 \quad \text{for all } x\in U'.
\label{eq:local-sublevel}
\end{equation}
Because $U'\subseteq U$ and $q_0\ge m$ on $U$, we also have
\begin{equation}
q_0(x)\ge m \quad \text{for all } x\in U',
\qquad \text{and} \qquad \mathrm{Leb}(U')>0.
\label{eq:q-lower}
\end{equation}

Also, since $q_1$ is continuous on compact $X$, it is bounded above:
\begin{equation}
\exists\,M<\infty\ \text{ such that } \ q_0(x)\le M \quad \forall x\in X.
\label{eq:q-upper}
\end{equation}

Now compute, for $\beta<0$,
\begin{equation}
p^{(\beta,c)}(B)
=
\frac{\int_B q_0(x)e^{\beta \mathcal J(x)}\,dx}{\int_X q_0(x)e^{\beta \mathcal J(x)}\,dx}.
\label{eq:prob-ratio}
\end{equation}

\emph{Upper bound the numerator.}
For all $x\in B$, $\mathcal J(x)\ge \mathcal J^\star+\varepsilon$. Since $\beta<0$, the function $t\mapsto e^{\beta t}$ is decreasing, so
\[
e^{\beta \mathcal J(x)} \le e^{\beta (\mathcal J^\star+\varepsilon)} \quad \forall x\in B.
\]
Using \cref{eq:q-upper},
\[
\int_B q_0(x)e^{\beta \mathcal J(x)}\,dx
\le
\int_B M\, e^{\beta (\mathcal J^\star+\varepsilon)}\,dx
=
M\,\mathrm{Leb}(B)\,e^{\beta (\mathcal J^\star+\varepsilon)}
\le
M\,\mathrm{Leb}(X)\,e^{\beta (\mathcal J^\star+\varepsilon)}.
\]
Thus
\begin{equation}
\int_B q_0(x)e^{\beta \mathcal J(x)}\,dx
\le
M\,\mathrm{Leb}(X)\,e^{\beta (\mathcal J^\star+\varepsilon)}.
\label{eq:num-bound}
\end{equation}

\emph{Lower bound the denominator.}
Restrict the denominator integral to the set $U'$. For all $x\in U'$, by \cref{eq:local-sublevel} we have
$\mathcal J(x)\le \mathcal J^\star+\varepsilon/2$, and again since $\beta<0$,
\[
e^{\beta \mathcal J(x)} \ge e^{\beta (\mathcal J^\star+\varepsilon/2)} \quad \forall x\in U'.
\]
Using \cref{eq:q-lower},
\[
\int_X q_0(x)e^{\beta \mathcal J(x)}\,dx
\ge
\int_{U'} q_0(x)e^{\beta \mathcal J(x)}\,dx
\ge
\int_{U'} m\, e^{\beta (\mathcal J^\star+\varepsilon/2)}\,dx
=
m\,\mathrm{Leb}(U')\, e^{\beta (\mathcal J^\star+\varepsilon/2)}.
\]
Hence
\begin{equation}
\int_X q_0(x)e^{\beta \mathcal J(x)}\,dx
\ge
m\,\mathrm{Leb}(U')\, e^{\beta (\mathcal J^\star+\varepsilon/2)}.
\label{eq:den-bound}
\end{equation}

\emph{Combine bounds.}
Substituting \cref{eq:num-bound} and \cref{eq:den-bound} into \cref{eq:prob-ratio} yields
\[
p^{(\beta,c)}(B)
\le
\frac{M\,\mathrm{Leb}(X)\,e^{\beta (\mathcal J^\star+\varepsilon)}}{m\,\mathrm{Leb}(U')\, e^{\beta (\mathcal J^\star+\varepsilon/2)}}
=
\frac{M\,\mathrm{Leb}(X)}{m\,\mathrm{Leb}(U')}\, e^{\beta\,\varepsilon/2}.
\]
Therefore, for all $\beta<0$,
\begin{equation}
p^{(\beta,c)}(B)
\le
C\, e^{\beta\,\varepsilon/2},
\qquad
\text{where } C:=\frac{M\,\mathrm{Leb}(X)}{m\,\mathrm{Leb}(U')}\in(0,\infty).
\label{eq:exp-tail}
\end{equation}
The following estimate is the standard compact-space Gibbs concentration/Laplace-principle argument; see \citet{doi:10.1137/1036078, 10.1214/aop/1176994579}.
Since $\varepsilon>0$ and $\beta\to -\infty$, we have $e^{\beta\varepsilon/2}\to 0$, hence
$p^{(\beta,c)}(B)\to 0$. In particular, choose $\beta<0$ sufficiently negative so that
\begin{equation}
C\, e^{\beta\,\varepsilon/2} \le \eta.
\label{eq:choose-beta}
\end{equation}
For example, it suffices to take any $\beta<0$ satisfying
\[
\beta \le \frac{2}{\varepsilon}\log\!\Big(\frac{\eta}{C}\Big)
\quad
\]

\paragraph{Conclude the desired high-probability guarantee on $G_\delta$}
By \cref{eq:reduce-to-energy-tail} and \cref{eq:choose-beta},
\[
p^{(\beta,c)}(A_\delta)
\le
p^{(\beta,c)}(B)
\le
\eta.
\]
Therefore,
\[
p^{(\beta,c)}(G_\delta) = 1 - p^{(\beta,c)}(A_\delta) \ge 1-\eta.
\]
By the definition \cref{eq:Gdelta} of $G_\delta$, this is exactly
\[
\mathbb{P}\Big(|h_1(X)|\le \delta,\ \ h_2(X)\le \delta,\ \ \mathcal L(X)\le \mathcal L^\star+\delta\Big)\ \ge\ 1-\eta,
\quad \text{for } X\sim p^{(\beta,c)}.
\]
\end{proof} 

%% file: Appendix/experimental_deatils.tex
\appsection{Experimental Details}
\label{sec:experimental_details}

This appendix collects the experimental details that were deferred from Section~\ref{sec:experiments}: the precise form of the obstacle-avoidance cost we use throughout, the task definitions for each benchmark, and the training/evaluation protocol.

\subsection{Obstacle-Avoidance Cost Function}
\label{sec:cost_function}

All collision-avoidance experiments in the paper use the same generic cost; the only thing that changes from task to task is the obstacle geometry (or, in the case of $\pi_{0.5}$, the SDF that represents the scene). We describe the cost here once for a generic obstacle and then specialize it in the relevant subsections below.

\paragraph{Robot body discretization}
Given an action waypoint $a$ predicted by the policy (an end-effector pose, an end-effector delta, or a joint-position command, depending on the experiment), we map it to a set of $N_{\mathrm{body}}$ sampled body points
\begin{equation}
\mathcal{B}(a) \;=\; \bigl\{p_1(a),\, p_2(a),\, \ldots,\, p_{N_{\mathrm{body}}}(a)\bigr\} \;\subset\; \mathbb{R}^3
\end{equation}
in the world frame. For policies that predict end-effector poses or end-effector deltas (Diffusion Policy, GR00T-N1.6), $\mathcal{B}(a)$ samples points on the gripper and the last few links of the arm. For policies that predict joint positions ($\pi_{0.5}$), $\mathcal{B}(a)$ is obtained by differentiable forward kinematics applied to the predicted joint configuration and sampling points densely along each link.

\paragraph{Per-point violation}
Let $\mathcal{O} \subset \mathbb{R}^3$ denote the obstacle region and let
\begin{equation}
s_{\mathcal{O}}(p) \;:=\; \operatorname{sdf}_{\mathcal{O}}(p)
\end{equation}
be its signed distance function, with the convention that $s_{\mathcal{O}}(p) > 0$ outside the obstacle and $s_{\mathcal{O}}(p) < 0$ inside. We define the \emph{clearance} of body point $p$ with respect to $\mathcal{O}$ as
\begin{equation}
c_{\mathcal{O}}(p) \;:=\; s_{\mathcal{O}}(p) \;-\; \varepsilon ,
\end{equation}
where $\varepsilon > 0$ is a safety margin that inflates the obstacle slightly to keep the robot from grazing its boundary. The corresponding per-point violation is the inward penetration depth into the inflated obstacle,
\begin{equation}
\mathrm{viol}_{\mathcal{O}}(p) \;:=\; \max\bigl\{0,\, -c_{\mathcal{O}}(p)\bigr\} \;=\; \max\bigl\{0,\, \varepsilon - s_{\mathcal{O}}(p)\bigr\} \;\ge\; 0 .
\label{eq:violation}
\end{equation}
$\mathrm{viol}_{\mathcal{O}}(p)$ is zero whenever $p$ lies outside the inflated obstacle and grows linearly with how far it penetrates inside.

\paragraph{Trajectory-level penalty}
Given a trajectory $x = (a_1, \ldots, a_H)$ of $H$ predicted action waypoints, the collision penalty aggregates \cref{eq:violation} over all body points sampled along the trajectory and over all obstacles $\mathcal{O}_k$ specified at inference time:
\begin{equation}
\mathcal{J}_{\mathrm{coll}}(x) \;=\; \sum_{h=1}^{H}\;\sum_{p \in \mathcal{B}(a_t)}\;\sum_{k}\, \mathrm{viol}_{\mathcal{O}_k}(p).
\label{eq:total_collision_cost}
\end{equation}
This is the $\rho(x)$ term in Eq.~(10) of the main paper; we set $\mathcal{L} \equiv 0$ and have no equality/inequality constraints $h_1, h_2$ in the collision-avoidance experiments, so $\mathcal{J}(x) = \mathcal{J}_{\mathrm{coll}}(x)$.

In practice $\mathrm{viol}_{\mathcal{O}}$ is non-smooth at the boundary $s_{\mathcal{O}}(p) = \varepsilon$. As noted in the footnote to Eq.~(10), our theory uses a $C^2$ softplus relaxation $\tfrac{1}{\beta_{\mathrm{sp}}}\operatorname{softplus}\!\bigl(-\beta_{\mathrm{sp}}\, c_{\mathcal{O}}(p)\bigr)$, but empirically the hard clamp in \cref{eq:violation} works without issue and is what we use to produce the reported numbers. Throughout, $\varepsilon$ is on the order of a few centimeters (e.g.\ $\varepsilon = 0.02$\,m for the cylindrical-obstacle experiments).

\paragraph{Specialization to the cylindrical obstacle (Diffusion Policy, GR00T-N1.6)}
For the experiments on Diffusion Policy and GR00T-N1.6, the inference-time obstacle is a vertical cylinder of radius $r$ centered at $(c_x, c_y)$ and extruded along the world $z$-axis. Its signed distance function reduces to the planar radial distance,
\begin{equation}
s_{\mathcal{O}}(p) \;=\; \bigl\| p_{xy} - (c_x, c_y) \bigr\|_2 \;-\; r ,
\end{equation}
so the per-point violation in \cref{eq:violation} becomes
\begin{equation}
\label{eq:viol}
\mathrm{viol}_{\mathcal{O}}(p) \;=\; \max\!\Bigl\{0,\; (r + \varepsilon) - \bigl\| p_{xy} - (c_x, c_y) \bigr\|_2 \Bigr\}.
\end{equation}
The V-shaped obstacle used in the non-convex LIBERO experiments is implemented analogously as the union of two thick rectangular prisms joined at an angle, with its SDF computed as the minimum of the two prism SDFs.

\paragraph{Specialization to the online SDF ($\pi_{0.5}$)}
For the $\pi_{0.5}$ experiments we do not have access to obstacle geometry. Instead, we follow Sec.~5.2 of the main paper and build an online signed distance function $\hat{s}(p)$ of the scene with \texttt{nvblox}~\citep{millane2024nvbloxgpuacceleratedincrementalsigned} from the simulator's depth observation. All objects that are not the current manipulation/grasp target, together with the supporting table surface, are marked as collidable. We then use $s_{\mathcal{O}}(p) \equiv \hat{s}(p)$ inside \cref{eq:violation} and \cref{eq:total_collision_cost}.

\subsection{Constraint Violation Metric Used in Ablations}
\label{app:constraint_violation_metric}
Let $N$ be the number of evaluation episodes. For each episode
$i \in \{1, \ldots, N\}$ we roll out the policy in simulation and record the
executed end-effector position $\mathbf{e}_t^{(i)} \in \mathbb{R}^3$ at every
simulator step $t \in \{0, 1, \ldots, T_i - 1\}$. Let
$\mathcal{O} \subset \mathbb{R}^3$ denote the inference-time obstacle region
and let $s_{\mathcal{O}}: \mathbb{R}^3 \to \mathbb{R}$ be its signed distance
function, with the convention $s_{\mathcal{O}}(\mathbf{p}) > 0$ outside
$\mathcal{O}$ and $s_{\mathcal{O}}(\mathbf{p}) < 0$ inside, exactly as before. 
We define the instantaneous \emph{penetration depth} at
step $t$ of episode $i$ as
\begin{equation}
    p_t^{(i)} \;\triangleq\;
    \bigl[\,-s_{\mathcal{O}}\bigl(\mathbf{e}_t^{(i)}\bigr)\,\bigr]_+
    \;=\;
    \max\!\Bigl(0,\;-\,s_{\mathcal{O}}\bigl(\mathbf{e}_t^{(i)}\bigr)\Bigr),
\end{equation}
i.e.\ the depth by which the executed end-effector has crossed the obstacle
boundary $\partial\mathcal{O} = \{\mathbf{p} : s_{\mathcal{O}}(\mathbf{p}) = 0\}$
($0$ whenever the end-effector lies outside $\mathcal{O}$). Note that unlike
the per-point violation $\mathrm{viol}_{\mathcal{O}}$ in~\cref{eq:viol}
used inside the cost $\mathcal{J}_{\mathrm{coll}}$, the metric uses zero
inflation ($\varepsilon = 0$) so that $p_t^{(i)}$ reports geometric
penetration into $\mathcal{O}$ rather than penetration into the inflated
obstacle $\{\mathbf{p} : s_{\mathcal{O}}(\mathbf{p}) < \varepsilon\}$. We
aggregate along the executed trajectory to obtain the per-episode
\emph{total penetration},
\begin{equation}
    P^{(i)} \;\triangleq\; \sum_{t=0}^{T_i - 1} p_t^{(i)},
\end{equation}
and average across episodes to obtain the \emph{mean total penetration},
\begin{equation}
    \bar{P} \;\triangleq\; \frac{1}{N}\sum_{i=1}^{N} P^{(i)}.
    \label{eq:mean-total-penetration}
\end{equation}
By construction $\bar{P} \ge 0$, with $\bar{P} = 0$ iff no executed
trajectory ever enters $\mathcal{O}$. Larger values of $\bar{P}$ indicate
either deeper incursions, longer dwell times inside the obstacle, or both.
Because $p_t^{(i)}$ is summed over discrete simulator steps of fixed
duration $\Delta t$, $\bar{P}$ is proportional, up to the constant factor
$\Delta t$, to the per-episode time-integrated penetration
$\int_0^{T_i \Delta t} p^{(i)}(\tau)\, d\tau$ averaged across episodes. The
obstacle region $\mathcal{O}$ and its SDF $s_{\mathcal{O}}$ are specialized
per task as described in \cref{sec:cost_function} (e.g., a vertical cylinder of
radius $R$ centered at $\mathbf{c}$ for the Push-T and RoboMimic
experiments, a V-shaped union of two prisms for the non-convex LIBERO
experiment, and the online nvblox SDF for the $\pi_{0.5}$ experiments).
Throughout we use $\Delta t = 0.05\,\mathrm{s}$ and $N = 50$ episodes per
configuration.

\subsection{Task Descriptions and Datasets}

\paragraph{Diffusion Policy on RoboMimic (Can, Transport)}
We use the pretrained image-based Diffusion Policy checkpoints released by \citet{chi2024diffusionpolicy} for the \emph{Can} and \emph{Transport} tasks from the RoboMimic benchmark~\citep{mandlekar2021what}, both trained on the \emph{Proficient-Human (PH)} subset (200 successful demonstrations collected by a single experienced teleoperator). In \emph{Can}, a single 7-DoF Franka Panda arm must pick a soda can from one bin and place it into an adjacent target bin. In \emph{Transport}, two Franka Panda arms must cooperatively transfer a hammer from a starting bin to a target bin, requiring a mid-air handoff between the two arms. The policy outputs a horizon of absolute end-effector position waypoints in the world frame. At inference time we introduce an additional cylindrical obstacle, centered between the bins for \emph{Can} and at the bimanual handoff location for \emph{Transport}; the obstacle is never seen during training and the policy must avoid it while still completing the task. We evaluate over 50 rollouts per condition with randomized initial object poses.

\paragraph{GR00T-N1.6 on LIBERO-Object}
We fine-tune the pretrained GR00T-N1.6 foundation model~\citep{nvidia2025gr00tn1openfoundation} on the LIBERO-Object suite~\citep{liu2023liberobenchmarkingknowledgetransfer} using the released \emph{Proficient-Human (PH)} demonstrations (50 teleoperated trajectories per task). LIBERO-Object consists of 10 pick-and-place tasks in which a Franka Panda arm must pick up a designated grocery item from a kitchen tabletop and place it into a white basket; tasks differ only in the identity of the target object (alphabet soup, butter, cream cheese, tomato sauce, ketchup, salad dressing, milk, orange juice, BBQ sauce, chocolate pudding). For the inference-time collision experiments we restrict evaluation to the \emph{``Pick up the butter and place it in the basket''} task. We do this because, once a cylindrical or V-shaped translucent obstacle is inserted between the table and the basket, most of the other LIBERO-Object tasks become \emph{geometrically infeasible}: the target object can only be lifted and transported along a path that forces the gripper through the obstacle, regardless of which collision-aware planner is used. The butter task is the only one for which the object pose and basket pose admit a collision-free pick-and-place trajectory with the obstacle in place, isolating constraint satisfaction from feasibility. The policy outputs end-effector action deltas in the world frame; we evaluate over 100 rollouts per condition with randomized initial scene configurations.

\paragraph{$\pi_{0.5}$ on RoboLab}
We use the pretrained $\pi_{0.5}$~\citep{intelligence2025pi05visionlanguageactionmodelopenworld} checkpoint that has been fine-tuned on the DROID dataset~\citep{khazatsky2025droidlargescaleinthewildrobot} and evaluate it zero-shot (no further fine-tuning) on four RoboLab~\citep{yang2026robolab} tasks that lie outside the $\pi_{0.5}$ training distribution. RoboLab is an Isaac Lab-based benchmark of 120+ tabletop manipulation tasks with photorealistic rendering and automated success detection via composable predicates. The four tasks we use are:
\begin{itemize}
\item \textbf{FoodPacking2CansTask.} A Franka Panda arm must pick up two soup-can-style food items from a cluttered tabletop and place them upright inside a designated packing bin.
\item \textbf{BBQSauceInBinTask.} The arm must pick bottle(s) of BBQ sauce from amongst a set of distractor grocery items and deposit ithem into a target bin without knocking over the neighboring items.
\item \textbf{HammersInLeftBinTask.} The arm must select hammer-shaped tool objects from a workshop scene with multiple bins and place each one into the specifically designated left bin.
\item \textbf{TakeMugsOffOfShelfTask.} The arm must pick up mugs resting on a multi-tier shelf and place them on the table surface in front of the shelf, requiring reaching into a constrained, partially-enclosed workspace.
\end{itemize}
Across all four tasks $\pi_{0.5}$ outputs 7-DoF joint-position commands rather than Cartesian poses; we therefore differentiate through forward kinematics to map joint trajectories into sampled body points for evaluation of \cref{eq:total_collision_cost}. Because RoboLab scenes contain many objects whose poses are not directly exposed to the policy, we use the online SDF construction described above rather than analytic obstacle geometry. We treat \emph{every} non-target object (including the table) as a collidable obstacle. For each task we 
report the success rate of completing the task as defined by RoboLab's built-in predicates and the rate of any sampled body point penetrating the SDF zero level set during the rollout.




\paragraph{Real-world SO101 (PickAndPlace with mug obstacle; BimodalMugSelection)}
We additionally validate our approach on a low-cost SO101 arm~\cite{so101_github}
on two real-world tasks, by fine-tuning
$\pi_{0.5}$~\cite{intelligence2025pi05visionlanguageactionmodelopenworld}  with
only $20$ teleoperated demonstrations per task. Demonstrations are
collected via leader-follower teleoperation, and the policy outputs
joint-position commands. As in our $\pi_{0.5}$ experiments, we map joint
trajectories to sampled body points by differentiating through forward
kinematics, and we use two cameras an vanilla RGB and a RBG-D (Intel RealSense LiDAR Camera L515) camera. Every object that is not the current
manipulation target is treated as a collidable obstacle.

\begin{itemize}
    \item \textbf{Mug as obstacle PickAndPlace.} The training data consists of
     $20$ demonstrations in which the arm picks a yellow play-toy from a fixed
     region of the workspace and places it inside a saucer at a fixed goal
     location, with no obstacles present along the path. At inference time we
     introduce a ceramic mug, never seen during training, placed in the
     straight-line path between the pick and place locations.
     The mug therefore acts as a purely inference-time obstacle: the
     pretrained policy has no demonstrations of avoidance behavior, and the
     FKC sampler must reshape the learned trajectory distribution online to
     route around it.
     
     \item \textbf{BimodalMugSelection} The training data
     consists of $20$ demonstrations in which the arm picks up a small yellow
     play-toy from the table and places it into one of two ceramic mugs
     positioned to the left and right of the workspace; the (20) demonstrations
     are split evenly between the two destinations, so the pretrained policy
     is bimodal over the placement target. At inference we condition the
     language input on ``place the yellow object in the right mug'' and
     additionally introduce an inference-time constraint forbidding the
     region around the left mug, encoded through the same SDF-based collision
     cost as above. This experiment probes a complementary use of our
     framework: rather than avoiding an object absent from training, the
     cost is used to \emph{suppress one of two trained-in modes} whose
     selection cannot be reliably forced through language conditioning alone.
\end{itemize}

\subsection{Implementation Details}

For reweighting we use the Sequential Monte Carlo scheme of \cref{sec:resampling} throughout---systematic
resampling over the active interval $[t_{\min},t_{\max}]$---and \emph{not} the Markov
jump-process interpretation; we found SMC simpler to implement and sufficient on every task.
We additionally center the per-step log-weight increment across the $K$ particles before
accumulation: an unbiased baseline that keeps the effective sample size well conditioned. We
discretize the weighted SDE with a first-order Euler--Maruyama integrator for the two flow
policies. For the diffusion policy this is an implementation choice rather than a mathematical one:
the released Diffusion Policy checkpoint ships with the DDPM noise scheduler and a trained
$\epsilon$-network rather than a separately learned score model, so the score enters
\cref{alg:bayesfp}'s drift only through the identity
$\nabla\log q_t(x) = -\epsilon_\theta(x,t)/\sigma_t'$ supplied by the $\epsilon$-network. And this is done as follows:
Rather than converting $\epsilon_\theta$ back into a score and running a separate Euler--Maruyama
loop, we feed the scheduler's native DDPM ancestral update a cost-tilted noise prediction
\[
  \tilde\epsilon_\theta(x,t) \;:=\; \epsilon_\theta(x,t) \;-\; \tfrac{1}{2}\,\beta_t\,
  \sigma_t'\,\nabla\mathcal{J}(x),
  \qquad \sigma_t' := \sqrt{1-\bar\alpha_t},
\]
which substitutes the cost-tilted score $s+\tfrac{\beta_t}{2}\nabla\mathcal{J}$ into the
$\epsilon$-parametrization via the identity $s = -\epsilon_\theta/\sigma_t'$.
This is equivalent to running Euler--Maruyama on the cost-tilted VP-SDE in the continuous-time
limit---the DDPM ancestral step is a specific discretization of that SDE with per-step
variance increment $\sigma_t^2\,\Delta t = 1-\bar\alpha_t/\bar\alpha_{t-1}$, and the FK
log-weight in \cref{alg:bayesfp} uses exactly this matching increment. At the end of $N$
steps we draw a single multinomial sample from the residual log-weights to return one action
per batch element.

\paragraph{Sigma schedule} The Brownian diffusivity $\sigma_t$ is what lifts a deterministic
flow into the marginal-preserving SDE that admits Feynman--Kac tilting. For both flow
policies (\(\pi_{0.5}\) and GR00T-N1.6) we use the \citet{singh2024stochastic} ``zero-ends''
schedule
\[
  \sigma_t \;=\; \alpha\sqrt{t(1-t)}, \qquad \alpha = 0.25,
\]
which peaks mid-denoising and vanishes at the data and noise endpoints; this matched the best
ImageNet schedule in  \citet{singh2024stochastic} and we found it sufficient on every task. For the
DDPM diffusion policy we inherit the variance-preserving (VP) diffusivity of the trained
scheduler, $\sigma_t^2\,\Delta t = 1-\bar\alpha_t/\bar\alpha_{t-1}$, with no additional knob;
this keeps the proposal identical to the standard DDPM ancestral sampler under $\beta=0$. In
both cases we use a constant inverse-temperature schedule
$\beta_t \equiv -\gamma$ with $\gamma>0$, so the $(\beta_{t+\Delta t}-\beta_t)\mathcal{J}$
term in the FK weight vanishes and only the inner-product term survives; the strength
$\gamma$ is the dominant guidance knob and is varied in the main paper.


\paragraph{Active interval} We use the same window
$[t_{\min},t_{\max}] = [0.05,\,0.95]$ across all three policies, clipping only the very first
and last steps where the imputed score / velocity is least reliable (the score expression
$1/t$ or $1/(1-t)$ blows up at the endpoints). 

\paragraph{Number of particles} We use $K=32$ particles for the diffusion policy, $K=8$ for
GR00T-N1.6, and $K=4$ for \(\pi_{0.5}\) (limited by the larger per-step backbone /
VLM cost). The particle-count ablation of \cref{fig:ex:A}, where $K$ is varied,
is the only exception.

%% file: Appendix/notation.tex
%
%
%

\appsection{Notation}
\label{app:notation}

This appendix collects the notation used throughout the paper. Symbols are
grouped by role. A short list of intentionally overloaded / context-dependent
symbols is given at the end of the section.

\renewcommand{\arraystretch}{1.25}
\setlength{\LTcapwidth}{\textwidth}

\begin{longtable}{@{}>{\raggedright\arraybackslash}p{0.30\textwidth}@{\hspace{1.1em}}>{\raggedright\arraybackslash}p{0.63\textwidth}@{}}
\caption{Summary of notation used in the paper.}\label{tab:notation}\\
\toprule
\textbf{Symbol} & \textbf{Description} \\
\midrule
\endfirsthead

\multicolumn{2}{@{}l}{\small\itshape \tablename~\thetable\ (continued)}\\
\toprule
\textbf{Symbol} & \textbf{Description} \\
\midrule
\endhead

\midrule
\multicolumn{2}{r@{}}{\small\itshape continued on next page}\\
\endfoot

\bottomrule
\endlastfoot

\addlinespace[0.4em]
\multicolumn{2}{@{}l@{}}{\textbf{Sets, spaces, and operators}}\\
\midrule
$\mathbb{R}^d$ & $d$-dimensional Euclidean space in which trajectories / actions live. \\
$d$ & Dimensionality of the trajectory (control-input) space. \\
$\mathcal{X}\subset\mathbb{R}^d$ & Feasible domain of the optimization; in the proofs, a compact set with nonempty interior. \\
$\mathcal{F}$ & Feasible set $\{x\in\mathcal{X}:h_1(x)=0,\ h_2(x)\le 0\}$. \\
$I_d,\ I$ & $d\times d$ identity matrix. \\
$\nabla$ & Gradient operator (with respect to the state $x$). \\
$\nabla\!\cdot,\ \langle\nabla,\,\cdot\,\rangle$ & Divergence operator; the two notations are used interchangeably. \\
$\Delta$ & Laplacian operator, $\Delta=\nabla\!\cdot\!\nabla$. \\
$\langle\,\cdot\,,\,\cdot\,\rangle$ & Euclidean inner product. \\
$\lVert\,\cdot\,\rVert_2$ & Euclidean ($\ell_2$) norm. \\
$\mathbb{E}[\cdot]$ & Expectation; a subscript denotes the distribution. \\
$\mathbb{P}(\cdot)$ & Probability of an event. \\
$\mathrm{Leb}(\cdot)$ & Lebesgue measure of a set. \\
$(u)^{+}$ & Positive part, $\max(0,u)$. \\
$(u)^{-}$ & Negative part, $\max(0,-u)$. \\
$\delta_z(\cdot)$ & Dirac measure centered at $z$. \\

\addlinespace[0.4em]
\multicolumn{2}{@{}l@{}}{\textbf{Time variables}}\\
\midrule
$\tau\in[0,1]$ & Forward (noising) time; $\tau=0$ is clean data and $\tau=1$ is noise. \\
$t\in[0,1]$ & Reverse (denoising / generation) time, $t=1-\tau$. See the note on time conventions below. \\

\addlinespace[0.4em]
\multicolumn{2}{@{}l@{}}{\textbf{States and trajectories}}\\
\midrule
$x$ & A trajectory / sequence of control inputs (action deltas, joint positions, or end-effector poses) drawn from or generated by the policy. \\
$x_\tau$ & State along the forward noising SDE at time $\tau$. \\
$x_t$ & State along the reverse generation process at time $t$. \\
$x_{\mathrm{data}}$ & A clean expert demonstration, $x_{\mathrm{data}}\sim p_{\mathrm{data}}$. \\
$x_{\mathrm{src}}$ & A source-noise sample, $x_{\mathrm{src}}\sim p_{\mathrm{src}}=\mathcal{N}(0,I_d)$. \\
$\tilde{x}$ & Dummy integration variable (e.g., in normalization integrals). \\
$x_t^{(k)}$ & State of the $k$-th Feynman--Kac particle at time $t$. \\
$h(x_{\mathrm{data}},x_{\mathrm{src}},t)$ & Time-differentiable interpolation between data and source endpoints; default linear interpolation $x_t=(1-t)x_{\mathrm{data}}+t\,x_{\mathrm{src}}$. \\

\addlinespace[0.4em]
\multicolumn{2}{@{}l@{}}{\textbf{Distributions and densities}}\\
\midrule
$p_{\mathrm{data}}(x)$ & Expert demonstration distribution; the Bayesian prior. Shorthand for $p_{\mathrm{data}}(x\mid o)$ (always observation-conditioned). \\
$p_{\mathrm{src}}(x)$ & Tractable source distribution, typically $\mathcal{N}(0,I_d)$. \\
$\mathcal{N}(0,I_d)$ & Standard $d$-dimensional Gaussian. \\
$\nu(x_{\mathrm{data}},x_{\mathrm{src}})$ & Coupling distribution over (data, source) pairs; the independent coupling $\nu\equiv p_{\mathrm{data}}\,p_{\mathrm{src}}$ is a common choice. \\
$p_t(x)$ & Marginal density at reverse-time $t$, $p_t=p_{1-\tau}$ (overloaded; see note below). \\
$q_t(x)$ & Time-$t$ marginal density of the pretrained diffusion / flow policy (the base process). \\
$p(x\mid O=1)$ & Target posterior over optimal trajectories, $\propto p_{\mathrm{data}}(x)\exp(\beta\mathcal{J}(x))$. \\
$p_t(x\mid O=1)$ & Time-$t$ posterior (Gibbs-tilted) marginals, $\propto q_t(x)\exp(\beta\mathcal{J}(x))$. \\
$p^{(\beta,c)}(\mathrm{d}x)$ & Gibbs-tilted distribution with inverse temperature $\beta$ and penalties $c$ (proof notation). \\
$p_t^{\mathrm{FK}}(x)$ & Feynman--Kac weighted density (the density of the reweighted particle population). \\
$p_t^{\mathrm{ode}}(x)$ & Density evolving purely under deterministic transport (continuity equation). \\
$p_t^{\mathrm{diff}}(x)$ & Density evolving purely under Brownian diffusion (heat equation). \\
$p_t^{\mathrm{sde}}(x)$ & Density of an (unweighted) SDE, governed by the Fokker--Planck equation. \\
$p_t^{w}(x)$ & Weighted density induced by the log-weights $w_t$ (reweighting equation). \\
$p_t^{\mathrm{jump}}(x)$ & Density evolving under the Markov jump-process reweighting. \\
$p_t^{K}(z)$ & Empirical particle approximation $\tfrac{1}{K}\sum_k\delta_z(x^{(k)})$ of $p_t$. \\
$p(x_{\mathrm{data}}\mid x_t)$ & Posterior over the clean sample given a noised state (used in score imputation). \\
$p_t(x_t\mid x_{\mathrm{data}})$ & Conditional (Gaussian) density of the interpolant given the data endpoint; mean $(1-t)x_{\mathrm{data}}$, covariance $t^2I_d$. \\
$Z_t$ & Normalization constant (partition function) of the tilted marginal, $Z_t=\int q_t(\tilde{x})\exp(\beta\mathcal{J}(\tilde{x}))\,\mathrm{d}\tilde{x}$. \\

\addlinespace[0.4em]
\multicolumn{2}{@{}l@{}}{\textbf{Diffusion and flow dynamics}}\\
\midrule
$f_\tau(x_\tau),\ f_t(x_t)$ & Drift of the noising / denoising SDE; often linear, $f_\tau=\alpha_\tau x_\tau$. \\
$\alpha_\tau$ & Coefficient of the linear drift. \\
$\sigma_\tau,\ \sigma_t$ & Diffusion coefficient (noise scale) of the forward / reverse SDE. \\
$W_\tau,\ W_t$ & Standard Wiener process; $\mathrm{d}W$ its increment. \\
$v_t(x),\ v(x_t,t)$ & Learned velocity field of the (rectified) flow; $v_t(x)=\mathbb{E}[x_{\mathrm{src}}-x_{\mathrm{data}}\mid x,t]$. \\
$\nabla\log p_t(x)$ & Score function of the marginal $p_t$. \\
$\nabla\log q_t(x)$ & Score of the policy / flow marginals; for Gaussian source available in closed form from $v_t$ (Prop.~C.3): $\nabla\log p_t(x)=-\big((1-t)v_t(x)+x\big)/t$. \\
$\tilde{\sigma}(t),\ \tilde{\sigma}_t$ & Auxiliary (free) diffusion coefficient used to recast the deterministic flow as a marginal-preserving SDE. \\
$\gamma_t\ge 0$ & Scalar scaling the contribution of the original diffusion in the SDE family (Thm.~C.1). \\
$G_t(x)$ & (Matrix) diffusion coefficient of a generic It\^o SDE. \\
$\tilde{G}_t(x)$ & Auxiliary diffusion matrix injecting additional stochasticity (Thm.~C.1). \\
$\bar{f}$ & Modified drift of an equivalent (same-marginal) SDE. \\
$\bar{G}$ & Modified diffusion of an equivalent SDE. \\

\addlinespace[0.4em]
\multicolumn{2}{@{}l@{}}{\textbf{Bayesian posterior, cost, and constrained optimization}}\\
\midrule
$O\in\{0,1\}$ & Binary optimality indicator; $O=1$ flags trajectories meeting the optimality criterion. \\
$p(O=1\mid x)$ & Likelihood of optimality given a trajectory, $\exp(\beta\mathcal{J}(x))$. \\
$\beta<0$ & Inverse temperature controlling the strength of the cost-induced tilt. \\
$\beta_t$ & Inverse-temperature schedule (time-dependent $\beta$); $\partial\beta_t/\partial t$ enters the weight update. \\
$\mathcal{J}(x)$ & Penalty-augmented cost encoding the inference-time constrained problem. \\
$\mathcal{J}_c(x)$ & Penalized objective with explicit penalty weights $c$ (proof notation). \\
$\mathcal{L}(x)$ & Task objective to be minimized. \\
$\rho(x)$ & Additional penalty term 
\\
$h_1(x)$ & Equality-constraint function, $h_1(x)=0$. \\
$h_2(x)$ & Inequality-constraint function, $h_2(x)\le 0$. \\
$c_1,c_2>0$ & Penalty weights for the equality and inequality terms; $c=(c_1,c_2)$. \\
$L^{\star}$ & Optimal objective value over the feasible set, $\min_{x\in\mathcal{F}}\mathcal{L}(x)$. \\
$\mathcal{J}^{\star},\ J^{\star}$ & Minimum of the penalized objective over $\mathcal{X}$. \\
$\nabla\mathcal{J}(x)$ & Gradient of the cost (the cost-aware drift correction). \\
$\Delta\mathcal{J}(x)$ & Laplacian of the cost (appears in the general weight update). \\
$\mathrm{sm}\{0,\cdot\}$ & A $C^2$ smooth-max relaxation of $\max\{0,\cdot\}$ (softplus); the hard clamp is used empirically. \\
$\beta_{\mathrm{sp}}$ & Sharpness parameter of the softplus relaxation, $\tfrac{1}{\beta_{\mathrm{sp}}}\mathrm{softplus}(-\beta_{\mathrm{sp}}\,c_{\mathcal{O}})$. \\
$\lambda_t$ & Guidance strength of the linear-combination baseline ($\nabla\log q_t+\lambda_t\nabla\mathcal{J}$, or $v_t+\lambda_t\nabla\mathcal{J}$); distinct from the jump rate $\lambda_t(x)$. \\

\addlinespace[0.4em]
\multicolumn{2}{@{}l@{}}{\textbf{Feynman--Kac weights and SNIS}}\\
\midrule
$w_t$ & Log-importance-weight carried by a particle / trajectory; $\mathrm{d}w_t$ its increment. \\
$g_t(x)$ & Reweighting function: the discrepancy between the simulated dynamics and the target density evolution. \\
$\bar{g}_t(x)$ & Centered reweighting function, $\bar{g}_t=g_t-\int g_t\,p_t^{\mathrm{FK}}\,\mathrm{d}x$, ensuring normalization is preserved. \\
$a\in\mathbb{R}$ & Free design parameter redistributing the tilt between drift and weights; canonical choice $a=\beta_t\sigma_t^2/2$. (Distinct from the action waypoint $a$.) \\
$K$ & Number of Feynman--Kac particles (parallel trajectories). \\
$\{x_t^{(k)},w_t^{(k)}\}_{k=1}^{K}$ & Batch of simulated particles with their log-weights. \\
$w_T^{(k)}$ & Terminal log-weight of particle $k$ (used in SNIS). \\
$\phi(x)$ & Arbitrary test function whose expectation under the target is estimated (often $\phi(x)=x$). \\
$\mathbb{E}_{p_t^{\mathrm{FK}}}[\phi(x_t)]$ & Target expectation estimated by self-normalized importance sampling (SNIS). \\

\addlinespace[0.4em]
\multicolumn{2}{@{}l@{}}{\textbf{Resampling and jump-process interpretation}}\\
\midrule
$\lambda_t(x)$ & Jump-process rate function (frequency of jump events), $\lambda_t(x)=(g_t(x)-\mathbb{E}_{p_t}[g_t])^{-}$. \\
$J_t(y\mid x)$ & Markov transition kernel for the next state when a jump occurs (a sampling kernel, \emph{not} a cost). \\
$\mathbb{E}_{p_t}[g_t]$ & Population mean of the reweighting function under $p_t$. \\
$w_t^{(k)}=g_t(x_t^{(k)})\,\mathrm{d}t$ & Per-step weight increment used for systematic resampling. \\
$[t_{\min},t_{\max}]$ & ``Active interval'' over which resampling is performed; weights are zeroed outside it. \\

\addlinespace[0.4em]
\multicolumn{2}{@{}l@{}}{\textbf{Concentration on the constrained optimum (proof)}}\\
\midrule
$\delta>0$ & Feasibility / optimality tolerance. \\
$\eta\in(0,1)$ & Failure-probability tolerance. \\
$\varepsilon>0$ & Strictly positive energy gap separating the ``bad'' set from the minimum (proof). Distinct from the SDF safety margin $\varepsilon$. \\
$C\in(0,\infty)$ & Constant in the exponential concentration bound, $C=\dfrac{M\,\mathrm{Leb}(\mathcal{X})}{m\,\mathrm{Leb}(U')}$. \\
$G_\delta$ & ``Good'' set $\{|h_1|\le\delta,\ h_2\le\delta,\ \mathcal{L}\le L^{\star}+\delta\}$. \\
$A_\delta$ & ``Bad'' set, the complement $\mathcal{X}\setminus G_\delta$. \\
$\tilde{A}_\delta$ & Closed superset of $A_\delta$, used to invoke compactness. \\
$B$ & High-energy set $\{x:\mathcal{J}(x)\ge\mathcal{J}^{\star}+\varepsilon\}$. \\
$S_c,\ S$ & Minimizer set $\arg\min_{\mathcal{X}}\mathcal{J}_c$. \\
$U_c,\ U,\ U'$ & Open sets / neighborhoods of a minimizer with positive measure on which the base density is bounded below. \\
$m_c,\ m>0$ & Lower bound on the base density over $U$. \\
$M<\infty$ & Upper bound on the (continuous) base density over compact $\mathcal{X}$. \\
$\Delta(x)$ & Energy-gap function $\mathcal{J}(x)-\mathcal{J}^{\star}\ge 0$. \\
$q_0$ & Continuous base density (here $q_0=p_{\mathrm{data}}$) used in the proof. \\
$x^{\star},\ x_c$ & A (global) minimizer of the penalized objective. \\
$\mathbb{P}_{p^{\beta,c}}(\cdot)$ & Probability under the Gibbs-tilted measure. \\

\addlinespace[0.4em]
\multicolumn{2}{@{}l@{}}{\textbf{Robot body, obstacles, and collision cost}}\\
\midrule
$o$ & Current sensor observation (typically RGB cameras); always conditioned on, suppressed in notation. \\
$a$ & A predicted action waypoint (end-effector pose / delta or joint command). Distinct from the FK design parameter $a$. \\
$\mathcal{B}(a)$ & Set of $N_{\mathrm{body}}$ sampled robot-body points in the world frame for action $a$. \\
$p_i(a)$ & The $i$-th sampled body point, $p\in\mathbb{R}^3$. \\
$N_{\mathrm{body}}$ & Number of sampled robot-body points. \\
$H$ & Trajectory horizon (number of predicted action waypoints); $x=(a_1,\dots,a_H)$. \\
$a_h$ & Action waypoint at horizon step $h$. \\
$\mathcal{O}\subset\mathbb{R}^3$ & Obstacle region (calligraphic; distinct from the optimality indicator $O$). \\
$\mathcal{O}_k$ & The $k$-th obstacle specified at inference time. \\
$\partial\mathcal{O}$ & Obstacle boundary, the zero-level set $\{p:s_{\mathcal{O}}(p)=0\}$. \\
$s_{\mathcal{O}}(p),\ \mathrm{sdf}_{\mathcal{O}}(p)$ & Signed distance function of $\mathcal{O}$ ($>0$ outside, $<0$ inside). \\
$\hat{s}(p)$ & Online SDF of the scene built from depth (nvblox) for the $\pi_{0.5}$ experiments. \\
$c_{\mathcal{O}}(p)$ & Clearance of a body point, $s_{\mathcal{O}}(p)-\varepsilon$. \\
$\varepsilon>0$ & SDF safety margin inflating the obstacle (a few cm). Distinct from the proof's energy gap $\varepsilon$. \\
$\mathrm{viol}_{\mathcal{O}}(p)$ & Per-point inward penetration, $\max\{0,\varepsilon-s_{\mathcal{O}}(p)\}$. \\
$\mathcal{J}_{\mathrm{coll}}(x)$ & Trajectory-level collision penalty summing $\mathrm{viol}$ over body points, waypoints, and obstacles. \\
$p_{xy}$ & Planar ($xy$) projection of a body point (cylinder SDF). \\
$(c_x,c_y)$ & Center of a cylindrical obstacle. \\
$r,\ R$ & Radius of the cylindrical obstacle. \\

\addlinespace[0.4em]
\multicolumn{2}{@{}l@{}}{\textbf{Evaluation metric (ablations)}}\\
\midrule
$N$ & Number of evaluation episodes. \\
$e_t^{(i)}\in\mathbb{R}^3$ & Executed end-effector position at simulator step $t$ of episode $i$. \\
$T_i$ & Number of simulator steps in episode $i$. \\
$p_t^{(i)}$ & Instantaneous penetration depth $\max\{0,-s_{\mathcal{O}}(e_t^{(i)})\}$ (zero inflation, $\varepsilon=0$). \\
$P^{(i)}$ & Per-episode total penetration, $\sum_t p_t^{(i)}$. \\
$\bar{P}$ & Mean total penetration over episodes, $\tfrac{1}{N}\sum_i P^{(i)}$. \\
$\Delta t$ & Simulator step duration (e.g., $0.05$\,s). \\

\end{longtable}

\paragraph{Notes on overloaded / context-dependent notation}
A handful of symbols carry different meanings depending on context; we collect
them here to avoid confusion.
\begin{itemize}
  \item \emph{Time conventions for $t$.} For diffusion (\cref{sec:back_diff_policy}) we set $t=1-\tau$,
        so generation starts at $x_{t=0}\sim\mathcal{N}(0,I)$ (noise) and is integrated
        forward to $x_{t=1}$ (data). For the flow setup (Sec.~2.2) the endpoints are
        reversed: $x_{t=0}=x_{\mathrm{data}}$ and $x_{t=1}=x_{\mathrm{src}}\sim\mathcal{N}(0,I)$,
        and inference integrates from $t=1$ to $t=0$.
  \item \emph{$p_t(\cdot)$} denotes a generic marginal density; in the appendix it is
        overloaded to mean any marginal, and may denote the posterior marginal where
        stated explicitly.
  \item \emph{$a$} is the free design parameter of the Feynman--Kac drift correction
        (Thm.~E.1) and, separately, a predicted action waypoint in the experiments.
  \item \emph{$\varepsilon$} is the SDF safety margin in the collision cost and,
        separately, the strictly positive energy gap in the concentration proof.
  \item \emph{$\lambda_t$} is the scalar guidance strength of the linear-combination
        baseline, whereas $\lambda_t(x)$ is the rate function of the jump process.
  \item \emph{$O$ vs.\ $\mathcal{O}$:} $O$ is the binary optimality indicator;
        $\mathcal{O}$ (calligraphic) is the obstacle region.
  \item \emph{$\mathcal{J}$ vs.\ $J_t$:} $\mathcal{J}$ (and $\mathcal{J}_c$,
        $\mathcal{J}_{\mathrm{coll}}$) is the penalty-augmented cost; $J_t(y\mid x)$ is the
        Markov jump-transition kernel.
  \item \emph{$r$ vs.\ $R$} both denote the radius of a cylindrical obstacle in
        different subsections; \emph{$c$} appears both as the penalty vector
        $(c_1,c_2)$ and as the clearance $c_{\mathcal{O}}$.
\end{itemize}